\newcommand{\new}[1]{{\color{black} #1}}
\newcommand{\ubar}{\bar{\bm{u}}}
\newcommand{\ubari}{\bar{\bm{u}}^{(i)}}
\newcommand{\ubarj}{\bar{\bm{u}}^{(j)}}
\newcommand{\utrue}{\bm{u}}
\newcommand{\utruei}{\bm{u}^{(i)}}
\newcommand{\uhat}{\hat{\bm{u}}}
\newcommand{\uhati}{\hat{\bm{u}}^{(i)}}
\newcommand{\pari}{\bm{\mu}^{(i)}}
\newcommand{\param}{\bm{\mu}}
\newcommand{\nbar}{N}
\newcommand{\nhat}{M}
\newcommand{\ndim}{D}
\newcommand{\phii}{\bm{\phi}^{(i)}}
\newcommand{\bmphi}{\bm{\phi}}
\newcommand{\phihati}{\hat{\bm{\phi}}^{(i)}}
\newcommand{\phihat}{\hat{\bm{\phi}}}
\newcommand{\phimat}{\bm{\Phi}}
\newcommand{\phihatm}{\hat{\bm{\Phi}}}
\newcommand{\adj}{\bm{W}}
\newcommand{\degmat}{\bm{D}}
\newcommand{\gl}{\bm{L}}
\newcommand{\glpq}[2]{\bm{L}^{(#1,#2)}}
\newcommand{\ip}[1]{\langle #1 \rangle_{p-q}}
\newcommand{\ipF}[1]{\langle #1 \rangle_{p-q,F}}
\newcommand{\idnbar}{\bm{I}_{\nbar}}
\newcommand{\idnhat}{\bm{I}_{\nhat}}
\newcommand{\pnhat}{\bm{P}_{\nhat}}
\newcommand{\evalm}{\bm{\Lambda}}
\newcommand{\efunm}{\bm{\Psi}}
\newcommand{\phimapm}{\bm{\Phi}^*}
\newcommand{\phicov}{\bm{C}}
\newcommand{\onenbar}{\mathbf{1}_{\nbar}}
\newcommand{\nparam}{P}
\newcommand{\coeffm}{\boldsymbol{A}}
\newcommand{\coeff}{\boldsymbol{a}}
\DeclareMathOperator*{\argmin}{argmin}
\newtheorem{thm}{Theorem}[section]
\newtheorem{lem}[thm]{Lemma}
\newtheorem*{rmk}{Remark}
\newcommand{\caseonedim}{$5$}
\newcommand{\caseonelfn}{$6{,}000$}
\newcommand{\caseonehfn}{$150$}
\newcommand{\caseonefractionn}{$2.5\%$}
\newcommand{\caseonelferr}{$6.82$}
\newcommand{\caseonelfstd}{$5.11$}
\newcommand{\caseonemferrbf}{$\textbf{1.67}$}
\newcommand{\caseonemfstd}{$2.09$}
\newcommand{\caseoneerrred}{$75.5\%$}
\newcommand{\casetwodim}{$10{,}201$}
\newcommand{\casetwolfn}{$3{,}000$}
\newcommand{\casetwohfn}{$100$}
\newcommand{\casetwofractionn}{$3.3\%$}
\newcommand{\casetwolferr}{$29.49$}
\newcommand{\casetwolfstd}{$12.19$}
\newcommand{\casetwomferr}{$6.41$}
\newcommand{\casetwomfstd}{$4.77$}
\newcommand{\casetwomferrbf}{\textbf{$\casetwomferr{}$}}
\newcommand{\casetwoerrred}{$78.3\%$}
\newcommand{\casethreedim}{$10{,}201$}
\newcommand{\casethreelfn}{$6{,}000$}
\newcommand{\casethreehfn}{$120$}
\newcommand{\casethreefractionn}{$2\%$}
\newcommand{\casethreelferr}{$25.2$}  
\newcommand{\casethreelfstd}{$5.4$}  
\newcommand{\casethreemferr}{$3.96$} 
\newcommand{\casethreemfstd}{$2.18$} 
\newcommand{\casethreemferrbf}{\textbf{$\casethreemferr{}$}}
\newcommand{\casethreeerrred}{$84.3\%$}
\newcommand{\casefourdim}{$512$}
\newcommand{\casefourlfn}{$10{,}000$}
\newcommand{\casefourhfn}{$50$}
\newcommand{\casefourfractionn}{$0.5\%$}
\newcommand{\casefourlferr}{$30.7$} 
\newcommand{\casefourlfstd}{$2.68$} 
\newcommand{\casefourmferr}{$4.43$}
\newcommand{\casefourmfstd}{$3.34$}
\newcommand{\casefourmferrbf}{\textbf{$\casefourmferr{}$}}
\newcommand{\casefourerrred}{$85.6\%$}
\newcommand{\casefivedim}{$221$}
\newcommand{\casefivelfn}{$10{,}000$}
\newcommand{\casefivehfn}{$100$}
\newcommand{\casefivefractionn}{$1\%$}
\newcommand{\casefivelferr}{$18.63$}
\newcommand{\casefivelfstd}{$5.96$}
\newcommand{\casefivemferr}{$3.87$}
\newcommand{\casefivemfstd}{$2.32$}
\newcommand{\casefivemferrbf}{\textbf{$\casefivemferr{}$}}
\newcommand{\casefiveerrred}{$79.2\%$}
\journal{Computer Methods in Applied Mechanics and Engineering}
\begin{document}

\begin{frontmatter}



\author[usc]{Orazio Pinti}
\ead{pinti@usc.edu}

\author[bham,caltech]{Jeremy M. Budd}
\ead{j.m.budd@bham.ac.uk}

\author[caltech]{Franca Hoffmann}
\ead{franca.hoffmann@caltech.edu}

\author[usc]{Assad A. Oberai}
\ead{aoberai@usc.edu}

\affiliation[usc]{organization={Department of Aerospace and Mechanical Engineering, University of Southern California},
            addressline={3650 McClintock Ave},
            city={Los Angeles},
            postcode={90089},
            state={CA},
            country={USA}
            }
\affiliation[bham]{organization={School of Mathematics, University of Birmingham},
            addressline={Watson Building, Edgbaston},
            city={Birmingham},
            postcode={B15 2TT},
            country={UK}
            }

\affiliation[caltech]{organization={Computing and Mathematical Sciences, California Institute of Technology},
            addressline={1200 E California Blvd},
            city={Pasadena},
            postcode={91125},
            state={CA},
            country={USA}}
            
\title{Graph Laplacian-based Bayesian Multi-fidelity Modeling}


\begin{abstract}
We present a novel probabilistic approach for generating multi-fidelity data while accounting for errors inherent in both low- and high-fidelity data. In this approach a graph Laplacian constructed from the low-fidelity data is used to define a multivariate Gaussian prior density for the coordinates of the true data points. In addition,  few high-fidelity data points are used to construct a conjugate likelihood term. Thereafter, Bayes rule is applied to derive an explicit expression for the posterior density which is also multivariate Gaussian. The maximum \textit{a posteriori} (MAP) estimate of this density is selected to be the optimal multi-fidelity estimate. It is shown that the MAP estimate and the covariance of the posterior density can be determined through the solution of linear systems of equations. Thereafter, two methods, one based on spectral truncation and another based on a low-rank approximation, are developed to solve these equations efficiently. The multi-fidelity approach is tested on a variety of problems in solid and fluid mechanics with data that represents vectors of quantities of interest and  discretized spatial fields in one and two dimensions. The results demonstrate that by utilizing a small fraction of high-fidelity data, the multi-fidelity approach can significantly improve the accuracy of a large collection of low-fidelity data points. 
\end{abstract}

\begin{graphicalabstract}
\centering
\vspace{1cm}
\includegraphics[width=1.\textwidth]{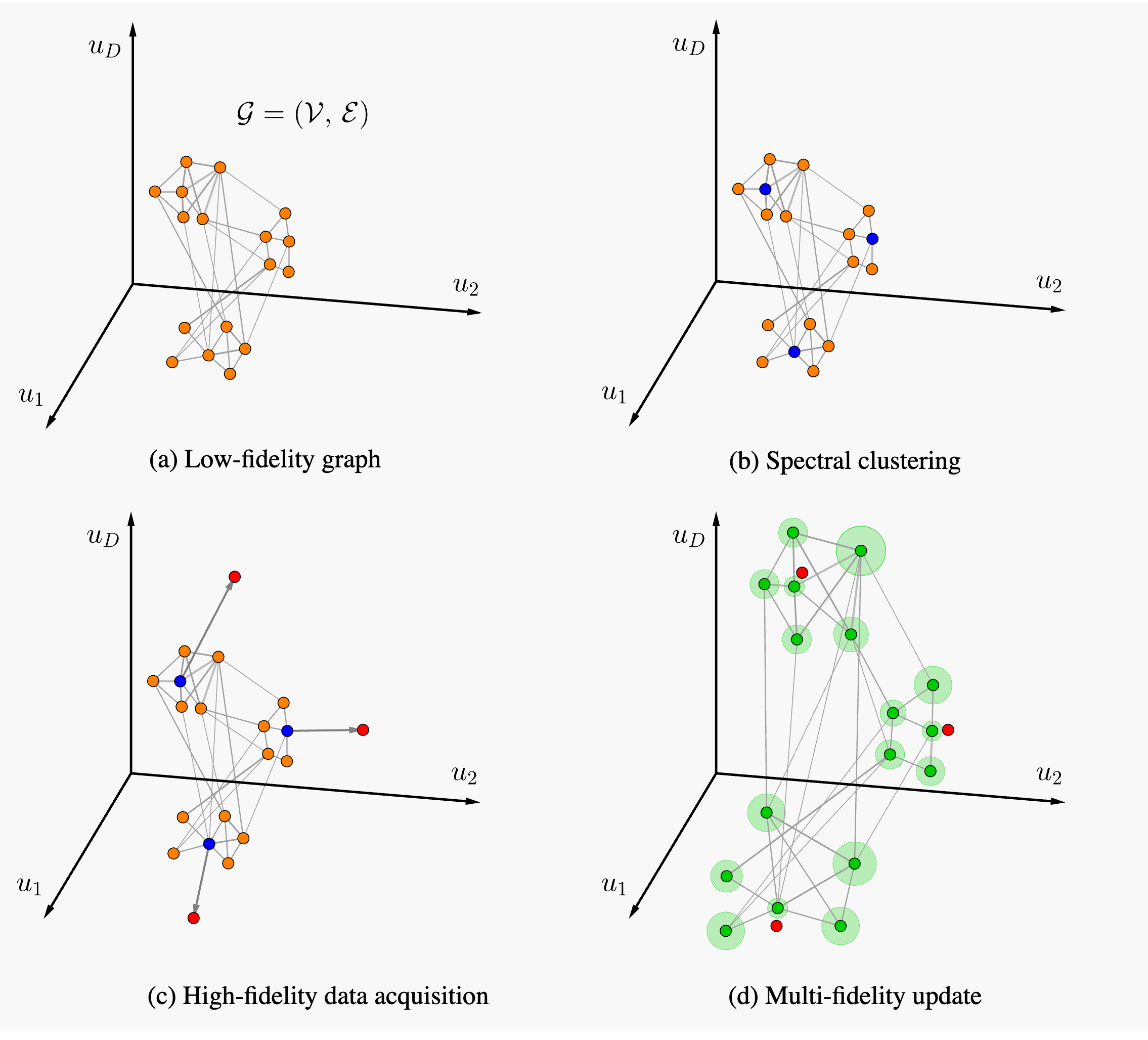}
\end{graphicalabstract}

\begin{keyword}
Multi-fidelity modeling \sep Bayesian inference \sep Uncertainty quantification \sep Semi-supervised learning \sep Graph-based learning.

\end{keyword}

\end{frontmatter}


\section{Introduction}
In numerous computational engineering tasks such as optimization, uncertainty quantification, sensitivity analysis, and optimal design, it is often necessary to conduct extensive simulations of a physical system. These simulations rely on models that approximate the true input-output relations of the system. Inputs typically include parameters or prescribed data like initial conditions, boundary conditions and forcing functions, while outputs are the resulting quantities or fields of interest. 

Evaluating a model involves its numerical implementation and execution, a process that varies in fidelity and cost. Fidelity refers to the accuracy of the model prediction relative to the true system behavior, whereas cost quantifies the required computational resources. Generally, higher fidelity is associated with greater cost. In practical scenarios, it is common to have access to multiple models of a system, or to be able to tune some hyper-parameter, like the mesh size, or the time-integration step, of a given model to increase its accuracy (and cost).  For instance, in computational fluid dynamics, choices range from high-cost, accurate direct numerical simulations (DNS) to less expensive Reynolds-averaged Navier-Stokes (RANS) models. Similarly, when using the finite element method, the mesh size can be used to control the accuracy and computational expense of the resulting model.

A model is said to be high-fidelity if it can capture the true behavior of the system within a level of accuracy that is equal to or greater than the one required for the given task. If a model is not high-fidelity, it is said to be low-fidelity. Lower-fidelity models are designed to trade some of the predictive accuracy in favor of a more competitive evaluation cost. This can be attained in several ways.
The most common methods include: using simplified physics or making stronger modeling assumptions, linearizing the dynamics of the system, employing projection-based or data-fitting surrogate models, or using coarser numerical discretizations. It is worth noting that a high- vs low-fidelity model pair can also be represented by experiments and numerical simulations, respectively. 

Completing a task that requires a large number of simulations solely relying on a high-fidelity model can often be prohibitive or impractical, while employing only lower-fidelity models might not lead to results that are accurate enough. 
This is where multi-fidelity methods are useful. The goal of a multi-fidelity method is to retain as much of the accuracy of high-fidelity model as possible, while incurring a fraction of the cost by leveraging lower-fidelity models.
In a typical multi-fidelity framework, low-fidelity data is generated to explore the input space, and obtain an approximation of the response of the system. Then, a limited number of high-fidelity data points are computed or measured, and techniques that learn the response from the low-fidelity data, and improve it by using the high-fidelity data are applied. 

Multi-fidelity methods have been widely used in optimization  \cite{Goel2007,gramacy2009adaptive}, uncertainty quantification \cite{Allaire_2014}, uncertainty propagation \cite{Ng2014,Peherstorfer2016}, and statistical inference \cite{Kaipio2007} (see Fernández-Godino et al. \cite{fernandez2016review} and Peherstofer et al. \cite{Peherstorfer2016} for two comprehensive reviews). 

\new{Broadly speaking, multi-fidelity methods can be classified into two main types: model-agnostic methods which operate at the data level, and model-specific methods which work with specific models. Model-agnostic methods combine data from both low- and high-fidelity models or experiments to produce results that are almost as accurate as the high-fidelity data and as abundant as the low-fidelity data. In contrast, model-specific methods use limited data from high-fidelity models to improve the form of a low-fidelity model. For these methods the outcome of the multi-fidelity update is not data but rather it is improvements to the low-fidelity model which can then be used to generate more accurate multi-fidelity data. The advantage of model-agnostic methods is that they are easy to apply across multiple fields which is not the case for model-specific methods. However, model-specific methods have the advantage that they are easier to interpret and work better outside the parameters on which they were trained. We note that both these types of methods can be deterministic or probabilistic, where the latter provides a multi-fidelity solution as well as an estimate of the uncertainty in this solution. In what follows we provide a brief review of model-agnostic and model-specific methods. We focus more heavily on the former, since the method developed in this manuscript also belongs to this category.}

\new{Model-specific multi-fidelity methods begin with an inexpensive low-fidelity model of the physics of the underlying process and incorporate additional terms into it. These terms are designed and added such that they are consistent with the underlying conservation laws. For example, when modeling turbulent flows, the low-fidelity model may be a RANS model like the $k-\epsilon$ model, and the additional terms that are inserted modify the definition of the Reynolds stress without modifying the conservation of mass and balance of linear momentum. The form of these terms is further guided by principles like Galilean invariance or material form indifference and their parameters are determined using data from high-fidelity simulations. In the context of turbulent flows, this data might be obtained from large eddy simulations. Often, the corrective terms added to the low-fidelity model are stochastic, resulting in a probabilistic multi-fidelity model \cite{soize2017nonparametric}. Such models have a rich history in computational physics, and have been developed for modeling turbulent flows \cite{xiao2017random,geneva2019quantifying}, chemical reactions \cite{najm2009uncertainty}, nonlinear solid mechanics \cite{teichert2019machine,soize2019probabilistic}, and various other fields of engineering and science. Within the context of uncertainty quantification, these methods are also often referred to as methods for determining model form uncertainty.}

\subsection{Model-agnostic multi-fidelity methods}

In \new{probabilistic model-agnostic multi-fidelity models} the parameter inputs are typically modeled as random variables drawn from a prescribed distribution, and the main objective is to compute the statistics of the output. When multiple models for the output are available, one can leverage the correlation between different approximations. The correlation coefficients act as auxiliary random variables, whose statistics are easier to compute. Inspired by multilevel Monte Carlo methods \cite{heinrich2001multilevel,giles_2015}, various techniques have been developed to construct an unbiased estimator for the mean and variance of the highest-fidelity model output, leveraging lower-fidelity approximations \cite{Ng2014,Peherstorfer2016}. 
For instance, Geraci et al. \cite{geraci2017multifidelity} proposed a multi-fidelity multi-level method, considering multiple models with different levels of accuracy at the same time. 
Similarly, Peherstorfer et al. \cite{Peherstorfer2016} suggested an optimal strategy for distributing computational resources across different models to minimize the variance of a multi-fidelity estimator.

In co-kriging methods \cite{kennedy2000predicting, forrester2007multi, perdikaris2015multi, park2017remarks} the multi-fidelity response is expressed as a weighted sum of two Gaussian processes, one modeling the low-fidelity data, and the other representing the discrepancy between the low- and high-fidelity data. The parameters of the mean and correlation functions of these processes are determined by maximizing the log-likelihood of the available data. 
Co-kriging has also been extensively investigated in the context of multi-fidelity optimization \cite{allaire2014mathematical, goel2007ensemble, gramacy2016modeling, gramacy2008bayesian}. Other methods make use of radial basis functions (RBFs) to model the low-fidelity response. Specifically, the low-fidelity surrogate is written as an expansion in terms of a set of RBFs, and the coefficients are determined by interpolating the available low-fidelity data. The multi-fidelity approximation is then obtained in different ways. These include determining a scaling factor and a discrepancy function, which can be modeled using a kriging surrogate \cite{park2017remarks}, or another expansion in terms of RBFs \cite{durantin2017multifidelity, song2019radial}. In some cases the multi-fidelity surrogate is constructed by mapping the low-fidelity response directly to the high-fidelity response \cite{zhou2017variable}.

More recently, deep neural networks have been used to fit low-fidelity data and learn the complex map between the input and output vectors in the low-fidelity model. Then, the relatively small amount of high-fidelity data is used in combination with techniques such as transfer learning \cite{chakraborty2021transfer, de2020transfer}, embedding the knowledge of a physical law through physics-informed loss functions \cite{PENWARDEN2022110844, MENG2020109020, meng2021multi, RAISSI2019686}, or, in the case of multiple levels of fidelity, concatenating multiple neural networks together to yield a multi-fidelity model \cite{li2020multi}. An approach that involves training a physics-constrained generative model, conditioned on low-fidelity snapshots to produce solutions that are higher-fidelity and higher-resolution, has also been proposed \cite{Geneva2020}. Similarly, a bi-fidelity formulation of variational auto-encoders, trained on low- and high-fidelity data has been developed to generate bi-fidelity approximations of quantities of interests and estimate their uncertainty \cite{CHENG2024116793}.

Another class of methods, suitable when the response of the system consists of a high-dimensional vector, first performs order-reduction using low-fidelity data, and then injects accuracy using high-fidelity data in a reduced-dimensional latent space. This is accomplished by computing the low- and high-fidelity Proper Orthogonal Decomposition (POD) manifolds, aligning them with each other, and replacing the low-fidelity POD modes with their high-fidelity counterparts \cite{perron2020development}. Similar multi-fidelity surrogate models have been developed by first solving a subset selection problem to construct a surrogate model of the low-fidelity response in terms of a few important snapshots, then generating the high-fidelity counterparts of the important snapshots, and finally using these in the multi-fidelity model \cite{narayan2014stochastic, keshavarzzadeh2019parametric, pinti2022multi}.

In this work we present a novel multi-fidelity approach based on the spectral properties of the graph Laplacian. 
The starting point of this method is the generation of a large number of low-fidelity data. Thereafter, each data point is treated as a node of a weighted graph, where the weights on the edges are determined by the distance between the nodes. A normalized graph Laplacian and its eigendecomposition are then evaluated, and the nodes are embedded in the eigenfunction space. The data are clustered and the points closest to the clusters centroids are identified. Thereafter, the high-fidelity counterparts of these points are evaluated.
The problem of computing a multi-fidelity approximation using the low- and high-fidelity data is posed as a Bayesian inference problem. Within this framework, the prior probability distribution of the multi-fidelity data is defined to be a normal distribution centered at the low-fidelity points, with a covariance from the inverse of the graph Laplacian.
The likelihood term is derived by assuming a simple additive Gaussian model for the error in the high-fidelity data.
Given these choices, it is shown that the multi-fidelity data is also drawn from a multi-variate Gaussian distribution whose mean can be determined by solving a linear system of equations. Efficient algorithms for solving this linear system as well as determining the covariance matrix for the multi-fidelity data are described. The new multi-fidelity method is then applied to several problems drawn from linear elasticity, Darcy flow, and Navier-Stokes equations. It is used to improve the accuracy of vector quantities of interest, and spatial fields defined in one and two spatial dimensions. In each case it is observed that a small number of high-fidelity data leads to a multi-fidelity dataset that is significantly more accurate than its low-fidelity data counterpart.

The approach described in this study has strong connections with semi-supervised classification algorithms on graphs \cite{bertozzi2018uncertainty, slepcev2019analysis, belkin2004regularization, bertozzi2021posterior, dunlop2020large}, \new{especially with active learning\footnote{\new{Active learning extends semi-supervised learning by selecting data points at which to query a classification oracle, which is analogous to querying a high-fidelity model in multi-fidelity modeling.}} on graphs \cite{pmlr-v22-ji12,pmlr-v40-Dasarathy15,MillerCalder23,bhusal2024maladymulticlassactivelearning},}  and relies on theoretical results on consistency of graph-based methods in the limit of infinite data \cite{hoffmann2020consistency, trillos2018variational}. 
The results presented in this study may be considered as a probabilistic extension of the ideas developed for the SpecMF method \cite{pinti2023graph}. They also improve on those ideas in several ways. First, in addition to providing a point estimate for the multi-fidelity data, they also provide a measure of uncertainty associated with it. This measure may be used for downstream tasks like adaptive or active learning, or as demonstrated in this manuscript, it may be used to determine an important hyperparamater of the method, thereby obviating the need of validation data.  Second, through the use of conjugate priors and likelihood terms we ensure that the point estimate is  obtained though the solution of a linear system of equations, as opposed to a nonlinear system that is solved in the SpecMF method. Finally, the proposed approach is applied to problems where each data point is a discrete representation of spatial fields defined on one- and two-dimensional grids, with dimensions as high as $\mathcal{O}(10^4)$.

The layout of the remainder of this manuscript is as follows. In Section \ref{sec:problem}, we define the multi-fidelity problem and describe the proposed methodology. In Section \ref{sec:evaluating}, we detail efficient numerical techniques to approximate the graph Laplacian and evaluate the posterior distribution of the multi-fidelity estimates. In Section \ref{sec:analysis}, we provide a theoretical result on convergent regularization. 
In Section \ref{sec:numerical}, we present a comprehensive set of numerical experiments that quantify the performance of the method, and, finally, in Section \ref{sec:conclcusions} we conclude with final remarks. \ref{app:GLnormalization} discusses generalizations of the employed graph Laplacian framework for different choices of normalizations. 
\new{
In \ref{app:omega_cond_grad} we derive an explicit expression for the gradient of a loss function used to find a hyper-parameter of the method.
}
\section{Problem formulation}
\label{sec:problem}
We are interested in solving the problem of determining a quantity of interest $\utrue \in \mathbb{R}^{\ndim}$ as a function of input parameters $\param \in \mathbb{R}^{\nparam}$ sampled from a distribution $P_{\param}(\param)$. We assume that we do not have direct access to the underlying true values $\utrue$, but for each input parameter value $\pari$ we have access to two models to estimate it. 
One of these is the high-fidelity model whose estimate, denoted by $\uhati$, is accurate but computationally expensive. The other is the low-fidelity model, whose estimate, denoted by $\ubari$, is  less accurate but computationally inexpensive.

Our goal is to combine a large set of low-fidelity data points $\bar{\mathcal{D}} = \{ \ubari \}_{i=1}^{\nbar}$, with a smaller set of select high-fidelity points, $\hat{\mathcal{D}} = \{ \uhati \}_{i=1}^{\nhat}$, with $\nhat \ll \nbar$, to generate a multi-fidelity estimate for all points in $\bar{\mathcal{D}}$.
Following the SpecMF method \cite{pinti2023graph}, our approach exploits the spectral properties of the graph Laplacian constructed from the low-fidelity data points to  identify the key input parameter values at which to acquire the high-fidelity data, and to combine the low- and high-fidelity data to construct a probabilistic multi-fidelity estimate. This approach leverages the graph Laplacian computed using the low-fidelity data to define a prior distribution for the multi-fidelity estimates. Thereafter, it utilizes the few select high-fidelity data points to construct a likelihood term. The prior and the likelihood terms are chosen to be Gaussian conjugate pairs, which yields a Gaussian distribution for the posterior density as well. In what follows, the aforementioned steps are discussed in detail.

\subsection{Low-fidelity data and graph Laplacian}
As a starting point, $\nbar \gg 1$ instances of parameters $\pari$ are sampled from the density $P_{\param}(\param)$ and for each parameter value $\pari$ the low-fidelity model is deployed to generate the low-fidelity data $\ubari \in \mathbb{R}^{\ndim}$. These points are used to assemble the low-fidelity dataset $\bar{\mathcal{D}}$.

Depending on the application, different techniques may be used to normalize the low-fidelity data if needed. A first approach is to normalize every component of $\ubari$ to have zero-mean and unit standard deviation. That is,

\begin{equation}
    \bar{u}^{(i)}_k \leftarrow \frac{\bar{u}^{(i)}_k - \mathbb{E}(\bar{u}_k)}{\sqrt{\mathbb{VAR}(\bar{u}_k)}},\qquad i \in \{ 1, \dots, \nbar\}, \quad k\in \{1,\,\dots,\,\ndim\},
    \label{eq:norm-qoi}
\end{equation}
where $\mathbb{E}(\cdot)$ and $\mathbb{VAR}(\cdot)$ denote the mean and variance, respectively, computed over the low-fidelity dataset. This is more suitable for cases where $\ubari$ is a set of physical quantities of interest with different scales.

If $\ubari$ represents a discretized field, a more appropriate alternative is to normalize each instance as
\begin{equation}
    \ubari \leftarrow \frac{\ubari}{\| \ubari \|},\qquad  i\in \{1,\,\dots,\,\nbar\},
    \label{eq:norm-fields}
\end{equation}
where $\|\cdot\|$ is a suitable norm. We note that once a normalization procedure for the $i^\text{th}$ data point is determined, it is applied to both the low- and high-fidelity data (Section \ref{sec:selection-hf}).

Thereafter, a complete weighted graph $\mathcal{G}$ with nodes $\mathcal{V}$ and edges $\mathcal{E}$ is constructed from the low-fidelity data. Specifically, the graph has $\nbar$ nodes $i \in \mathcal{V} := \{1,\,2,\,\dots,\,\nbar\}$, equipped with attributes given by the corresponding low-fidelity data points, $\ubari$. The weight $W_{ij}$ associated with each edge $E_{ij} \in \mathcal{E}$, which measures the similarity between the nodes $i$ and $j$, is computed via a monotonically decreasing function of the Euclidean distance between $\ubari$ and $\ubarj$. 
In this work, we use a Gaussian kernel, and the resulting weights are given by
\begin{equation}
    W_{ij} := \begin{cases}{\color{black}\exp\left(-\frac{\|\ubari - \ubarj\|_2^2}{\ell_{ij}^2}\right)}, & \text{if } i \neq j,\\
    0, & \text{if } i = j.
    \end{cases}
\end{equation}
This leads to a fully connected graph, but other choices leading to sparser graphs are also common. Choosing the weights $W_{ij}$ plays an important role in the behavior of any graph Laplacian-based algorithm, an aspect we are not focusing on in this work. The scale $\ell_{ij}$ can be seen as an additional hyper-parameter, with a common choice being to set it to be constant for the entire graph, or it may be tuned. \new{We use the self-tuning approach of Zelnik-Manor and Perona \cite{zelnik2004self}: we set $\ell_{ij} := \sqrt{\ell_i\ell_j}$ with the local length-scale $\ell_i$ given by $\ell_i:=\|\ubari - \bar{\bm{u}}^{(i_R)}\|_2$, where $\bar{\bm{u}}^{(i_R)}$ is the $R^\text{th}$ nearest low-fidelity data point to $\ubari$.\footnote{\new{We take $R = 7$, as in Zelnik-Manor and Perona \cite{zelnik2004self}.}  } This approach is more accommodating of data which may exhibit multiple scales. }
We remark that all our analysis also applies for any other choice of weights satisfying $W_{ij}=W_{ji}$, $W_{ij}\ge 0$, and $W_{ii}=0$.

The weights $W_{ij}$ form the components of the adjacency matrix $\adj \in \mathbb{R}^{N \times N}$, which is used to compute the (diagonal) degree matrix $\bm{D}$, 
\begin{equation}
    D_{ii} := \sum_{j = 1}^{N} W_{ij},
\end{equation}
and a family of graph Laplacians,
\begin{eqnarray}
\label{eq:GLdef}
\gl := \bm{D}^{-p} (\bm{D} - \bm{W}) \bm{D}^{-q}.
\end{eqnarray}
Different choices of $p$ and $q$ result in different normalizations of the graph Laplacian \cite{belkin2006convergence, COIFMAN20065, Hoffmann2019SpectralAO}. For simplicity, in the body of this work we will assume that $p=q$, i.e., $\gl$ is symmetrically normalized; however, our framework extends to the $p \neq q$ case (for details see \ref{app:GLnormalization}). The graph Laplacian embeds many important properties of the structure and topology of the graph. In particular, the eigenvectors of $\gl$ corresponding to small eigenvalues provide an embedding of the graph that promotes the clustering of vertices that are strongly connected \cite{pinti2023graph,belkin2001laplacian}.

\subsection{Construction of prior density}
We assume that the low-fidelity data $\bar{\mathcal{D}}$ represents an approximation of the true data, and denote the difference between the (unknown) true data $ \utrue^{(i)}$ and the observed low-fidelity data $\ubari$ by
\begin{equation}
    \phii := \utrue^{(i)} - \ubari, \qquad  i\in \{1,\,\dots,\,\nbar\}. 
\end{equation}
This relation is shown graphically in Figure \ref{fig:hf_vs_true_phi}. In the equation above $\phii$ may be interpreted as the displacement vector from a given low-fidelity data point to its true counterpart. Our goal is to write a prior probability distribution for the displacements $\phii \in \mathbb{R}^{\ndim}$. In order to accomplish this we define the displacement matrix
\begin{equation}
    \phimat  :=\begin{pmatrix}
\bmphi^{(1)}\\
\vdots\\
\bmphi^{(\nbar)}
\end{pmatrix}=\begin{pmatrix}\bmphi_{1} & \cdots & \bmphi_{\ndim}\end{pmatrix}\in\mathbb{R}^{\nbar \times \ndim},
\end{equation}
where a superscript on $\bm{\phi}$ indicates a row vector, while a subscript indicates a column vector. We note that while $\phii$ represents the displacement vector for the $i^\text{th}$ data point, the vector $\bmphi_{i}$ represents the displacement field for the $i^\text{th}$ component of the data.

For any vector $\bm{v} \in \mathbb{R}^\nbar$, the quantity $\frac{1}{2} \bm{v}^T \gl \bm{v}$ is referred to the as the Dirichlet energy associated with the graph Laplacian $\gl$. 
We construct a prior for $\phimat$ by stipulating that displacement fields that yield large values of the Dirichlet energy are less likely. In particular, we set
\begin{equation}
    \begin{split}
            p(\phimat) 
& \propto \exp\left(-\sum_{m=1}^{\ndim} \frac{\omega}{2} 
\bmphi_{m}^{T} \left(\gl + \tau \idnbar \right)^{\beta} \bmphi_{m} \right) \\
& = \exp \left( -\frac{\omega}{2} 
\langle \phimat, \left(\gl + \tau \idnbar \right)^{\beta} \phimat \rangle_F \right),
    \end{split}
\label{prior}
\end{equation}
where $\idnbar \in \mathbb{R}^{N \times N}$ is the identity matrix, $\langle \bm{A},\bm{B} \rangle_F := \operatorname{tr}(\bm{A}^T\bm{B}) = \sum_{ij} A_{ij}B_{ij}$ is the Frobenius inner product, and $\omega, \, \tau, \, \beta > 0$ are hyper-parameters. 
More precisely, each displacement component $\bmphi_{m}$ is independently and identically Gaussian distributed with zero mean and with covariance $\frac{1}{\omega} \left(\gl + \tau \idnbar \right)^{-\beta}$. The parameter $\tau>0$ is chosen suitably small in practice, and has been introduced to allow for this interpretation of the prior since the graph Laplacian $\gl$ itself is not invertible.\footnote{$\bmphi_{m}=\bm{D}^{q} \mathbf{1}_N$ is an eigenvector of $\gl$ with eigenvalue zero.} An alternative to this approach is to only consider components $\bmphi_{m}$ that are orthogonal to the kernel of $\gl$, thus avoiding the introduction of the additional parameter $\tau$ as done in \cite{Bertozzi2018}. The parameters $\omega$ (regularization strength) and $\beta$ (regularization exponent) are both used to control the regularity of the MAP estimator, and will be described in detail in Subsections~\ref{sec:prior} and \ref{sec:hyperpara}.

\subsubsection{Interpretation of the prior density}
\label{sec:prior}
Since we have assumed that $p=q$, the graph Laplacian $\gl$ is a symmetric positive semi-definite matrix, and so permits an eigenvalue decomposition of the form $\bm{L} = \bm{\Psi \Lambda \Psi^T}$, where $\bm{\Lambda}$ is the diagonal matrix of eigenvalues, $\bm{\Psi}$ is a matrix whose columns are the eigenvectors, and $\bm{\Psi}^T \bm{\Psi} = \bm{\Psi} \bm{\Psi}^T = \idnbar$. 

The spectral decomposition of the graph Laplacian has the useful property that when the underlying data is clearly separated into clusters, the eigenfunctions corresponding to the smallest eigenvalues tend to vary smoothly over the clusters on a coarser spatial scale, when compared with eigenfunctions corresponding to larger eigenvalues.
We utilize this observation to better understand the prior distribution in (\ref{prior}). In particular, we express $\phimat$ as a linear combination of the eigenvectors via the coefficients $\bm{A}=[A_{ij}] \in \mathbb{R}^{\nbar \times \ndim}$, that is $\phimat = \bm{\Psi} \bm{A}$. Thereafter, we substitute this in  (\ref{prior}) to obtain
\begin{equation}
\begin{split}
    p(\bm{A}) 
& \propto \exp \left( -\frac{\omega}{2} 
\mathrm{tr} \left( \bm{A}^T \bm{\Psi}^T \bm{\Psi} \left(\bm{\Lambda}+\tau \idnbar \right)^{\beta} {\bm{\Psi}}^T \bm{\Psi}\bm{A} \right) 
\right) \\
& =\exp \left( -\frac{\omega}{2} \mathrm{tr} \left( \bm{A}^T \left(\bm{\Lambda}+\tau \idnbar \right)^{\beta} \bm{A} \right) \right)  \\
& =\exp \left( -\frac{\omega \tau^{\beta}}{2} \sum_{j=1}^{\ndim} \sum_{i=1}^{{N}}  A_{ij}^2 \left( 1 + \frac{\lambda_i}{\tau} \right)^{\beta}  \right). 
\end{split}
\label{eq:pA}
\end{equation}
The final expression in the equation above makes it clear that for every component of the displacement field (denoted by the index $j$ in the sum above), the prior promotes coefficients of eigenfunctions corresponding to smaller eigenvalues. When the data is arranged in clusters, these eigenfunctions have little variation over the clusters, which in turn means that displacement fields that are constant over clusters are considered more likely by the prior density. In other words, while moving the low-fidelity data points to their multi-fidelity location, the prior deems that it is more likely that points within a cluster will move together. When the data is not arranged in clusters, or for data points within each cluster, the eigenfunctions corresponding to smaller eigenvalues are those that tend to vary more smoothly. This in turn implies that displacements fields that vary smoothly are considered more likely. 
That is, the prior ensures that low-fidelity data points that are close to each other will have multi-fidelity approximations that continue to be close. In summary, the prior ensures that multi-fidelity approximations that are consistent with the structure of the low-fidelity data are more likely.

The analysis above also provides an intuitive interpretation of the hyper-parameters. The parameter $\tau$ scales with the eigenvalues, so that if set to be equal to the smallest non-zero eigenvalue, the problem is equivalent to considering a scaled spectrum with smallest non-zero eigenvalue equal to 1. The regularization exponent $\beta$ controls the extent to which higher-order eigenvalues are penalized; larger values of $\beta$ make the contribution from the eigenfunctions corresponding to large eigenvalues to the displacement vector less likely. Finally, $\omega$ controls the strength of the contribution of the prior to the posterior distribution, relative to the contribution of the likelihood term. By re-parameterizing the regularization strength $\omega = \kappa \tau^{-\beta}$,
\begin{equation}
    p(\bm{A}) \propto \exp \left( -\frac{\kappa}{2} \sum_{i=1}^{{N}} \sum_{j=1}^{{N}} A_{ij}^2 \left( 1 + \frac{\lambda_i}{\tau} \right)^{\beta}  \right)
\end{equation}
we note that the strength of the prior depends only on $\kappa$ and is independent of $\tau$ and $\beta$. This expression is convenient when comparing priors for different values of hyper-parameters.

\subsection{Selection of high-fidelity data}
\label{sec:selection-hf}
In this section, we describe the policy used to select $\nhat \ll \nbar$ high-fidelity data points to acquire. This is accomplished by \new{performing spectral clustering} on the graph constructed from the low-fidelity data to determine $\nhat$ clusters and their corresponding centroids. Thereafter, the data points closest to each of these centroids are determined, and the corresponding high-fidelity data is acquired.
The logic \new{of using spectral clustering is that it is consistent with our choice of prior, based on a graph Laplacian regularizer. Whereas, the choice of} selecting \new{the centroids} is driven by the observation that points that belong to a given cluster will tend to move together from their low-fidelity coordinates to their high-fidelity coordinates. Thus the displacement vector for the centroid is likely to be representative of the displacement vector for most of the points in the cluster. The steps in this process are:

\begin{enumerate}
    \item Compute the low-lying eigenfunctions of the graph Laplacian, $\bm{\psi}^{(m)}$, for each $m\in \{1,\,\dots,\,\nhat\}$. 
    \item Compute the coordinates of every low-fidelity data point $\ubari$ in the eigenfunction space. That is, compute $ \bm{\xi}^{(i)} := [\psi_i^{(1)},\dots,\psi_i^{(\nhat)}] $ for each $ i \in \{ 1, \dots, \nbar\}$. 
    \item Perform clustering on the points $\{\bm{\xi}^{(i)}\}_{i = 1}^{\nbar}$ to find $\nhat$ clusters. \new{For this task, standard clustering techniques such as $k$-means or DBSCAN can be used}.\footnote{\new{In this work, we assume that the number $\nhat$ of computed eigenfunctions equals the number of desired high-fidelity points, and therefore of clusters. However, if one desired to decouple these quantities, then an advantage of DBSCAN is that it does not require a pre-specified number of clusters, unlike $k$-means.}}
    \item For each cluster, determine the centroid and the low-fidelity data point closest to it.
    \item Re-index the low-fidelity data set $\bar{\mathcal{D}}$ and the corresponding input parameters so that the points identified above correspond to the first $\nhat$ points. 
    \item Compute the high-fidelity data at the parameter values corresponding to these points, and assemble the data set $\hat{\mathcal{D}} := \left\{\uhati \right\}_{i=1}^{\nhat}$. Note that the elements of $\hat{\mathcal{D}}$ are the high-fidelity counterparts of the first $\nhat$ elements of $\bar{\mathcal{D}}$.
    \item Scale the high-fidelity data by the same procedure used to normalize the low-fidelity data.
\end{enumerate}

\subsection{Construction of the likelihood} 
We now wish to update the prior density for the displacement matrix based on the newly acquired high-fidelity measurements. This step is performed via a likelihood term within a Bayesian update. The construction of the likelihood term is described next.

From the high-fidelity data $\hat{\mathcal{D}} = \{\uhati \}_{i=1}^{\nhat}$, we construct the matrix of low-to-high-fidelity displacements $\phihati := \uhati - \ubari$, $i\in \{1,\,\dots,\,\nhat\}$ (see Figure \ref{fig:hf_vs_true_phi}),
\begin{equation}
    \phihatm  :=\begin{pmatrix}
\phihat^{(1)}\\
\vdots\\
\phihat^{(\nhat)}
\end{pmatrix} = \begin{pmatrix}
\phihat_1 & \cdots & \phihat_D \end{pmatrix} \in \mathbb{R}^{\nhat \times \ndim}.
\end{equation}

\begin{figure}
    \centering
    \includegraphics[width=0.6\linewidth]{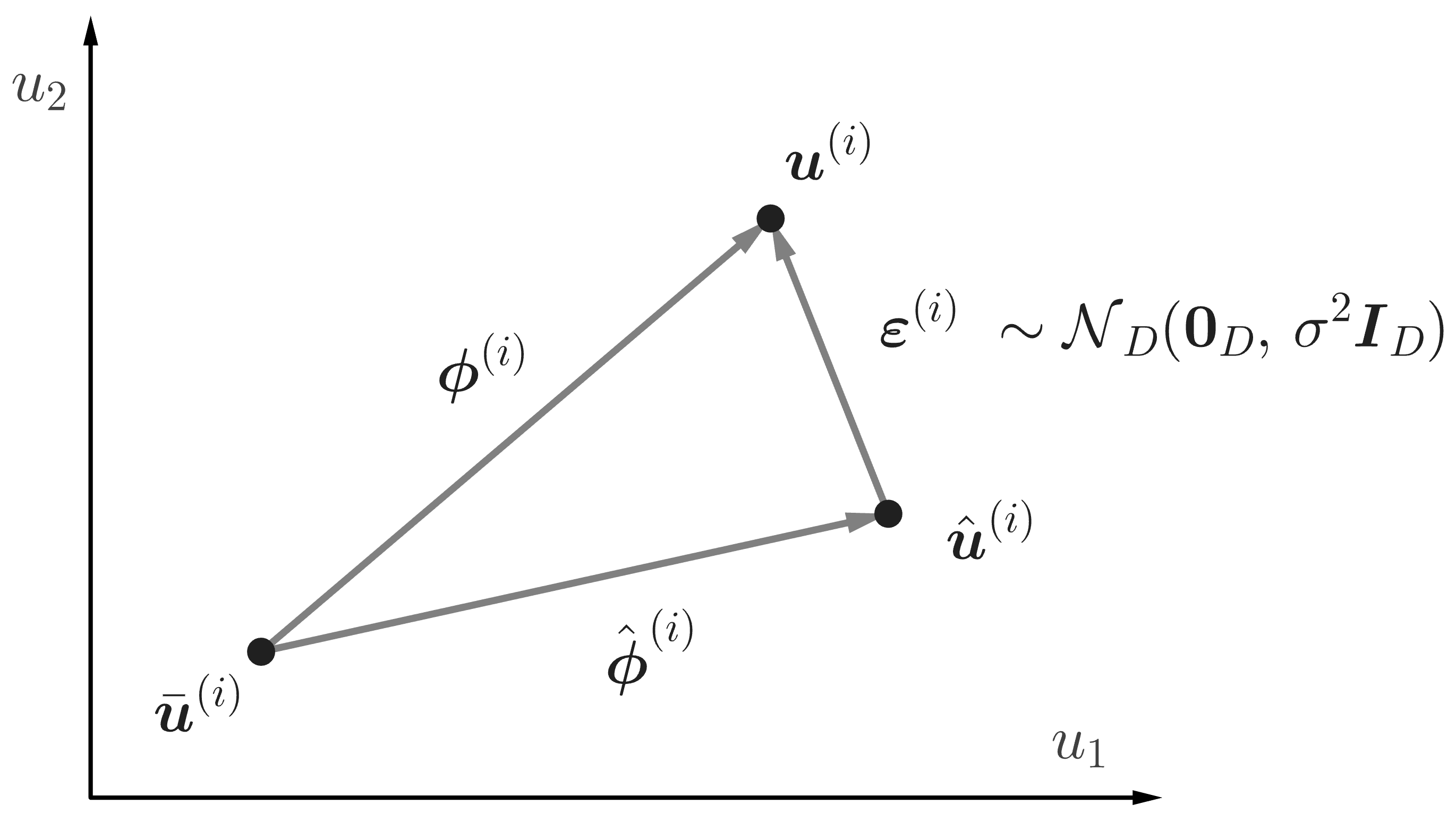}
    \caption{Schematic representation of the low-fidelity approximation $\ubari$, connected to the high-fidelity approximation $\uhati$ and the true data point $\utruei$ through their respective displacement vectors, $\phihati$ and $\bm{\phi}^{(i)}$, for $i \in \{1,\, \dots,\, M\}$.}
    \label{fig:hf_vs_true_phi}
\end{figure}

We assume that the error in the high-fidelity data is additive and can be modeled with a  multivariate normal distribution with zero mean and variance $\sigma^2 \boldsymbol{I}_D$, i.e.,
\begin{align}
\uhati = \utruei + \bm{\varepsilon}^{(i)},\qquad i\in \{1,\,\dots,\,\nhat\},
\end{align}
with $\bm \varepsilon^{(i)} \sim \mathcal{N}_{\ndim}(\boldsymbol{0}_D,\,\sigma^2 \boldsymbol{I}_D)$ \textit{i.i.d}. 
\begin{rmk}
    One could also consider correlations between the components of the error in the high-fidelity data by considering instead $\bm \varepsilon^{(i)} \sim \mathcal{N}_{\ndim}(\boldsymbol{0}_D,\,\boldsymbol{C}_D)$ for some non-diagonal covariance matrix $\boldsymbol{C}_D$. We will not consider this case in this paper.
\end{rmk}
Subtracting $\ubari$ from both sides of the equation above we arrive at,
\begin{align}
 \phihati = \phii + \bm{\varepsilon}^{(i)},\qquad i\in \{1,\,\dots,\,\nhat\}.
\end{align}
This implies that the likelihood of observing the measurements $\phihatm$ given the displacement matrix $\phimat$ is
\begin{align} 
p(\phihatm | \phimat)  \propto \exp \left( -\frac{\|\phihatm - \pnhat \phimat \|_{F}^{2}}{2\sigma^{2}}\right),
\label{likelihood}
\end{align}
where $\pnhat \in \mathbb{R}^{\nhat \times \nbar}$ is the matrix that extracts the first $\nhat$ rows from $\phimat$, and is given by, 
\begin{equation}
\pnhat := (\idnhat \; \bm{0}_{\nhat \times (\nbar - \nhat)}) 
\in \mathbb{R}^{\nhat \times N}.
\end{equation}

\subsection{Definition of the posterior density}
The posterior distribution of $\phimat$ is proportional to the product of the prior and the likelihood, that is
\begin{equation}
\begin{split}
    p(\phimat | \phihatm)
& \propto p(\phihatm | \phimat) p(\phimat)  \\
& \propto \exp \left( -\frac{\|\phihatm - \pnhat \phimat \|_{F}^{2}}{2\sigma^{2}} -\frac{\omega}{2} \langle \phimat, \left(\gl +\tau \idnbar \right)^{\beta} \phimat\rangle_F \right) \\
& =\prod_{m = 1}^{\ndim} \exp\left( -\frac{\|\hat{\bmphi}_{m} - \pnhat \bmphi_{m} \|_{2}^{2}}{2\sigma^{2}} - \frac{\omega}{2} \bmphi_{m}^T\left(\gl  + \tau \idnbar \right)^{\beta} \bmphi_{m} \right),
\end{split}
\label{eq:posterior}
\end{equation}
where we have made use of (\ref{prior}) and (\ref{likelihood}). Since both the prior and the likelihood are normal distributions, we can write the expression above as
\begin{equation}\label{eq:posteriorC}
\begin{split}
    p(\phimat | \phihatm)
& \propto \exp \left( - \frac{1}{2} \langle \phimat - \phimapm, \phicov^{-1} (\phimat - \phimapm) \rangle_F \right) \\
& = \prod_{m=1}^{\ndim} \exp \left( -\frac{1}{2} \left( \bmphi_m - \bmphi^*_m \right)^{T} \cdot \phicov^{-1} \left( \bmphi_m - \bmphi^*_m \right) \right),
\end{split}
\end{equation}
where
\begin{align}
\phicov &= \left( \frac{1}{\sigma^2} \pnhat^T \pnhat + \omega (\gl + \tau \idnbar)^{\beta} \right)^{-1}, \label{covariance} \\
\phimapm & =\frac{1}{\sigma^{2}} \phicov \pnhat^T \phihatm.
\label{MAP-estimate} 
\end{align}
Hence, $\bmphi^*_m =\frac{1}{\sigma^{2}} \phicov \pnhat^T \phihat_m$.
That is, the posterior distribution is also normal, and the mean for the $m^\text{th}$ component of the low-fidelity displacement vector is given by $\bmphi^*_m$ and the covariance is given by $\phicov$. Further, since for a normal distribution the mean and the mode are the same, this means that $\bmphi^*_m$ is also the maximum \textit{a posteriori} (MAP) estimate for the $m^\text{th}$ component of the low-fidelity displacement vector. 
This implies that the estimated posterior distribution for the $m^\text{th}$ component of the true data is also multivariate normal. The mode of this distribution is given by $\ubar_{m} + \bmphi^*_m$. We define this quantity to be our multi-fidelity estimate. We note that the estimated covariance for all components of the true data is the same and is given by the matrix $\phicov$.

It is instructive to write (\ref{MAP-estimate}) as a solution to a linear system of equations. This yields,
\begin{equation}
    \label{eq:MAP}
\left( \frac{1}{\sigma^2} \pnhat^T \pnhat + \omega (\gl + \tau \idnbar)^\beta\right)\phimapm = \frac{1}{\sigma^2} \pnhat^T \phihatm.
\end{equation}

\subsection{Selection of the hyper-parameters}\label{sec:hyperpara}
The method described requires the evaluation of the three hyper-parameters $\tau$, $\omega$ and $\beta$. The parameter $\tau>0$ was artificially introduced into the problem to guarantee invertibility of $\gl + \tau \idnbar$, and therefore allows the interpretation of our set-up as a Bayesian inverse problem with \eqref{covariance} providing the expression of the covariance matrix of the posterior distribution. As described in Section \ref{sec:prior}, $\tau$ is set equal to the smallest non-zero eigenvalue of the graph Laplacian.

Next, we comment on the role of the regularization exponent $\beta$. When maximizing the posterior distribution $p(\phimat | \phihatm)$ over $\phimat$, higher values of $\beta$ enforce that the resulting $\bm\phi_{m}$ has smooth variation within clusters also for higher derivatives; in short, it enforces control of derivatives of order $\beta$ and may be selected $\beta\ge 1$ \cite{hoffmann2020consistency}. Given this, we select $\beta=2$ and note that this choice is adequate in all our numerical experiments. 
 
Finally, $\omega$ determines the strength of the prior distribution with respect to the likelihood; we consider it a regularization strength since the prior distribution plays the role of a regularizer when maximizing the posterior distribution. The parameter $\omega$ can be determined by requiring the average standard deviation of the multi-fidelity estimates to be greater than the standard deviation of the high-fidelity data, $\sigma$. That is, select $\omega$ such that
\begin{equation}
    \new{\frac{1}{\nbar} \sum_{i=1}^{\nbar} \sqrt{C_{ii}} = r \sigma,}
\label{eq:omega_condition}
\end{equation}
for some $r > 1$. This guarantees that the confidence of the multi-fidelity model is not greater than the one of the high-fidelity model. In our numerical experiments we have observed that that a value of $r = 3$ leads to good results for all cases. Remarkably, the selection of the hyper-parameters using the approach described above does not require knowledge of any additional high-fidelity data. All the high-fidelity data is used to estimate the mode of the true data points. 
\new{Additionally, in \ref{app:omega_cond_grad} we show that the gradient of the function
\begin{equation}
    \mathcal{J}(\omega) := \left( \frac{1}{\nbar} \sum_{i=1}^{\nbar} \sqrt{C_{ii}(\omega)} - r \sigma \right)^2,
    \label{eq:omega_loss}
\end{equation}
which needs to be minimized to solve (\ref{eq:omega_condition}), can be found in closed form.
}

\section{Evaluating the posterior distribution}
\label{sec:evaluating}
In this section we describe two computational methods for determining $\phimapm $ and $\phicov$, and thereby completely characterizing the posterior distribution. The first method relies on computing the spectral decomposition of the graph Laplacian, whereas the second relies on constructing a low-rank approximation of the inverse of the covariance matrix. 

\subsection{Expansion in a truncated eigenfunction basis}
\label{sec:truncation}
This method utilizes the facts that: 
\begin{enumerate}
    \item The eigenfunctions of the graph Laplacian form a complete basis that can be used to represent the displacement components.
    \item Once this basis is used, the prior penalizes the contributions from eigenfunctions corresponding to large eigenvalues. 
\end{enumerate}
Therefore, a reasonable approximation is to represent the displacement components using a truncated basis set comprising of eigenfunctions with small eigenvalues.

We compute the low-lying spectrum  of the graph Laplacian $\gl$. That is, we compute the eigenpairs
$(\lambda_k,\,\boldsymbol{\psi}_k)$ for each $k\in \{1,\,\dots,\,K\}$, where $\boldsymbol{\psi}_k \in \mathbb{R}^{\nbar}$ and $K \ll \nbar$ indicates a cutoff. This can be done as follows: First, let 
\begin{equation}
    a:=2 \max_{i\in\{1,...,N\}} D_{ii}^{1-p-q}.
\end{equation}

Then it is a standard result that the eigenvalues of $\gl$ lie in $[0,a]$ \cite[Lemma 2.5(f)]{vanGennip14}.\footnote{To apply this Lemma, we use that $\gl$ is similar to $\degmat^{-(p+q)}(\degmat - \adj)$.} Next, compute the $K$ leading eigenvalues $ \lambda'_k$ and eigenvectors ${ \boldsymbol{\psi}}'_k$ of the symmetric positive semi-definite matrix $a \idnbar - \gl$.
Finally, the low-lying eigenvectors of $\gl$ are $\boldsymbol{\psi}_k = \boldsymbol{\psi}'_k$, with corresponding eigenvalues $\lambda_k = a -  \lambda'_k$.

\new{
\begin{rmk}
Choosing smaller values of the cutoff value $K$ reduces the computational budget of the method (see \Cref{tb:trunc}). In practice, $K$ is a parameter that is often fixed using heuristics. In preliminary experiments (a more thorough investigation is the subject of ongoing work) we observed non-monotonic error dependence on $K$, often seeing a double-descent phenomenon. It is thus hard to recommend principled heuristics for $K$, except to note that when we did observe a second descent in error as a function of $K$, it roughly was for $K > M$, the number of high-fidelity points. Alternatively, $K$ can be tuned as a parameter using a holdout validation set, wherein a $K$ is sought (within computational budget) which minimizes validation error.
\end{rmk}
}

We store the eigenfunctions and eigenvalues in the matrices:
\begin{align}
   \efunm_K &= \begin{pmatrix}
   \boldsymbol{\psi}_1 & \cdots & \boldsymbol{\psi}_K \end{pmatrix} \in \mathbb{R}^{\nbar \times K}, \\
    \evalm_K &= \operatorname{diag}\left(\lambda_1,\,\dots,\,\lambda_K\right) \in \mathbb{R}^{K \times K},
\end{align}
where $\efunm_K^T \efunm_K = \bm{I}_K$. 

Thereafter, we express $\phimat$ as a truncated expansion of the eigenfunctions via the truncated coefficients matrix $\coeffm_K = [A_{ij}] \in \mathbb{R} ^{K \times \ndim}$,

\begin{equation}
    \phimat = \efunm_K \coeffm_K.
\end{equation}

Hence, each component of the displacement $\bmphi_m \in \mathbb R ^ {\nbar}$, $m\in\{1,\dots,\,\ndim\}$, may be written as
\begin{equation}
    \bmphi_m = \sum_{k=1}^K A_{km} \boldsymbol{\psi}_k.
\end{equation}

Using this expansion in the posterior distribution for the displacement (\ref{eq:posterior}), we arrive at an expression for the  distribution for the coefficients $\coeffm_K$,

\resizebox{0.995\textwidth}{!}{
\begin{minipage}{\textwidth}
\begin{align}
p(\coeffm_K|\phihatm)
& \propto \exp \left (-\frac{\|\phihatm -\pnhat\boldsymbol{\Psi}_K \coeffm_K\|_{F}^{2}}{2\sigma^{2}} - \frac{\omega}{2} \operatorname{tr}\left(\coeffm_K^T \boldsymbol{\Psi}_K^T \left(\gl + \tau \idnbar \right)^{\beta} \boldsymbol{\Psi}_K \coeffm_K \right)\right) \nonumber \\
& = \exp \left (-\frac{\|\phihatm -\pnhat\boldsymbol{\Psi}_K \coeffm_K\|_{F}^{2}}{2\sigma^{2}} - \frac{\omega}{2} \operatorname{tr}\left(\coeffm_K^T \left(\evalm_K + \tau \bm{I}_K \right)^{\beta} \coeffm_K \right)\right),\label{eq:pAPhihat}
\end{align}
\end{minipage}
}
where we have used that, since $\efunm_K^T\left(\gl + \tau \idnbar\right)\boldsymbol{\Psi}_K = \evalm_K + \tau \bm{I}_K$, it follows that $\efunm_K^T \left(\gl + \tau \idnbar \right)^{\beta} \boldsymbol{\Psi}_K = \left(\evalm_K + \tau \bm{I}_K \right)^{\beta}$.\footnote{Recall that $\gl + \tau \idnbar = \bm{\Psi}\bm{\Lambda}\efunm^T + \tau \idnbar =  \bm{\Psi}(\bm{\Lambda}+\tau \idnbar) \efunm^T  $. It directly follows that $(\gl + \tau \idnbar)^\beta = \bm{\Psi}(\bm{\Lambda}+\tau \idnbar)^\beta \efunm^T$. Then since 
$\efunm_K^T\efunm = 
\begin{pmatrix}
    \bm{I}_K & \mathbf{0}_{K\times(N-K)}
\end{pmatrix}$ 
and 
$\efunm^T\efunm_K = 
\begin{pmatrix}
    \bm{I}_K & \mathbf{0}_{(N-K)\times K}
\end{pmatrix}^T$ 
by the orthonormality of the eigenvectors, the result follows.}

Recognizing that the distribution for $\coeffm_K$ is the product of two normal distributions and is therefore itself a normal distribution, we may write the distribution \cref{eq:pAPhihat} as, 
\begin{align}
p(\coeffm_K|\phihatm)
& \propto \exp \left( \frac{1}{2} \langle \coeffm_K - \coeffm_K^*, \phicov_{A_K}^{-1} (\coeffm_K - \coeffm_K^*) \rangle_F \right) \nonumber \\
& = \prod_{m=1}^{\ndim} \exp \left( -\frac{1}{2} \left(\coeff_{m} - \coeff^*_{m} \right)^{T} \cdot \phicov_{A_K}^{-1} \left(\coeff_{m} - \coeff^*_{m} \right)\right),
\end{align}
where $\coeff_{m}$ is the $m^\text{th}$ column of $\coeffm_K$. Further for the posterior distribution, the mean, $\coeffm_K^* \in \mathbb{R} ^{K \times \ndim} $, and the covariance, $\phicov_{A_K} \in \mathbb{R} ^{K \times K}$, are given by
\begin{align}
\phicov_{A_K}^{-1} &= \frac{1}{\sigma^2} \left(\pnhat\efunm_K\right)^T \pnhat \efunm_K + \omega (\evalm_K + \tau \bm{I}_K)^{\beta}, \label{eq:cov_a} \\
\coeffm_K^* & =\frac{1}{\sigma^{2}} \phicov_{A_K} \left(\pnhat\efunm_K\right)^T \phihatm \label{eq:map_a}.
\end{align}

\subsubsection{Computational complexities}
The quantities to be computed and stored for this method, and the computational complexity of doing so, are summarized in \Cref{tb:trunc}. For the complexity of computing the $K$ leading eigenvalues and eigenvectors of $a\idnbar -\gl$ (which is the bottleneck for computing the truncated eigendecomposition), we assume that randomized block Krylov  methods \cite{MuscoMusco15} are used.

\begin{table}[ht]
\centering
\renewcommand{\arraystretch}{1.4}
\begin{tabular}{c l l}
\hline
\textsc{Quantity} & \textsc{Complexity in time} & \textsc{Complexity in space} \\ 
\hline
$\bm{\Psi}_K, \bm{\Lambda}_K$ & $\mathcal{O}(\nbar^2K\log \nbar)$ \cite{MuscoMusco15}& $\mathcal{O}(\nbar^2(\log \nbar)^2)$ \cite{MuscoMusco15} \\
\hline
$\pnhat\bm{\Psi}_K$ & $\mathcal{O}(MK)$ & $\mathcal{O}(MK)$ \\ 
$\phicov_{A_K}^{-1}$ & $\mathcal{O}(MK^2)$ & $\mathcal{O}(K^2)$ \\ 
$\phicov_{A_K}$ & $\mathcal{O}(K^3)$ & $\mathcal{O}(K^2)$ \\
$\coeffm_K^*$ & $\mathcal{O}(DMK + DK^2)$ & $\mathcal{O}(DK)$ \\ 
\hline
\end{tabular}
\caption{Computational complexity of the truncation method. \label{tb:trunc}}
\end{table}

\subsection{Low rank Nystr\"om approximation} \label{sec:Nys}
Recall that to compute $\phimapm$ we need to solve (\ref{eq:MAP}). 
In this section we describe a way to compute this in a way that is very efficient in $\nbar$, in the setting where $\mathcal{O}(\nbar^2)$ time and space complexity are both intractable. We can do this in the special case of $p=q=1/2$ by computing a low rank approximation of $(\gl + \tau \idnbar)^\beta$ via the Nystr\"om extension. 

\subsubsection{Nystr\"om-QR approximation}
Notice that, since $p=q=1/2$, $\gl:= \idnbar - \degmat^{-1/2}\adj \degmat^{-1/2}$. Choosing $X \subseteq \{1,...,\nbar\}$ at random (with perhaps the condition that $X$ contain some or all of the high-fidelity points) with $|X|=:K$, we can approximate $\adj$ via the Nystr\"om extension \cite{Nystrom30,FowlkesBelongieChungMalik04,BelongieFowlkesChungMalik02}:
\begin{equation}\label{eq:WNys}
    \adj \approx \adj(:,X)\adj(X,X)^{\dagger}\adj(X,:), 
\end{equation}
where we have used the \textsc{Matlab} notation $\adj(:,X) := [W_{ij}]_{i=1, j \in X}^{\nbar}$ and $\adj(X,X) := [W_{ij}]_{i, j \in X}$, and $\adj(X,X)^{\dagger}$ denotes the pseudoinverse of $\adj(X,X)$. Observe that this approximation of $\adj$ has at most rank $K$. In this subsection we will use this approximation to derive a low-rank approximation for $(\gl + \tau \idnbar)^\beta$, and thereby efficiently approximate solutions to \cref{eq:MAP} and matrix-vector products with $\phicov$.

\begin{rmk}
    The Nystr\"om extension is most well-behaved applied to symmetric positive semidefinite matrices, as if $\bm{S}$ is such a matrix then the condition number of $\bm{S}(X,X)$ is bounded above by that of $\bm{S}$. However, this is not satisfied by $\adj$, which is constructed to have zeroes on the diagonal (i.e., the graph has no self-loops) and hence has trace zero, and is therefore an indefinite matrix. In the indefinite case, positive and negative eigenvalues can combine to leave $\adj(X,X)$ with much smaller (in magnitude) non-zero eigenvalues than $\adj$ and correspondingly a much higher condition number. This case has seen recent attention in Nakatsukasa and Park \cite{NakatsukasaPark23}, 
    which recommends taking $K$ higher than one's desired rank $r$, and then truncating the spectrum of $\adj(X,X)$ to the $r$ largest (in magnitude) eigenvalues before computing the pseudoinverse, to control the condition number. We will ignore this refinement in the below, but it is only a minor modification to include.
\end{rmk}

Given \cref{eq:WNys}, we can therefore compute 
\begin{equation}
\degmat \approx \hat \degmat := \operatorname{diag}(\adj(:,X)\adj(X,X)^{\dagger}\adj(X,:)\onenbar)
\end{equation}

with $\onenbar \in \mathbb{R}^{\nbar}$ being a column vector of ones, and thus 
\begin{equation}
\gl \approx  \idnbar - \hat \degmat^{-\frac12} \adj(:,X)\adj(X,X)^{\dagger}\adj(X,:) \hat \degmat^{-\frac12}. 
\end{equation}

Next, we can extract an approximate low rank eigendecomposition of $\gl$ from this approximation by following a method of Bebendorf and Kunis \cite{BebendorfKunis09}, first recommended in the graph-based learning context by Alfke \textit{et al.} \cite{AlfkePottsStollVolkmer18}. 
We compute the thin QR factorization 
\begin{equation}
\bm{Q}\bm{R} = \hat \degmat^{-\frac12} \adj(:,X),
\end{equation}

where $\bm{Q} \in \mathbb{R}^{\nbar  \times K}$ has orthonormal columns and $\bm{R}  \in \mathbb{R}^{K \times K}$ is upper triangular. Then we compute the eigendecomposition 
\begin{equation}
\bm{R}\adj(X,X)^{\dagger}\bm{R}^T = \bm{\Gamma} \bm{\Sigma} \bm{\Gamma}^T,
\end{equation}

where  $\bm{\Gamma} \in \mathbb{R}^{K \times K}$ is orthogonal and $\bm{\Sigma}  \in \mathbb{R}^{K \times K}$ is diagonal. Finally, by computing $\tilde {\bm{U}}:= \bm{Q}\bm{\Gamma}\in \mathbb{R}^{\nbar  \times K}$ (which therefore has orthonormal columns) we have that 
\begin{equation}
\gl \approx \idnbar - \tilde {\bm{U}}\bm{\Sigma}\tilde{\bm{U}}^T,
\end{equation}

where $\tilde{\bm{U}}^T \tilde{\bm{U}}= \bm{I}_K$, where $\bm{I}_K$ is the $K\times K$ identity matrix. 
We can therefore approximate 
\begin{align}
    (\gl + \tau \idnbar)^\beta & \approx ( (1+\tau)\idnbar - \tilde{\bm{U}} \bm{\Sigma} \tilde{\bm{U}}^T) ^\beta \\
    & =  \left( (1+\tau)(\idnbar- \tilde{\bm{U}} \tilde{\bm{U}}^T) + \tilde{\bm{U}} \left((1+ \tau)\bm{I}_K -\bm{\Sigma}\right) \tilde{\bm{U}}^T \right) ^\beta\\
     &=(1+\tau)^\beta(\idnbar- \tilde{\bm{U}} \tilde{\bm{U}}^T)^\beta + \left( \tilde{\bm{U}} ((1+ \tau)\bm{I}_K -\bm{\Sigma}) \tilde{\bm{U}}^T \right)^\beta \label{eq:line-ref-1}\\
     &=(1+\tau)^\beta(\idnbar - \tilde{\bm{U}} \tilde{\bm{U}}^T) + \tilde{\bm{U}} ((1+ \tau)\bm{I}_K - \bm{\Sigma})^\beta \tilde{\bm{U}}^T \label{eq:line-ref-2} \\
      &=(1+\tau)^\beta \idnbar + \tilde{\bm{U}} \left(((1+ \tau)\bm{I}_K -\bm{\Sigma})^\beta - (1+\tau)^\beta \bm{I}_K \right) \tilde{\bm{U}}^T.
\end{align}
Here line (\ref{eq:line-ref-1}) follows from the fact that if $\bm{AB} = \bm{BA} = \bm{0}$ and $\bm{A},\bm{B}$ are diagonalizable, then $(\bm{A}+\bm{B})^\beta = \bm{A}^\beta + \bm{B}^\beta$, which is here satisfied by $\bm{A} = (1+\tau)(\idnbar- \tilde{\bm{U}} \tilde{\bm{U}}^T)$ and $\bm{B} = \tilde{\bm{U}} ((1+ \tau)\bm{I}_K -\bm{\Sigma}) \tilde{\bm{U}}^T$, since $\tilde{\bm{U}}^T \tilde{\bm{U}} = \bm{I}_K$ and $\bm{A}$ and $\bm{B}$ are symmetric matrices. Line (\ref{eq:line-ref-2}) follows from the fact that $\tilde{\bm{U}}^T \tilde{\bm{U}} = \bm{I}_K$ and therefore that $\idnbar- \tilde{\bm{U}} \tilde{\bm{U}}^T$ is idempotent. 

\subsubsection{Solving (\ref{eq:MAP})}
Defining the diagonal matrices 
\begin{align}
     \bm{\Theta} &:= \pnhat^T \pnhat + \sigma^2 \omega(1 + \tau)^\beta \idnbar \in \mathbb{R}^{\nbar  \times \nbar}, \\
\bm{\Xi} &:= \sigma^2 \omega\left((1+\tau)^\beta \bm{I}_K - ((1+ \tau)\bm{I}_K -\bm{\Sigma})^\beta  \right) \in \mathbb{R}^{K \times K}, 
\end{align} 
it follows that \cref{eq:MAP} can be approximately solved by solving  
\begin{equation}
\label{eq:MAPnys}
    \left(\bm{\Theta} - \tilde{\bm{U}} \bm{\Xi} \tilde{\bm{U}}^T \right)\phimapm = \pnhat^T \phihatm,
\end{equation}
which is equivalent to both
\begin{align}
\label{eq:MAPsaddle}
\begin{pmatrix}
    \bm{\Theta} &  \tilde{\bm{U}} \\
     \tilde{\bm{U}}^T & \bm{\Xi}^{-1}
\end{pmatrix}\begin{pmatrix}
    \phimapm \\
    \bm{B}
\end{pmatrix} &= \begin{pmatrix}
     \pnhat^T \phihatm \\
    \bm{0}
\end{pmatrix}
\end{align}
and 
\begin{align}
\label{eq:MAPsaddle2}
\begin{pmatrix}
    \bm{\Theta} & \tilde{\bm{U}} \\
    \bm{\Xi} \tilde{\bm{U}}^T & \bm{I}_K
\end{pmatrix}\begin{pmatrix}
    \phimapm \\
    \bm{B}
\end{pmatrix} &= \begin{pmatrix}
     \pnhat^T \phihatm \\
    \bm{0}
\end{pmatrix},
\end{align}
because both of these (assuming that $\bm{\Xi}$ is invertible in the former case) are equivalent to
\begin{align}
    \bm{\Theta}\phimapm + \tilde{\bm{U}} \bm{B} &=  \pnhat^T \phihatm, && \bm{B} =  -\bm{\Xi} \tilde{\bm{U}}^T\phimapm.
\end{align}
Because both
\begin{align}
\begin{pmatrix}
    \bm{\Theta} & \tilde{\bm{U}} \\
    \tilde{\bm{U}}^T & \bm{\Xi}^{-1}
\end{pmatrix} &&\text{and} && \begin{pmatrix}
    \bm{\Theta} & \tilde{\bm{U}} \\
    \bm{\Xi}\tilde{\bm{U}}^T & \bm{I}_K
\end{pmatrix}
\end{align}
are extremely sparse, with at most $2NK + N + K$ non-zero entries, \cref{eq:MAPsaddle} and \cref{eq:MAPsaddle2} can then be efficiently solved by sparse linear solvers, e.g. GMRES, see Saad \cite{Saad2003} for details. Since \cref{eq:MAPsaddle} is symmetric as well as sparse, it can be solved more efficiently than \cref{eq:MAPsaddle2}, e.g. via MINRES \cite{MINRES}, so long as the $\bm{\Xi}^{-1}$ term causes no issues.  

\begin{rmk}
    The matrix $\bm{\Xi}$ is a diagonal matrix with diagonal values
    \[
    \sigma^2\omega \left( (1 + \tau)^\beta - (1 + \tau - \sigma_i)^\beta  \right),
    \]
    where the $\sigma_i$ are the diagonal values of $\bm{\Sigma}$. Hence, $\bm{\Xi}$ will be well-conditioned unless for some $i$
    \[
    (1 + \tau)^\beta - (1 + \tau - \sigma_i)^\beta  \approx 0,
    \]
    i.e., $\sigma_i \approx 0$ or $\beta$ is an even integer and $\sigma_i \approx 2(1 + \tau)$. Since the $\sigma_i$ are approximate eigenvalues for $\idnbar - \gl$, they will approximately lie in $[-1,1]$, so for $\tau \geq 0$ this latter case will not arise so long as the approximation is sufficiently good, and furthermore $\sigma_i \leq 1 + \tau$ is desirable anyway so that $\bm{\Xi}$ is defined for all $\beta \geq 0$. The case of $\sigma_i \approx 0$ can be rectified by checking for this in advance and then redefining $\tilde{\bm{U}}$ and $\bm{\Sigma}$ with those columns of $\tilde{\bm{U}}$ and rows/columns of $\bm{\Sigma}$ removed.
\end{rmk}

\subsubsection{Computing the covariance matrix}
The covariance matrix is given by 
\begin{equation}
\bm{C} = \left( \frac{1}{\sigma^2} \pnhat^T \pnhat + \omega (\gl + \tau \idnbar)^\beta\right)^{-1}.
\end{equation}

As before, we can approximate \begin{equation}\label{eq:Woodbury}
\begin{split}
\bm{C} &\approx  \sigma^2 \left( \bm{\Theta} - \tilde{\bm{U}} \bm{\Xi} \tilde{\bm{U}}^T \right)^{-1} \\&= \sigma^2 \left( \bm{\Theta}^{-1} 
 + \bm{\Theta}^{-1} \tilde{\bm{U}} (\bm{\Xi}^{-1} - \tilde{\bm{U}}^T \bm{\Theta}^{-1} \tilde{\bm{U}}  )^{-1} \tilde{\bm{U}}^T \bm{\Theta}^{-1} \right),   
\end{split}
\end{equation}
with the latter equality following from the Woodbury identity \cite{Woodbury50}. This latter term requires $\mathcal{O}(\nbar^2 K )$ time to compute and $\mathcal{O}(\nbar^2 )$ space to store, so may be intractable. However, if all that are desired are matrix-vector products with $\bm{C}$, then these can be computed by precomputing 
\begin{equation}
 \left(\bm{\Xi}^{-1} - \tilde{\bm{U}}^T \bm{\Theta}^{-1} \tilde{\bm{U}} \right)^{-1}
\end{equation}
which requires $\mathcal{O}(NK^2)$ time, and then for any vector $\bm{v}\in\mathbb{R}^\nbar$, $\bm{C}\bm{v}$ can be approximated via \eqref{eq:Woodbury} in $\mathcal{O}(NK)$  time.
\begin{rmk}
    The expression in \cref{eq:Woodbury} provides another method for solving \cref{eq:MAPnys}. 
\end{rmk}

\subsubsection{Computational complexities}
The quantities to be computed and stored for this method, and the computational complexity of doing so, are summarised in \Cref{tb:NysQR}. 

\begin{table}[ht]
\centering
\renewcommand{\arraystretch}{1.4}
\begin{adjustbox}{width=0.995\textwidth}
\begin{tabular}{c l l}
\hline
\textsc{Quantity} & \textsc{Complexity in time} & \textsc{Complexity in space} \\ 
\hline
$\adj(:,X)$ & $\mathcal{O}(\nbar K)$ & $\mathcal{O}(\nbar K)$ \\ 
$\adj(X,X)^{\dagger}$ & $\mathcal{O}(K^3)$ & $\mathcal{O}(K^2)$ \\ 
$\hat \degmat$ & $\mathcal{O}(\nbar K)$ & $\mathcal{O}(\nbar)$ \\ 
$\bm{Q}$ & $\mathcal{O}(\nbar K^2)$ & $\mathcal{O}(\nbar K)$ \\ 
$\bm{R}$ & $\mathcal{O}(\nbar K^2)$ & $\mathcal{O}(K^2)$ \\ 
$\bm{\Gamma,\Sigma}$  & $\mathcal{O}(K^3)$ & $\mathcal{O}(K^2)$ \\ 
$\tilde{\bm{U}}$ & $\mathcal{O}(\nbar K^2)$ & $\mathcal{O}(\nbar K)$ \\ 
\hline
$\bm{\Theta}$ & $\mathcal{O}(\nbar)$ & $\mathcal{O}(\nbar)$\\
$\bm{\Xi}$ & $\mathcal{O}(K)$ & $\mathcal{O}(K)$\\
\hline
$\phimapm$ \footnotesize{($m$-step GMRES on \cref{eq:MAPsaddle2})} & $\mathcal{O}((m^2\nbar+m\nbar K ) D) $ & $\mathcal{O}(mN + m^2+ND)$ \\
$\phimapm$ \footnotesize{($m$-step MINRES on \cref{eq:MAPsaddle})} & $\mathcal{O}( mNKD) $ & $\mathcal{O}(m^2 + ND)$ \\
$\phimapm$ \footnotesize{(via Woodbury \cref{eq:Woodbury})} &
$\mathcal{O}(\nbar K^2 + \nbar KD)$ & $\mathcal{O}(ND)$ \\
$\bm{C}$ & $\mathcal{O}(N^2K)$ & $\mathcal{O}(N^2)$ \\
$\bm{C}\bm{v}$ & $\mathcal{O}(NK^2)$ {\footnotesize{(once)}}${}+\mathcal{O}(NK)$ \footnotesize{(per $\bm{v}$)}  & $\mathcal{O}(N+ K^2)$ \\
\hline
\end{tabular}
\end{adjustbox}
\caption{Computational complexity of the Nystr\"om-QR approximation. \label{tb:NysQR}}
\end{table}

\begin{rmk}
    Both the truncated spectrum and Nystr\"om-QR methods have a similar idea of utilising low-rank structure. The truncation method computes with (a slight modification of) the full Laplacian matrix, and hence incurs costs of $\mathcal{O}(\nbar^2(\log \nbar)^2)$ in space and  $\mathcal{O}(\nbar^2K \log N)$ in time (to compute the low-lying spectrum). However, once these are computed the remaining computations are very efficient. 

    The Nystr\"om-QR method only uses $\mathcal{O}(\nbar K)$ entries of the adjacency matrix, and computes the MAP estimator in $\mathcal{O}(\nbar K^2)$ time, making it better suited for extremely large $N$. Computing the full $\bm{C}$ matrix is not available via this method, as that requires $\mathcal{O}(\nbar^2 K)$ time and $\mathcal{O}(\nbar^2)$ space, however matrix-vector products with $\bm{C}$ can be computed efficiently by this method. 
\end{rmk}

\section{Convergent regularization}
\label{sec:analysis}
We here prove a convergent regularization result. This result shows that as the error in the high-fidelity data tends to zero, one can correspondingly tune down the regularization strength $\omega$ such that the corresponding MAP displacements converge to a displacement which agrees with the true displacement on the high-fidelity data. This gives a guarantee against over-regularization. Our proof technique in this section will be a special case of the technique employed in P\"oschl \cite{poschl2008tikhonov}.

We begin by noting that for a fixed $\omega$, which denotes the strength of the prior, finding the MAP estimate $\phimapm$ \cref{MAP-estimate} is equivalent to solving the optimization problem 
\begin{equation}    
\phimapm \in \argmin_{\boldsymbol{\Theta} \in \mathbb{R}^{\nbar \times D}} \frac{1}{2\sigma^2}\| \pnhat\mathbf{\Theta} - \phihatm\|^2_F + \frac12 \omega \langle \mathbf{\Theta}, (\gl + \tau \idnbar)^\beta\mathbf{\Theta}\rangle_F.
\end{equation}

In this section, we will now suppose that we have a sequence $\{\bm{\hat u}^{(i)}_n\}_{i=1, n \in \mathbb{N}}^\nhat$ of realizations of the high-fidelity data. The idea will be that as $n \to \infty$, these realizations of the high-fidelity data will converge to the true data at the locations $i \in \{1,...,\nhat\}$. Let \begin{align}
[\phihatm_n]_{ij} := (\bm{\hat u}_n^{(i)})_j -  \bm{\bar u}_j^{(i)}, &&  && i \in \{1,...,\nhat\} && j \in \{1,...,D\},\end{align}
 denote the displacements from the low-fidelity data to the $n^\text{th}$ realization of the high-fidelity data. We consider the sequence of optimization problems 
\begin{equation}
    \bm{\Phi}_n^* \in \argmin_{\bm{\Theta} \in \mathbb{R}^{\nbar \times D}} \frac{1}{2}\|\pnhat \bm{\Theta} - \phihatm_n\|^2_F + \omega_n \underbrace{\langle \mathbf{\Theta}, (\gl + \tau \idnbar)^\beta\mathbf{\Theta}\rangle_F}_{=:\mathcal{R}(\bm{\Theta})}.
\end{equation}
Thus, $\bm{\Phi}_n^*$ is the MAP estimator for the true displacements $\bm{\Phi}$, given the $n^\text{th}$ high-fidelity data $\{\bm{\hat u}^{(i)}_n\}_{i=1}^M$ and setting $\omega= \omega_n$. We will show that if the error in the high-fidelity data vanishes and the $\omega_n$ are chosen with appropriate decay (i.e., tending to zero but more slowly than the squared error in the high-fidelity data) then the $\bm{\Phi}_n^*$ will converge to a displacement matrix which agrees with the true displacements at the locations $i \in \{1,...,\nhat\}$, and minimises $\mathcal{R}$ among the displacement matrices satisfying that constraint. 

Observe that 
\begin{itemize}
    \item $\mathcal{R}$ is strictly convex, continuous, coercive, and nonnegative.
    \item $\phihatm_n =  \pnhat\bm{ \Phi} + \bm{E}_n$ where $[\bm{E}_n]_{ij}  =  (\bm{\hat u}_n)_j^{(i)} - \bm{u}^{(i)}_j$ for $i \in \{1,...,\nhat\}$ and $j \in \{1,...,D\}$, i.e. $\bm{E}_n$ describes the error in the $n^\text{th}$ realization of the high-fidelity data.   
\end{itemize}

Let $\delta_n:= \|\bm{E}_n\|_F = \|\phihatm_n - \pnhat\bm{ \Phi}\|_F$ and suppose that as $n \to \infty$:
\begin{align}
    \text{(i) } \delta_n \downarrow 0, && 
    \text{(ii) } \omega_n \downarrow 0, &&
    \text{and} &&
    \text{(iii) } \frac{\delta_n^2}{\omega_n} \to 0.
\end{align}
We will need the following technical lemma. 
\begin{lem}\label{lem:seqlemma}
    Let $x_n$ be a sequence in a topological space $\mathcal{X}$, and let $x \in \mathcal{X}$. Suppose that every subsequence of $x_n$ has a further subsubsequence converging to $x$. Then $x_n$ converges to $x$. 
\end{lem}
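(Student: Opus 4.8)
The plan is to argue by contraposition: I would assume that $x_n$ does \emph{not} converge to $x$ and derive a subsequence none of whose further subsequences converge to $x$, contradicting the hypothesis. So suppose $x_n \not\to x$. By the definition of convergence in a topological space, this means there exists an open neighbourhood $U$ of $x$ such that $x_n \notin U$ for infinitely many $n$. Extract the subsequence $(x_{n_k})_k$ consisting of exactly those indices with $x_{n_k} \notin U$; this is an infinite subsequence by construction.

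The key observation is that every further subsubsequence $(x_{n_{k_j}})_j$ of this subsequence also satisfies $x_{n_{k_j}} \notin U$ for all $j$, since it is drawn from indices already lying outside $U$. Hence no subsubsequence of $(x_{n_k})_k$ can converge to $x$: any sequence converging to $x$ must eventually lie in every neighbourhood of $x$, in particular in $U$, but $(x_{n_{k_j}})_j$ never enters $U$. This contradicts the hypothesis that every subsequence of $x_n$ has a further subsubsequence converging to $x$. Therefore $x_n \to x$.

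I do not expect any serious obstacle here — the statement is a standard topological fact and the argument is essentially a one-line neighbourhood chase once set up by contraposition. The only point requiring a little care is to phrase everything using open neighbourhoods rather than metric balls, since $\mathcal{X}$ is only assumed to be a topological space (in the intended application $\mathcal{X} = \mathbb{R}^{\nbar\times D}$ with a suitable weak-type topology, where sequential arguments of this kind are exactly what one needs). No first-countability or metrizability is required for this direction, so the proof is fully general as stated.
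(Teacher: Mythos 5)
Your argument is correct and is essentially identical to the paper's proof: both negate convergence to obtain an open neighbourhood $U$ of $x$ and a subsequence lying entirely outside $U$, then observe that no subsubsequence of it can converge to $x$. No issues.
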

\begin{proof}
    Suppose that $x_n \nrightarrow x$. Then there exists an open neighbourhood $U$ of $x$ and subsequence $x_n'$  of $x_n$ such that for all $n$, $x_n'\notin U$. It follows that $x_n'$ has no subsubsequence converging to $x$. 
\end{proof}

\begin{thm}\label{thm:cv-regularizer}
    Given (i-iii), it holds that $\bm{\Phi}_n^* \to \bm{\Phi}_\infty^*$, where $\bm{\Phi}^*_\infty$ is the unique solution to 
    \begin{equation}\label{eq:Rminimiser}
    \argmin_{\bm{\Theta} \in \mathbb{R}^{\nbar \times D}} \mathcal{R}(\bm{\Theta}) \qquad \text{ s.t. } \qquad \pnhat\bm{\Theta} = \pnhat\bm{ \Phi}. 
    \end{equation}
\end{thm}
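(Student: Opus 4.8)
The plan is to follow the classical Tikhonov convergent-regularization argument (as in P\"oschl \cite{poschl2008tikhonov}), specialised to this finite-dimensional quadratic setting. First I would check that \cref{eq:Rminimiser} is well posed: its feasible set $\{\bm{\Theta}\in\mathbb{R}^{\nbar\times\ndim} : \pnhat\bm{\Theta}=\pnhat\bm{\Phi}\}$ is a non-empty (it contains $\bm{\Phi}$) closed affine subspace, and $\mathcal{R}$ is strictly convex, continuous, coercive and nonnegative as noted above (strict convexity since $(\gl+\tau\idnbar)^\beta$ is symmetric positive definite for $\tau,\beta>0$), so a unique minimiser $\bm{\Phi}^*_\infty$ exists. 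Note in particular that $\bm{\Phi}^*_\infty$ is itself feasible, hence $\|\pnhat\bm{\Phi}^*_\infty-\phihatm_n\|_F=\|\pnhat\bm{\Phi}-\phihatm_n\|_F=\delta_n$.

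Next I would exploit the minimality of $\bm{\Phi}^*_n$ tested against the competitor $\bm{\Phi}^*_\infty$:
\[
\tfrac12\|\pnhat\bm{\Phi}^*_n-\phihatm_n\|_F^2+\omega_n\,\mathcal{R}(\bm{\Phi}^*_n)\;\le\;\tfrac12\,\delta_n^2+\omega_n\,\mathcal{R}(\bm{\Phi}^*_\infty).
\]
Dropping the nonnegative term $\omega_n\mathcal{R}(\bm{\Phi}^*_n)$ and using $\delta_n,\omega_n\downarrow 0$ gives $\|\pnhat\bm{\Phi}^*_n-\phihatm_n\|_F\to 0$, whence by the triangle inequality and $\delta_n\to 0$ we get $\pnhat\bm{\Phi}^*_n\to\pnhat\bm{\Phi}$. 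Dropping instead the misfit term and dividing by $\omega_n$ gives $\mathcal{R}(\bm{\Phi}^*_n)\le\tfrac{\delta_n^2}{2\omega_n}+\mathcal{R}(\bm{\Phi}^*_\infty)$, so hypothesis (iii) yields $\limsup_n\mathcal{R}(\bm{\Phi}^*_n)\le\mathcal{R}(\bm{\Phi}^*_\infty)$. In particular $\{\mathcal{R}(\bm{\Phi}^*_n)\}$ is bounded, so coercivity of $\mathcal{R}$ makes $\{\bm{\Phi}^*_n\}$ a bounded sequence in $\mathbb{R}^{\nbar\times\ndim}$.

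I would then argue by subsequences and invoke \Cref{lem:seqlemma}. Take any subsequence of $\{\bm{\Phi}^*_n\}$; by boundedness and finite-dimensionality it has a convergent subsubsequence $\bm{\Phi}^*_{n_k}\to\bm{\Theta}^\star$. Continuity of the linear map $\pnhat$ together with $\pnhat\bm{\Phi}^*_n\to\pnhat\bm{\Phi}$ forces $\pnhat\bm{\Theta}^\star=\pnhat\bm{\Phi}$, i.e. $\bm{\Theta}^\star$ is feasible for \cref{eq:Rminimiser}; continuity of $\mathcal{R}$ and the $\limsup$ bound give $\mathcal{R}(\bm{\Theta}^\star)=\lim_k\mathcal{R}(\bm{\Phi}^*_{n_k})\le\mathcal{R}(\bm{\Phi}^*_\infty)$. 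Since $\bm{\Phi}^*_\infty$ is the unique minimiser of $\mathcal{R}$ over the feasible set, this forces $\bm{\Theta}^\star=\bm{\Phi}^*_\infty$. Thus every subsequence of $\{\bm{\Phi}^*_n\}$ has a subsubsequence converging to $\bm{\Phi}^*_\infty$, and \Cref{lem:seqlemma} gives $\bm{\Phi}^*_n\to\bm{\Phi}^*_\infty$.

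I do not anticipate a genuine obstacle: in finite dimensions the compactness and (semi-)continuity inputs are automatic. The only points requiring care are the correct use of hypothesis (iii) to absorb the $\delta_n^2/\omega_n$ term, and the choice of the \emph{feasible} competitor $\bm{\Phi}^*_\infty$ (rather than the true $\bm{\Phi}$), which makes the competitor's data-misfit equal to exactly $\tfrac12\delta_n^2$; comparing against $\bm{\Phi}$ itself would instead give $\limsup_n\mathcal{R}(\bm{\Phi}^*_n)\le\mathcal{R}(\bm{\Phi})$, which together with $\mathcal{R}(\bm{\Phi}^*_\infty)\le\mathcal{R}(\bm{\Phi})$ also closes the argument.
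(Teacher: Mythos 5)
Your proposal is correct and follows essentially the same route as the paper: the classical Tikhonov convergent-regularization argument, using the minimality of $\bm{\Phi}_n^*$ against a fixed competitor to obtain $\pnhat\bm{\Phi}_n^*\to\pnhat\bm{\Phi}$ and a uniform bound on $\mathcal{R}(\bm{\Phi}_n^*)$, then coercivity, the subsequence--subsubsequence \Cref{lem:seqlemma}, and uniqueness of the constrained minimiser. The only cosmetic difference is that you test against the feasible point $\bm{\Phi}_\infty^*$ throughout, whereas the paper tests against $\bm{\Phi}$ for the compactness step and against an arbitrary feasible $\bm{\Theta}$ for the optimality step; both close the argument identically, as you yourself note.
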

\begin{proof}
    Since $\mathcal{R}$ is strictly convex, continuous, coercive, and bounded below it follows that \cref{eq:Rminimiser} has a unique solution $\bm{\Phi}_\infty^*$. By \Cref{lem:seqlemma}, it will suffice to show that every subsequence of $\bm{\Phi}_n^*$ has a convergent subsubsequence, and that the limit of this subsubsequence solves \cref{eq:Rminimiser} and is thus equal to $\bm{\Phi}_\infty^*$. 

     Observe that by (i-ii), the nonnegativity of $\mathcal{R}$, and the definition of $\bm{\Phi}_n^*$:
    \begin{equation}\label{eq:Jlimit}
        0 \leq \omega_n \mathcal{R}(\bm{\Phi}_n^*) + \frac12\|\pnhat\bm{\Phi}_n^* -  \bm{\hat\Phi}_n \|_F^2 \leq \omega_n \mathcal{R}(\bm{ \Phi}) + \frac12\|\pnhat\bm{ \Phi} -  \bm{\hat\Phi}_n \|_F^2 \to 0.
    \end{equation}
    Hence by (iii) and the definition of $\delta_n$:  
    \begin{equation}
        0 \leq \mathcal{R}(\bm{\Phi}_n^*) \leq  \mathcal{R}(\bm{\Phi}_n^*) + \frac1{2\omega_n}\|\pnhat\bm{\Phi}_n^* - \bm{\hat \Phi}_n \|_F^2 \leq  \mathcal{R}(\bm{\Phi}) + \frac{\delta_n^2}{2\omega_n} \to \mathcal{R}(\bm{\Phi}),
    \end{equation}
    
    and therefore $\{\mathcal{R}(\bm{\Phi}_n^*)\}_{n \in \mathbb{N}}$ is bounded. Since $\mathcal{R}$ is coercive, it follows that the $\bm{\Phi}_n^*$ are contained within a compact set. 

    Therefore, every subsequence of $\bm{\Phi}_n^*$ has a convergent subsubsequence. Abusing notation, we will denote an arbitrary such subsubsequence by $\bm{\Phi}_n^*$ (and likewise for the corresponding subsubsequences of $\omega_n$, $\bm{\hat\Phi}_n$, and $\delta_n$), and denote its limit by $ \bm{\tilde\Phi}$. It remains to show that $\bm{\tilde \Phi}$ solves \cref{eq:Rminimiser}.

    We first show that $\pnhat\bm{\tilde \Phi} = \pnhat\bm{ \Phi}$. Note that $\pnhat\bm{\tilde \Phi} = \lim_{n \to \infty}\pnhat \bm{\Phi}_n^*$, that by \cref{eq:Jlimit} and the nonnegativity of $\mathcal{R}$ we have that $\|\pnhat \bm{\Phi}_n^* - \bm{\hat \Phi}_n \|_F \to 0 $, and that by (i) $\bm{\hat \Phi}_n \to\pnhat\bm{ \Phi}$. The claim follows. 

    Finally, we show that $\mathcal{R}(\bm{\tilde\Phi}) \leq \mathcal{R}(\bm{\Theta})$ for all $\bm{\Theta}$ such that $\pnhat\bm{\Theta}= \pnhat\bm{ \Phi}$. Observe that by the definitions of $\bm{\Phi}_n^*$ and $\delta_n$:
    \begin{align}
        \omega_n \mathcal{R}(\bm{\Phi}_n^*) &\leq \omega_n \mathcal{R}(\bm{\Phi}_n^*) + \frac12\|\pnhat\bm{\Phi}_n^* - \bm{\hat \Phi}_n \|_F^2 \\ &\leq \omega_n \mathcal{R}(\bm{\Theta}) + \frac12\|\pnhat\bm{\Theta} - \bm{\hat \Phi_n} \|_F^2 \\& = \omega_n \mathcal{R}(\bm{\Theta}) + \frac12\delta_n^2 
    \end{align}
    and therefore by (iii) and since $\mathcal{R}$ is continuous
    \begin{equation}
        \mathcal{R}(\bm{\tilde \Phi}) = \lim_{n\to \infty} \mathcal{R}(\bm{\Phi}_n^*) \leq \lim_{n\to \infty} \mathcal{R}(\bm{\Theta}) + \frac{\delta_n^2}{2\omega_n} = \mathcal{R}(\bm{\Theta}). 
    \end{equation}
\end{proof}

\section{Numerical results}
\label{sec:numerical}
In this section we quantify the performance of the multi-fidelity approach for a suite of diverse problems in computational physics. These problems are governed by different physical models that include the equations of linear elasticity, Darcy's flow, Euler--Bernoulli beam theory, and the incompressible Navier Stokes equations. The data types considered include vectors of quantities of interest (QoIs), and one- and two-dimensional fields, where the dimension of the data space, $D$, ranges from $5$ to $10,201$. The size of the low-fidelity datasets ranges from $3,000$ to $10,000$, and in each case the fraction of the number of high-fidelity data points to the number of low-fidelity data points is small; it lies between $0.5\%$ and $3.3\%$. These attributes are summarized in Table \ref{tab:summary}. Each of the five case studies reported in the table is presented in detail in the five subsections below, followed by a sixth subsection on uncertainty of the multi-fidelity estimates. For all problems, we use a normalized graph Laplacian with $p=q=\frac{1}{2}$.

\new{For all of the experiments,  $\nbar$ is sufficiently small as to not require the numerical approximations described in \Cref{sec:evaluating}, and therefore in all of the below we will take $K = N$, i.e., we will use all of the eigenvectors of the graph Laplacian. In forthcoming work we will explore the effect of varying $K$ on the multi-fidelity accuracy.}

To measure the accuracy of the low- and multi-fidelity data for problems where the data points are vectors of quantities of interest, we compute the relative absolute difference with respect to the high-fidelity data at every point $i$ and for every component $k$. For the low-fidelity data this is given by
\begin{equation}
    e_k^{(i)} = \frac{| \bar{u}^{(i)}_k - \hat{u}^{(i)}_k|}{\frac{1}{\nbar} \sum_{j = 1}^{\nbar}| \hat{u}^{(j)}_k | } \times 100\%, 
    \qquad i\in\{1,\,\dots\,,\nbar\}, \quad k\in \{1,\,\dots,\,D\}.
    \label{eq:error-qoi}
\end{equation}
The expression for the error in the multi-fidelity data is identical, except in the equation above $\bar{u}^{(i)}_k$ is replaced by the multi-fidelity estimate for $u^{(i)}_k$.

When the quantities of interest are  discrete representations of fields, it may be more meaningful (and easier to visualize) to quantify the performance of the low- and multi-fidelity models by computing the $l_2$ norm of the difference from the high-fidelity data, and normalizing this by the average $l_2$ norm of all high-fidelity data. For the low-fidelity data, this is given by
\begin{equation}
    e^{(i)} = \frac{\| \ubari - \uhati \|_2}{\frac{1}{\nbar} \sum_{j=1}^{\nbar} \| \uhat^{(j)} \|_2}  \times 100\%, \qquad i\in\{1,\,\dots\,,\nbar\}.
    \label{eq:error-field}
\end{equation}
The expression for the error in the multi-fidelity data is identical, except in the equation above $\ubari$ is replaced by the multi-fidelity estimate for $\bm{u}^{(i)}$.

The average errors for low- and multi-fidelity data for all problems are also reported in Table \ref{tab:summary}.
It is observed that in each case the multi-fidelity approach significantly improves the accuracy of the low-fidelity data, with a percentage reduction in error that varies from $75\%$ to $86\%$. 

\begin{table}[hbt!]
\centering
\begin{adjustbox}{width=0.995\textwidth}
\begin{tabular}{c c c c c c }
\hline
\rule{0pt}{3ex} \textsc{Problem} & \textsc{Case 1} & \textsc{Case 2} & \textsc{Case 3} & \textsc{Case 4} & \textsc{Case 5} \tabularnewline
\hline 
\rule{0pt}{3ex} Physical model & Elasticity & Elasticity & Darcy & Beam & Navier Stokes \tabularnewline
\rule{0pt}{3ex} Data dimension $D$ & \caseonedim{} & \casetwodim{} & \casethreedim{} & \casefourdim{} & \casefivedim{}  \tabularnewline
\rule{0pt}{3ex} Data type & QoIs & 2D Field & 2D Field & 1D Field & 1D Field  \tabularnewline
\rule{0pt}{3ex} LF data $\nbar$ & \caseonelfn{} & \casetwolfn{} & \casethreelfn{} & \casefourlfn{} & \casefivelfn{}  \tabularnewline
\rule{0pt}{3ex} HF data $\nhat$ & \caseonehfn{} (\caseonefractionn{}) & \casetwohfn{} (\casetwofractionn{}) & \casethreehfn{} (\casethreefractionn{}) & \casefourhfn{} (\casefourfractionn{}) & \casefivehfn{} (\casefivefractionn{}) \tabularnewline
\rule{0pt}{3ex} \new{Cost ratio} & $\approx 1,000$ & $\approx 15,000$ & $\approx 5,000$ & N/A & $\approx 9,400$ \cite{CHENG2024116793}
\vspace{0.2cm}
\tabularnewline
\hline
\rule{0pt}{3ex} LF error & \caseonelferr{} (\caseonelfstd) & \casetwolferr{} (\casetwolfstd) & \casethreelferr{} (\casethreelfstd) & \casefourlferr{} (\casefourlfstd) & \casefivelferr{} (\casefivelfstd) \tabularnewline
\rule{0pt}{3ex} MF error & \caseonemferrbf{} (\caseonemfstd) & \casetwomferrbf{} (\casetwomfstd) & \casethreemferrbf{} (\casethreemfstd) & \casefourmferrbf{} (\casefourmfstd) & \casefivemferrbf{} (\casefivemfstd) \tabularnewline
\rule{0pt}{3ex} Error reduction & \caseoneerrred{} & \casetwoerrred{} & \casethreeerrred{} & \casefourerrred{} & \casefiveerrred{} \tabularnewline
\hline 
\end{tabular}
\end{adjustbox}
\vspace{2mm}
\caption{\label{tab:summary} Summary of the attributes of the numerical experiments (top) and the \new{accuracy of the low- and multi-fidelity models (bottom). In particular, we list the data dimensions, the datasets size, the ratio of HF data used, and an estimate of the ratio between the computational cost of the high- and low-fidelity models. For the beam problem, the low-fidelity model is an analytical model with negligible cost.
Finally, we report the mean and the standard deviation (in parenthesis) of the error distribution of the low- and multi-fidelity models.}}
\end{table}

\subsection{Force and traction attributes of an elastic body with a stiff inclusion}
\label{sec:tracion_prob} 

We consider a soft square sheet in plane stress with an internal stiffer elliptic inclusion. The length of the edge of the square is $L =10 \, \mathrm{cm}$, and its Young's modulus is $E = 1 \mathrm{MPa}$, whereas the Young's modulus of the inclusion is a parameter. Both background and the inclusion are incompressible. The bottom edge of the square is fixed, 
while a uniform downward displacement of $v_0 = -5 \mathrm{mm}$ is applied to the top edge. The vertical edges are traction-free in both horizontal and vertical directions, and the top edge is traction free only in the horizontal direction (Figure \ref{fig:square-inclusion}). The objective is to predict attributes of the vertical traction field on the upper edge as a function of the shape, stiffness, orientation and location of the inclusion. This problem is described in detail in \cite{pinti2023graph} and is motivated by the need to identify stiff tumors within a soft background tissue, which is particularly relevant to detecting and diagnosing breast cancer tumors \cite{sarvazyan2012mechanical,barbone2010review}.

\begin{figure}
    \centering
    \includegraphics[width=0.45\textwidth]{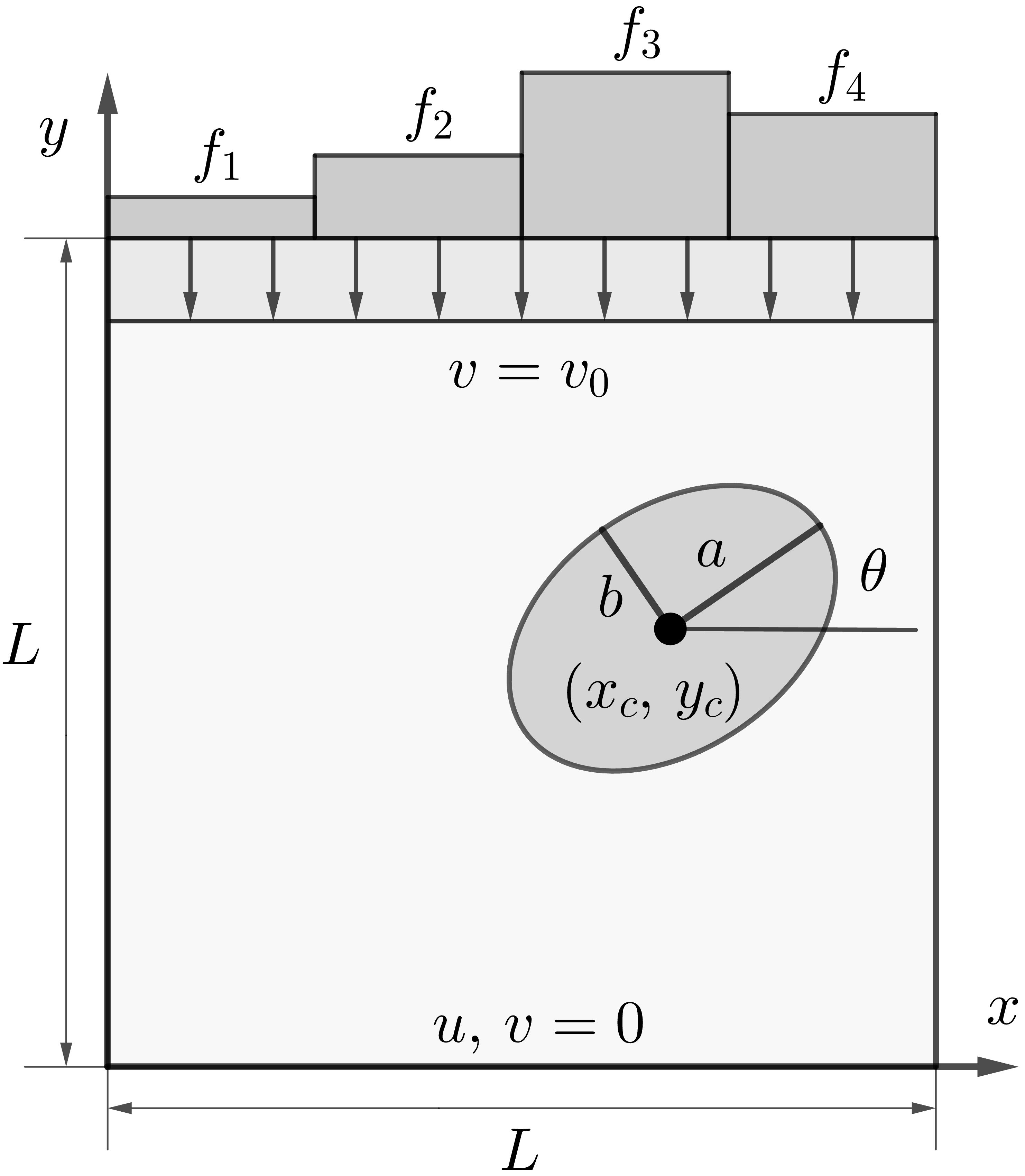}
    \caption{Schematic of the elastic body (light grey) with an elliptic stiffer inclusion (dark grey) for the elasticity problems (Case 1 and 2). The square is compressed on top with a uniform displacement $v=v_0$, while the bottom is fixed (Case 1). 
    }
    \label{fig:square-inclusion}
\end{figure}

\paragraph*{Parameters and quantities of interest} 
The input parameters include the coordinates of the center $(x_c,\,y_c)$ of the inclusion, its orientation $\theta$, Young's Modulus $E_I$, and major and minor semi-axes. The range of these parameters is reported in Table \ref{tab:Params_range} and when generating the low-fidelity data they are sampled from a uniform distribution within this range. 
The output quantities of interest are the localized vertical forces on the top edge which are determined by dividing the top edge into $4$ sections of equal length and integrating the vertical traction $\sigma_{yy}$ over each section. This results in 4 values of localized forces $f_i$, $i \in \{ 1, \dots, 4\}$ (see Figure \ref{fig:square-inclusion}),
\begin{equation}
    f_i = \int_{(i-1)\frac{L}{4}}^{i\frac{L}{4}} \sigma_{yy} (x,\, L) \mathrm{d}x,\;\;\;\;i\in\{1,\,\dots,\,4\}. 
\end{equation}
We also include the maximum value of vertical traction on the top edge as an additional feature, leading to $5$ quantities of interest: $u_i = f_i$, for each $i\in \{1,\,\dots,\,4\}$, and $u_{5} = \max_x |\sigma_{yy}(x,\,L)|=:\sigma_{yy}^{\mathrm{max}}$. As the location, orientation and size of the inclusion is varied, the traction field on the top surface changes, which in turn changes the components of the localized forces, and the maximum value of traction.

\begin{table}[hbt!]
\centering
\begin{tabular}{c c c c}
\hline
Parameter & Min & Max & Units\\
\hline
\rule{0pt}{2ex} $x_c$ & 2.5 & 7.5 & cm\\
\rule{0pt}{2ex} $y_c$ & 2.5 & 7.5 & cm\\
\rule{0pt}{2ex} $\theta$ & 0 & 180 & degree\\
\rule{0pt}{2ex} $E_I$ & 3 & 6 & MPa\\
\rule{0pt}{2ex} $a$ & 1 & 2 & cm\\
\rule{0pt}{2ex} $b$ & 1 & 2 & cm\\
\hline
\end{tabular}%
\hspace{1cm}
\begin{tabular}{c c c c}
\hline 
\end{tabular}
\vspace{2mm}
\caption{\label{tab:Params_range} Range spanned by the input parameters for the elasticity problem (Case 1).}
\end{table}

\paragraph*{Low- and high-fidelity models} 
We employ two finite element method-based solvers differing in mesh density to solve the problem. The low-fidelity model uses a coarse mesh with 200 triangular elements, and the high-fidelity model uses a fine mesh with 20,000 elements. The solution for high-fidelity model is verified to be mesh converged. 
\new{The ratio of the computational costs of the two models is a power of the mesh size ratio. The exact exponent lies between 1 and 2, depending on the solver and the problem. A typical value is approximately 1.5, leading to a computational cost ratio of around 1,000.}

\paragraph*{Numerical results} We sample $\nbar=$ \caseonelfn{} instances of the input parameters from a uniform distribution and run the low- and high-fidelity models. Thereafter, we normalize the low- and high-fidelity quantities of interest using (\ref{eq:norm-qoi}), so that each low-fidelity component has zero-mean and unit standard deviation.
We then use $\nhat=$ \caseonehfn{} high-fidelity data points (\caseonefractionn{} of the total) for generating the multi-fidelity data, and the remainder for testing the performance of the method. 

In Figure \ref{fig:datasets-1} we plot the projections of the low-fidelity (column 1), multi-fidelity (column 2) and high-fidelity (column 3) data points on four mutually orthogonal planes (rows 1-4). In the low-fidelity plots we also indicate (in blue) the points whose high-fidelity counterparts are used to compute the multi-fidelity data. For each plane, we observe that the multi-fidelity point cloud is closer in shape and form to the high-fidelity point cloud when compared with its low-fidelity counterpart.

We quantify the accuracy of the low- and multi-fidelity data via the error defined in (\ref{eq:error-qoi}), and compare the distribution of these two errors for each component in Figure \ref{fig:error-dist-prob-1}. For every component, the distribution of the error for the multi-fidelity data is closer to zero, and is narrower when compared with the distribution of the error for the low-fidelity data. 
In Table \ref{tab:errors} we report the mean \new{and the standard deviation of the error distributions} across all points for each component separately, while in Table \ref{tab:summary} we report the average of these values. 
We observe that the multi-fidelity update has reduced the error in the low-fidelity data by 70--84\%. 

\begin{table}[hbt!]
\centering
\begin{adjustbox}{width=0.995\textwidth}
\begin{tabular}{c c c c c c }
\hline 
\rule{0pt}{3ex} \textsc{Quantity of interest} & $f_{1}$ & $f_{2}$ & $f_{3}$ & $f_{4}$ & $\sigma_{yy}^{\mathrm{max}}$ \tabularnewline
\hline 
\rule{0pt}{3ex} Low-fidelity error & 5.74 (2.61) & 4.8 (4.81) & 7.57 (5.64) & 4.93 (4.72) & 11.04 (7.78) \tabularnewline
\rule{0pt}{3ex} Multi-fidelity error & \textbf{0.92} (0.97) & \textbf{1.45} (1.86) & \textbf{1.66} (1.85) & \textbf{0.96} (1.14) & \textbf{3.36} (4.62) \tabularnewline
\rule{0pt}{3ex} Error reduction & 84 \% & 69.8\% & 78.1\% & 80.5\% & 69.6\% \tabularnewline
\hline 
\end{tabular}
\end{adjustbox}
\vspace{2mm}
\caption{\label{tab:errors} \new{Error in the low- and multi-fidelity data for each output component for the elasticity problem (Case 1), together with its mean and standard deviation (in parenthesis)}.}
\end{table}

\begin{figure}
    \centering
    \includegraphics[width=0.85\textwidth]{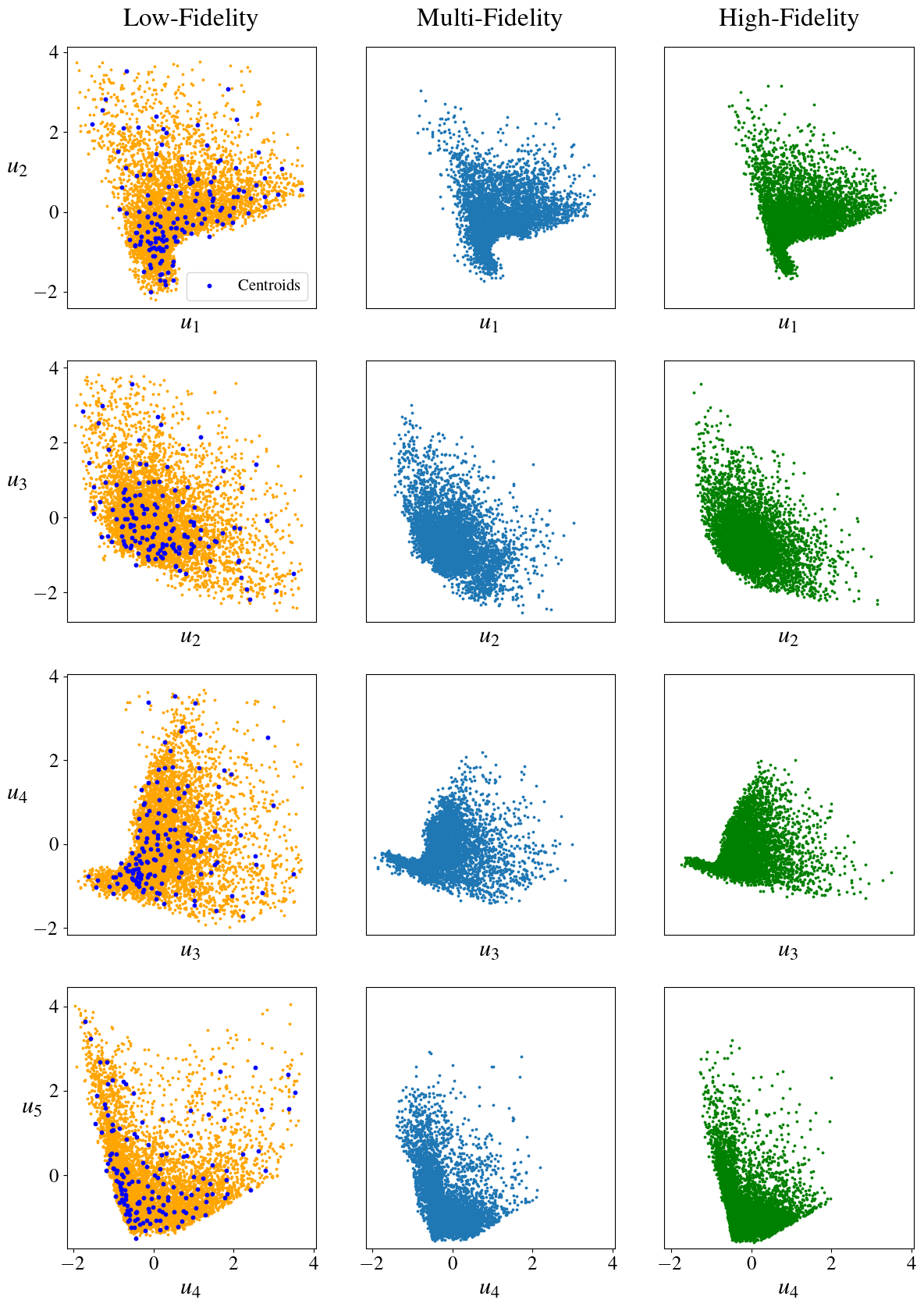}
    \caption{
    \new{The figure shows orthogonal projections of the datasets for the low-fidelity (left column), multi-fidelity (center column), and high-fidelity (right column) models for the elasticity problem (Case 1). 
    Each row shows a different orthogonal plane. The low-fidelity data points, in column 1, exhibit greater range compared to the high-fidelity point cloud. We note how the multi-fidelity points show better agreement with the target high-fidelity distribution.}
    }
    \label{fig:datasets-1}
\end{figure}

\begin{figure}
    \centering
    \includegraphics[width=0.5\textwidth]{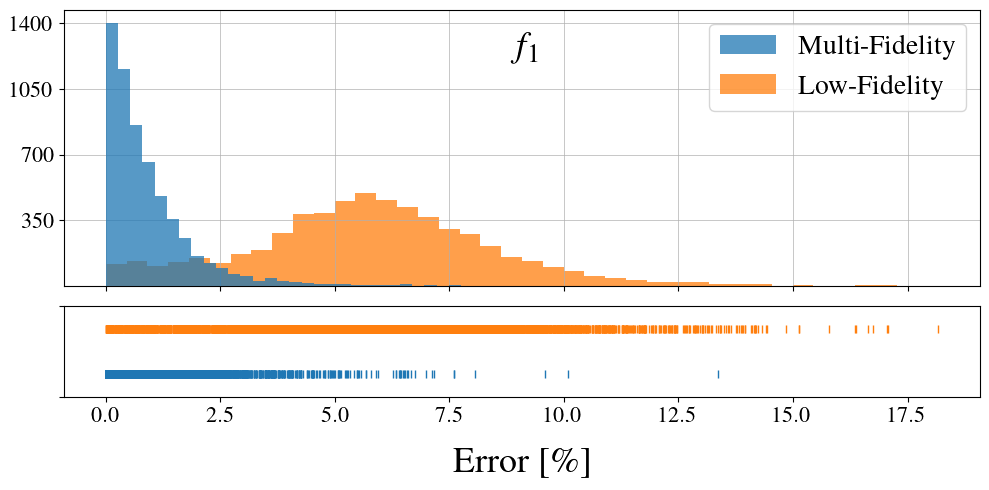}
    \includegraphics[width=0.5\textwidth]{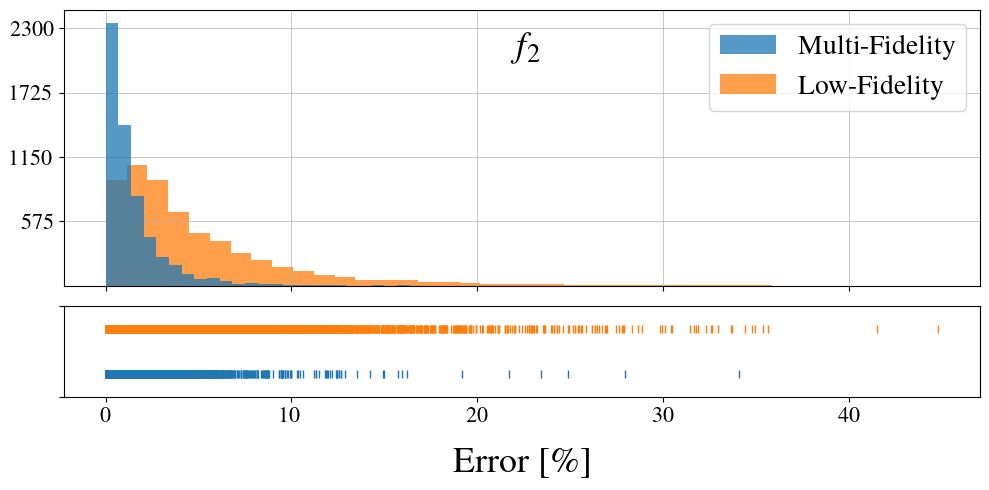}
    \includegraphics[width=0.5\textwidth]{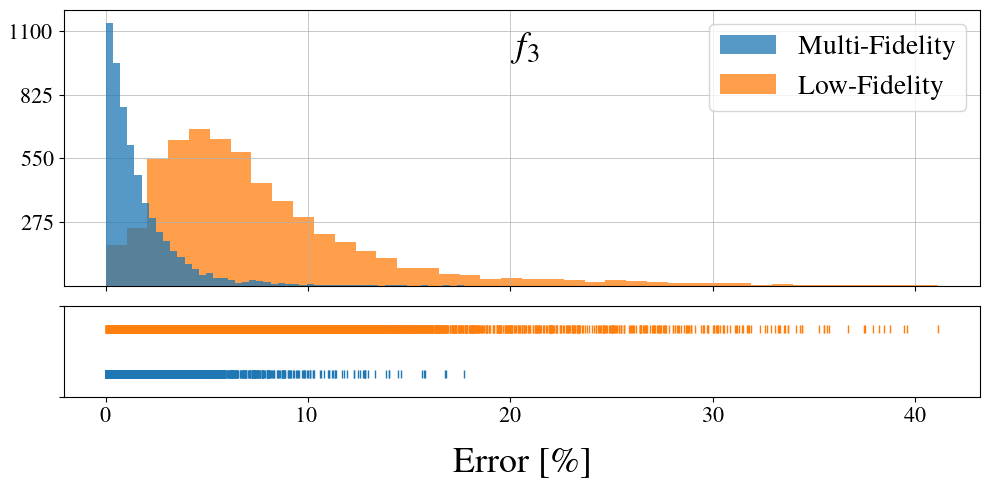}
    \includegraphics[width=0.5\textwidth]{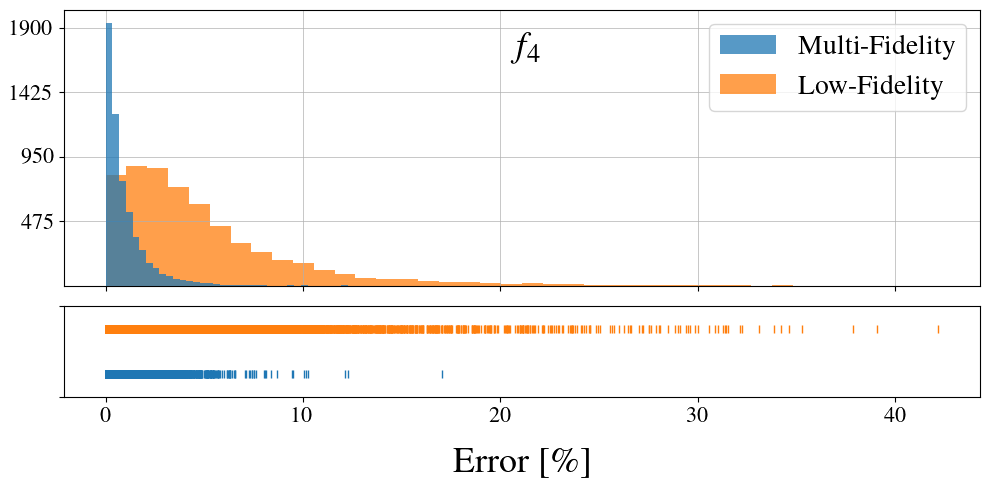}
    \includegraphics[width=0.5\textwidth]{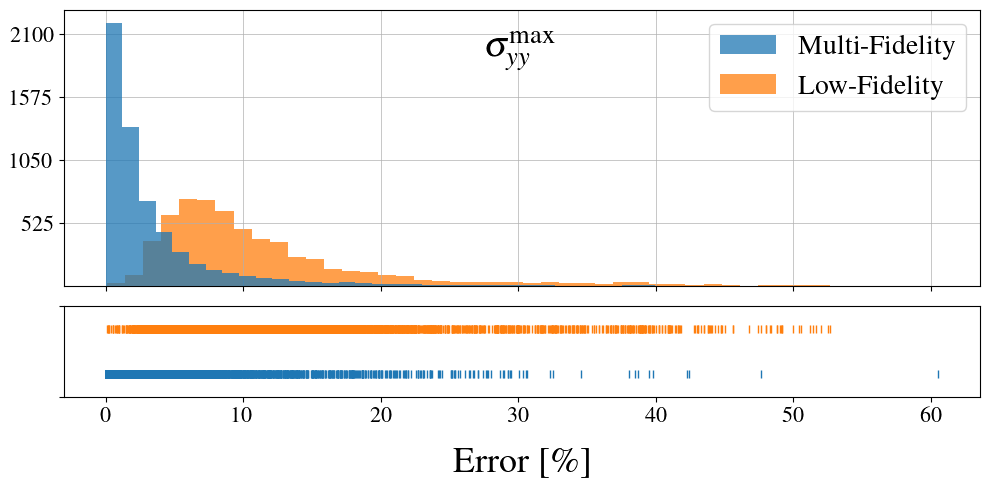}
    \caption{
    \new{Histogram plots of the error distribution for the low- and multi-fidelity data for the elasticity problem (Case 1). Each plot shows the distribution of the error of the two models for one component of the vector of quantities of interest across the whole datasets. The values of mean and standard deviation of these distributions are reported in Table \ref{tab:errors}.}}
    \label{fig:error-dist-prob-1}
\end{figure}

\subsection{Displacement field of an elastic body with a stiff inclusion}
\label{sec:displacement_problem}
The physical model for this problem is the same as in the previous problem and most of the parameters are also the same. The differences are (a) the modulus of the background is $E = 3 \, \mathrm{MPa}$, (b) on the top boundary we prescribe a uniform traction instead of uniform displacement, and (c) the center of the inclusion is fixed at the centre of the square domain.

\paragraph*{Parameters and quantities of interest} 
The input parameters for this problem include the orientation of the elliptical inclusion $\theta$, its Young's Modulus $E_I$, its major semi-axis $a$, and the uniform vertical traction applied on the top edge $t_y$.  The minor semi-axis is set to $b = \frac{1}{\pi a}$, to maintain a constant area for the elliptical inclusion. In Table \ref{tab:Params_range_2} we provide the minimum and maximum values for these parameters. The output quantity of interest is the vertical displacement field $u_y(x,y)$ in the entire domain, sampled at $101^2=10,201$ points on a uniform grid. 
We note that for every value of applied traction, $t_y$, the vertical displacement within the elastic body can be decomposed into a part that varies linearly from zero at $y=0$ to a maximum value at $y = 1$ and another part that is nonlinear in $y$. For the linear part of the displacement the value at $y=1$ edge is set to the average vertical displacement at this edge. We also note that both the low- and high-fidelity models are able to capture the linear part accurately. Therefore, the main goal of the multi-fidelity approach is to use the high-fidelity data to improve the accuracy of the non-linear part of the vertical displacement. For this reason, for every realization of the input parameters, we compute the linear part of the displacement from the low-fidelity model and subtract it from the low-fidelity data and high-fidelity data.

\begin{table}[hbt!]
\centering
\begin{tabular}{c c c c}
\hline
Parameter & Min & Max & Units\\
\hline
\rule{0pt}{2ex} $\theta$ & 0 & 180 & degree\\
\rule{0pt}{2ex} $E_I$ & 9 & 18 & MPa\\
\rule{0pt}{2ex} $a$ & 0.85 & 2.5 & cm\\
\rule{0pt}{2ex} $\sigma_{\mathrm{yy}}^{\mathrm{top}}$ & $10^2$ & $10^5$ & Pa\\
\hline
\end{tabular}%
\hspace{1cm}
\begin{tabular}{c c c c}
\hline 
\end{tabular}
\vspace{2mm}
\caption{\label{tab:Params_range_2} Range spanned by input parameters for elasticity problem (Case 2).}
\end{table}

\paragraph*{Low- and high-fidelity models} 
We use two finite element-based models to compute the low- and high-fidelity data. The high-fidelity model employs a structured triangular mesh with $20,000$ elements, while the low-fidelity model uses a much coarser structured triangular mesh with only $32$ elements. 
\new{Given the mesh size ratio of $\frac{20,000}{32}=625$, and assuming an exponent for the cost power law of 1.5, the ratio of the computational cost of the two models is approximately 15,000.}
We interpolate the low- and high-fidelity solutions on to the same $101 \times 101$ uniform grid to ensure the dimensionality $D = $ \casetwodim{}.

\paragraph*{Numerical results} 
We consider $\nbar=$ \casetwolfn{} pairs of low- and high-fidelity data, obtained by sampling the input parameters from a uniform distribution. Each low-fidelity data is scaled using (\ref{eq:norm-fields}), and corresponding high-fidelity data are scaled by the same procedure.
We compute the multi-fidelity model with $\nhat=$ \casetwohfn{} high-fidelity training data points (\casetwofractionn{} of the total).
Figure \ref{fig:datasets-2} shows examples of the resulting low- and multi-fidelity fields, along with their differences relative to the high-fidelity solutions. We observe that error in the multi-fidelity field is much smaller. 

We quantify the performance of the low- and multi-fidelity models via the error defined in (\ref{eq:error-field}).
Figure \ref{fig:error-dist-prob-2} is a histogram of the distribution of these errors for the low- and multi-fidelity data. From this plot we observe that the multi-fidelity data has significantly lower errors, and the spread in the error is also smaller. The mean \new{and standard deviation of the error for the low-fidelity data are \casetwolferr{}$\%$ and \casetwolfstd{}, whereas for the multi-fidelity data the values are \casetwomferr{}$\%$ and \casetwomfstd{}, respectively. This amounts to a \casetwoerrred{} reduction in the mean error (see Table \ref{tab:summary}).}

\begin{figure}
    \centering
    \includegraphics[width=0.95\textwidth]{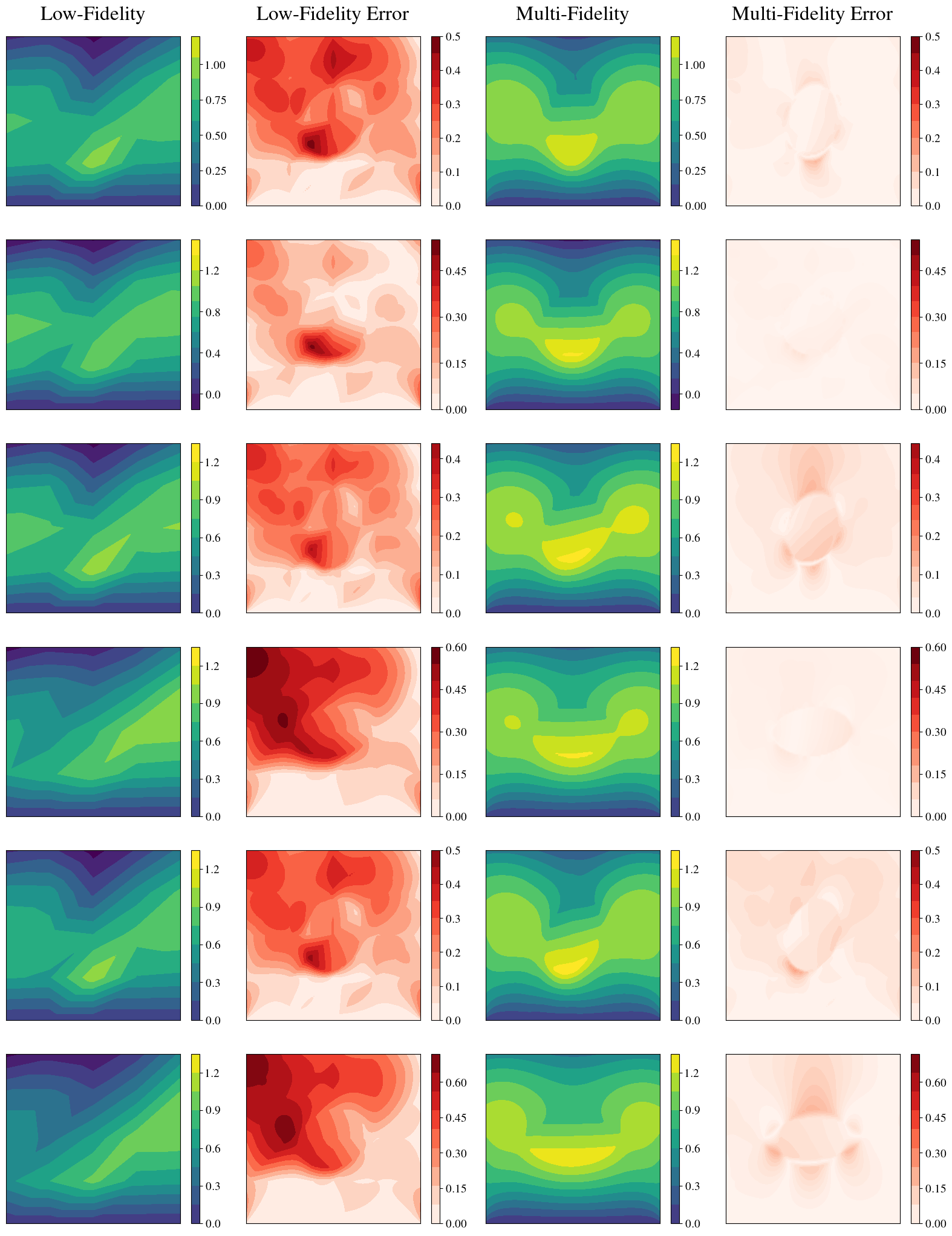}
    \caption{
    \new{Typical instances of the non-linear part of the vertical displacement fields from low- and multi-fidelity models for the elasticity problem (Case 2). The figure includes plots of point-wise error between each model and the high-fidelity field.}
    }
    \label{fig:datasets-2}
\end{figure}

\begin{figure}
    \centering
    \includegraphics[width=0.8\textwidth]{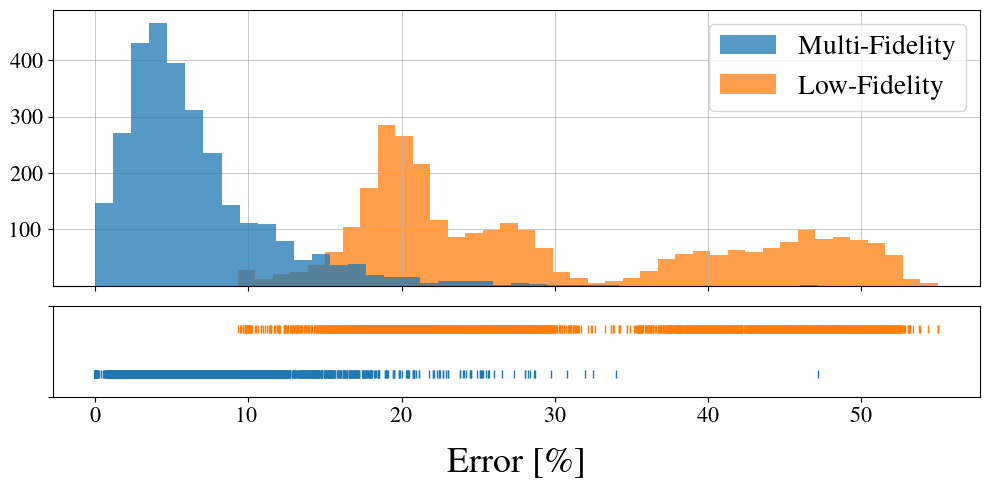}
    \caption{
    \new{Histogram of error distribution for low- and multi-fidelity datasets for the elasticity problem (Case 2). The error distribution for the low-fidelity data has a mean of \casetwolferr{}\% and a standard deviation of \casetwolfstd{}. In contrast, the multi-fidelity data shows a reduced mean error of \casetwomferr{}\% with a standard deviation of \casetwomfstd{}.}
    }
    \label{fig:error-dist-prob-2}
\end{figure}

\subsection{Darcy flow}
\label{sec:darcy_problem}
 
We consider the two-dimensional problem of a fluid percolating through a porous medium characterized by a non-homogeneous permeability field. This is commonly referred to as Darcy flow, and it is described by the following equations
\begin{align}
   - \nabla \cdot (\eta(\bm{x}) \nabla p(\bm{x})) &= g(\bm{x}), \qquad \bm{x} \in \Omega\\
   \int_{\Omega} p(\bm{x}) \mathrm{d}\bm{x} = 0.
\end{align}
where $\bm x = (x,y)$, $p(\cdot)$ is the pressure field, $\eta(\cdot)$ is the permeability field, $g(\cdot)$ is a source term, and $\Omega = [0, 1]^2$ is the domain of interest. The zero-mean condition for the pressure guarantees the uniqueness of the solution.
We consider a forcing term $g$ defined by a source at the top left corner, and a sink at the bottom right corner (see Figure \ref{fig:darcy-schematic}), i.e.,
\begin{equation}    
    g(\bm{x}) = 
    \begin{cases}
      +1, & \text{if } x^2 + (y - 1)^2 < 0.15^2, \\
      -1, & \text{if } (x - 1)^2 + y^2 < 0.15^2, \\
      0, & \text{otherwise}.
    \end{cases}
\end{equation}

\begin{figure}
    \centering
    \includegraphics[width=0.65\linewidth]{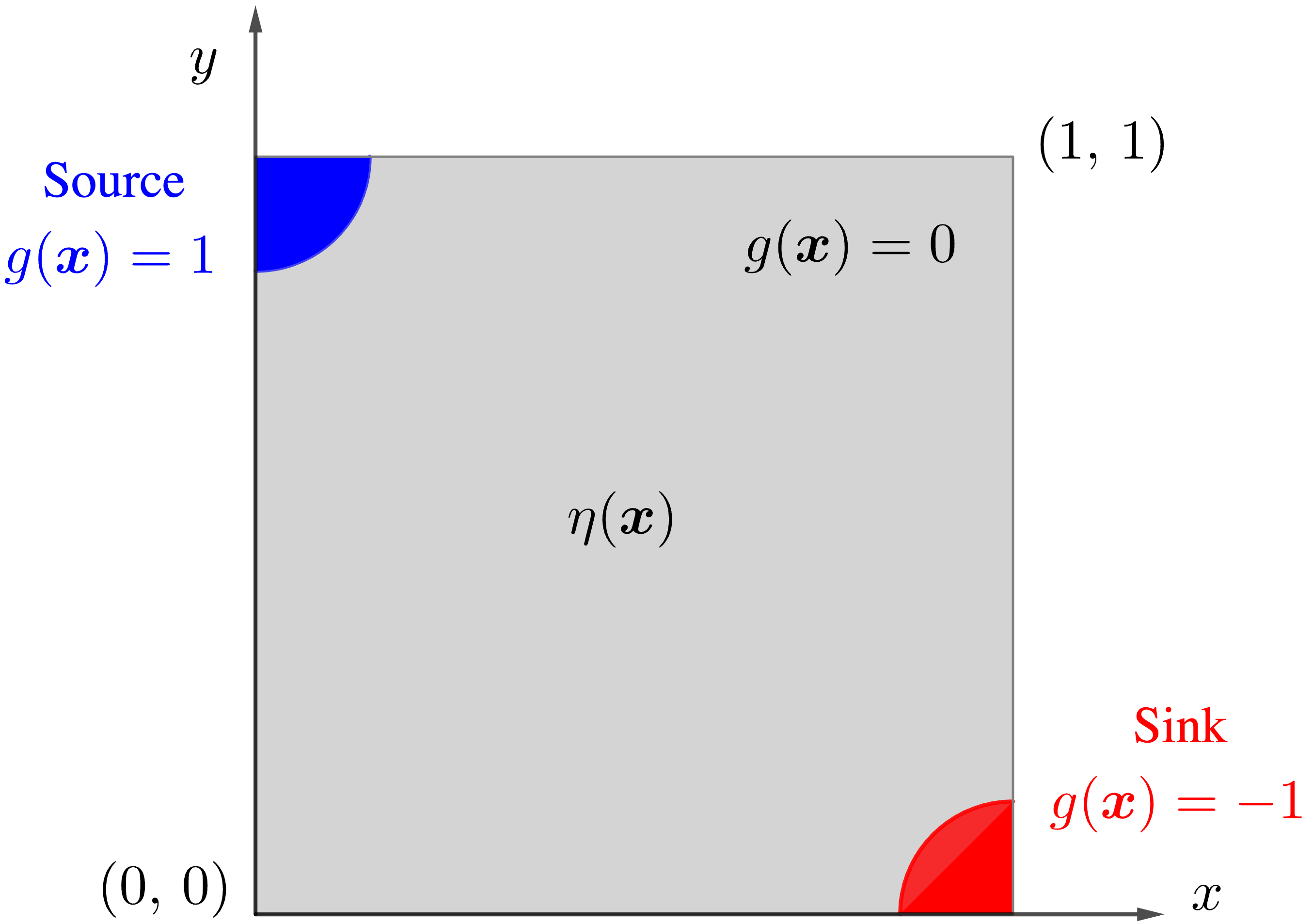}
    \caption{Schematic of the domain, source term and permeability field for the Darcy flow problem (Case 3).
    \new{A source term located in the top left corner injects fluid into the domain, while a sink in the bottom right corner allows the fluid to exit. The input parameter is an instance of permeability field $\eta(\bm{x})$, and the quantity of interest is the resulting pressure field $p(\bm{x})$ of the fluid percolating through the medium.}
    }
    \label{fig:darcy-schematic}
\end{figure}
We generate instances of $\eta(\cdot)$ by sampling a Gaussian process with a prescribed covariance and length scale, and take its exponential to ensure non-negativity of the permeability. That is, 
\begin{align}
   \eta(\bm x) &= \exp \left( {G( \bm x)} \right),\; G(\cdot) \sim \mathcal{N}(0, k(\cdot,\,\cdot)) \\
   k(\bm x,\, \bm x ') &= \exp \left( - \frac{\|\bm x - \bm x' \|^2_2}{ (0.25)^2} \right).
\end{align}

\paragraph*{Low- and high-fidelity models} 
We compute both low- and high-fidelity solutions using finite element-based models. The high-fidelity model employs a mesh of $20,000$ triangular elements, while the low-fidelity model uses a mesh with $72$ triangular elements. 
\new{With a mesh size ratio of $\frac{20,000}{72}=278$, and assuming an exponent for the cost power law of 1.5, the ratio of the computational cost of the two models is approximately 5,000.}
We interpolate both low- and high-fidelity solutions onto a uniform grid of $101^2=10,201$ points to ensure they have the same dimension.

\paragraph*{Parameters and quantities of interest} 
The input to the problem is an instance of the permeability field $\eta^{(i)}(\bm x)$. We generate this field efficiently by utilizing a truncated Karhunen–Loève expansion \cite{Huang-KL, Stefanou2007AssessmentOS}, retaining the first 200 terms. 
The quantity of interest is the resulting pressure field $p^{(i)}(\bm x)$ defined over the whole domain on a uniform grid of $101 \times 101$ points. Therefore, for this problem $D =$ \casethreedim{}. 

\paragraph*{Numerical results}
The low-fidelity dataset consists of $\nbar=$ \casethreelfn{} data points, and we use $\nhat=$ \casethreehfn{} high-fidelity training points (\casethreefractionn{} of the total) to compute the multi-fidelity estimates.

In Figure \ref{fig:datasets-3}, we have plotted four instances of the input permeability, and the corresponding low-and multi-fidelity pressure fields normalized with Eq. (\ref{eq:norm-fields}), as well as the difference between these fields and their high-fidelity counterpart. We observe that while for all cases the pressure field varies from a large value on the top-left corner to a small value on the bottom-right corner, the permeability field has a significant effect on this variation. Generally speaking, large permeability values lead to a uniform distribution of pressure, while small values lead to sharper changes in pressure. The low- and multi-fidelity pressure fields are qualitatively similar, however on closer look there are discernible differences. This is made clear by comparing the error fields for the two, which are plotted on the same scale. The error in the low-fidelity field is much larger when compared with the multi-fidelity field.

In Figure \ref{fig:error-dist-prob-3} we have plotted the histogram for the error distributions (as defined in Eq. (\ref{eq:error-field})) for the low-and multi-fidelity data. Once again we observe that the multi-fidelity data has much smaller error and that this error is more tightly centered about its mean. \new{The mean and standard deviation of the errors for the low-fidelity model are \casethreelferr{}$\%$ and \casethreelfstd{}, respectively, whereas for the multi-fidelity model we observe a mean and standard deviation of \casethreemferr{}$\%$ and \casethreemfstd , respectively (see Table \ref{tab:summary}). This implies a \casethreeerrred{} percentage reduction in the mean error due to the multi-fidelity approach.}

\begin{figure}
    \centering
    \includegraphics[width=0.995\textwidth]{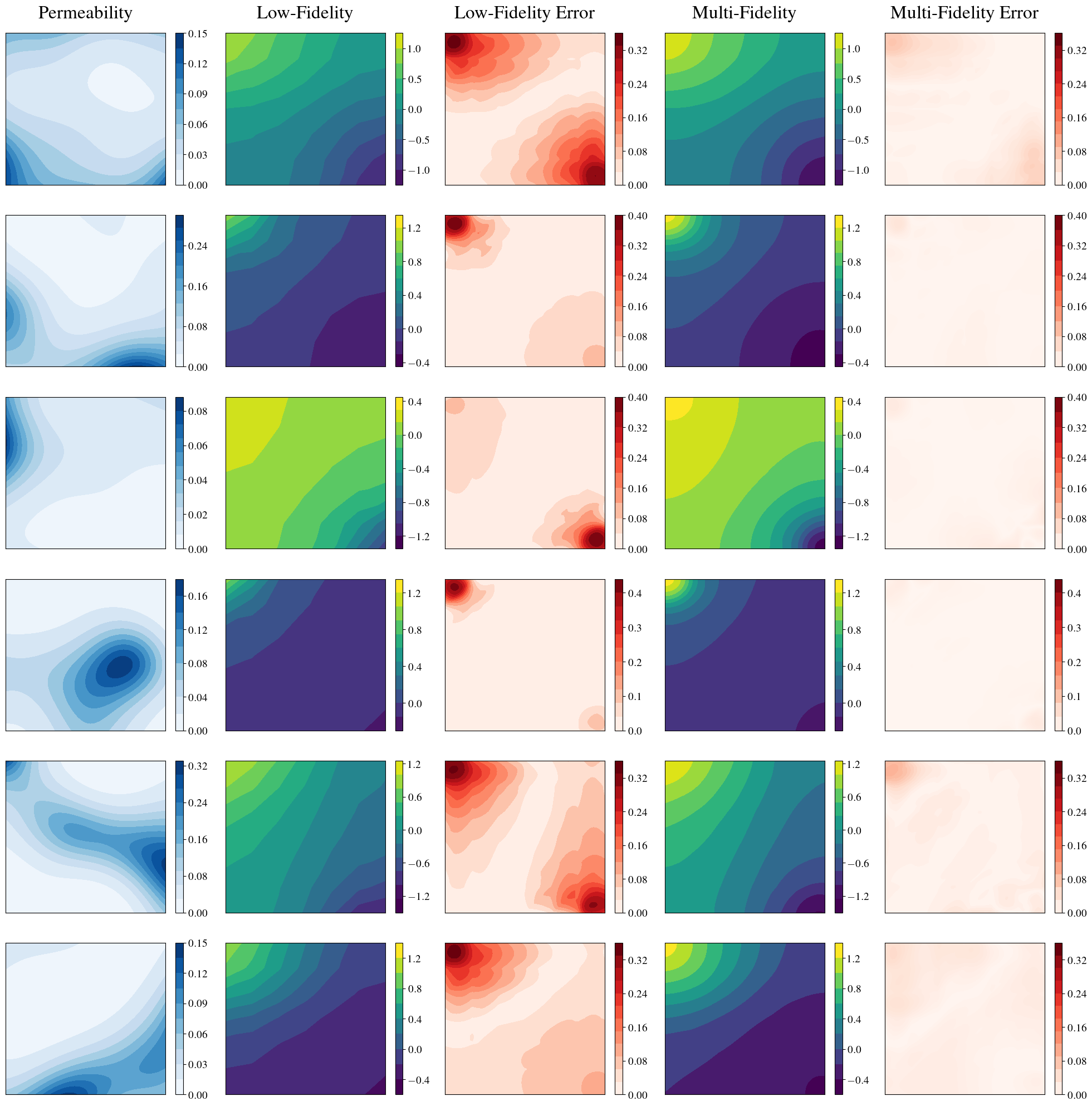}
    \caption{
    \new{Instances of permeability fields and corresponding low- and multi-fidelity pressure distributions for the Darcy flow problem (Case 3). Plots of point-wise error fields with respect to the high-fidelity field are also shown.}
    }
    \label{fig:datasets-3}
\end{figure}

\begin{figure}
    \centering
    \includegraphics[width=0.8\textwidth]{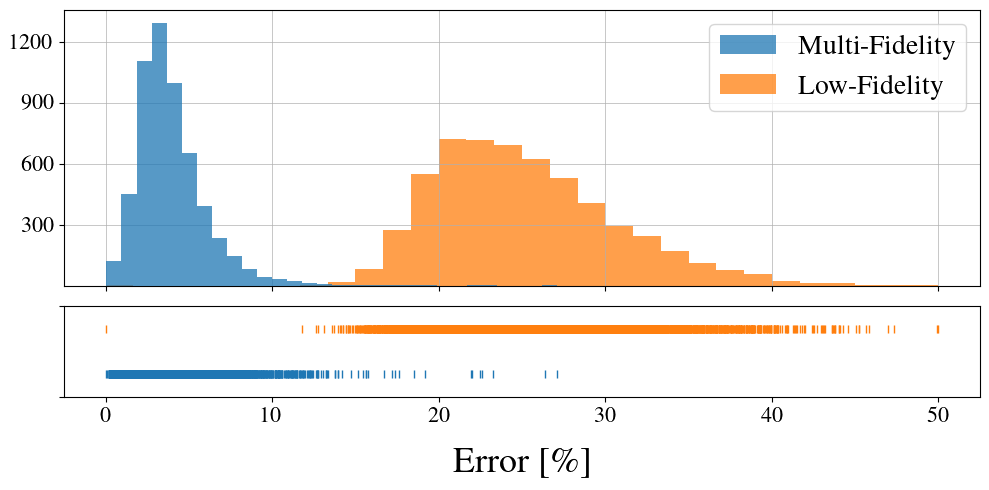}
    \caption{
    \new{Histogram of error distribution for low- and multi-fidelity datasets for the Darcy flow problem (Case 3). The mean and standard deviation of the low-fidelity distribution are \casethreelferr{}\% and \casethreelfstd{}, respectively. On the other hand, the mean and standard deviation of the distribution for the multi-fidelity data are \casethreemferr{}\% and \casethreemfstd{}, respectively.}
    }
    \label{fig:error-dist-prob-3}
\end{figure}

\subsection{Composite cantilever beam}
This problem is described in Cheng \textit{et al.} \cite{CHENG2024116793} and the authors have also provided the data. It involves a composite cantilever beam subject to uniform distributed vertical load. As shown in Figure \ref{fig:beam-schematic}, the cross section of the beam is composed of three materials with different properties, and there are five holes running through the lateral extent of the beam.

\begin{figure}
    \centering
    \includegraphics[width=0.9\textwidth]{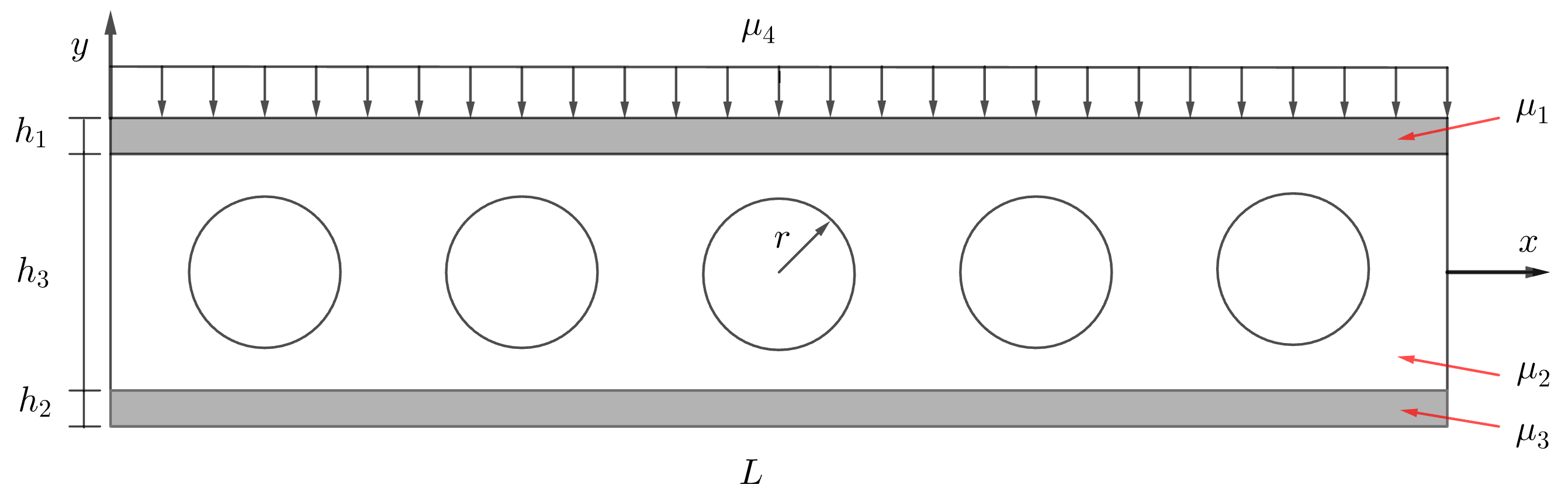}
    \caption{Schematic diagram of the composite cantilever beam subject to a uniform vertical load (Case 4). \new{The presence of the holes is neglected in the low-fidelity model, where an analytical expression for the displacement is derived based on the Euler-Bernoulli beam theory. The parameters include the Young's modulus of the three materials composing the beam, and the vertical load applied.}
    }
    \label{fig:beam-schematic}
\end{figure}

\paragraph*{Low- and high-fidelity models} 
The low-fidelity model is an analytical expression derived from the Euler–Bernoulli beam theory, which ignores the out of plane deformation of the beam and the effect of the holes in the beam. The high-fidelity model is based on the solution of the equations of three-dimensional elasticity using the finite element method.
\new{Because the low-fidelity model is an analytical model, its computational cost is negligible.}
The reader is referred to \cite{CHENG2024116793} for details of the problem and solution techniques.

\paragraph*{Parameters and quantities of interest} 
The inputs of the problem are the Young's moduli of the three components of the beam and the magnitude of the vertical load applied on the top cord ($\mu_1,\,\mu_2,\,\mu_3$ and $\mu_4$, respectively, in Figure \ref{fig:beam-schematic}).
The quantity of interest is the vertical displacement field $w(x)$ of the top cord, which is represented using a uniform grid with 512 points. Therefore, for this problem $D = $ \casefourdim{}.

\paragraph*{Numerical results}
The low-fidelity dataset consists of $\nbar= $ \casefourlfn{} data points, and the high-fidelity points used to compute the multi-fidelity estimates contains $\nhat=$ \casefourhfn{} (\casefourfractionn{} of the total) data  points. 
We do not normalize the data in this case.
In Figure \ref{fig:beam-comparison} we plot the low-, high-, and multi-fidelity versions of vertical displacements for four instances of input parameters. In all cases, the low-fidelity model under-predicts the displacement, while the multi-fidelity approach is able to correct this. Further, we observe that the multi-fidelity approach is able to capture the effect of the circular holes in the beam, represented by undulations in the displacement, which is missing from the low-fidelity displacement.

In Figure \ref{fig:beam-err-dist} we have plotted the histogram for the error distributions (defined in Eq. (\ref{eq:error-field})) for the low-and multi-fidelity data. The improvement in the performance of the multi-fidelity approach is significant, with no overlap between the low- and multi-fidelity errors. 
\new{The mean and standard deviation of the errors for the low-fidelity model are \casefourlferr{}$\%$ and \casefourlfstd{}, respectively, while for the multi-fidelity model we have \casefourmferr{}$\%$ and \casefourmfstd{}, respectively (see Table \ref{tab:summary}). The relative reduction in the mean error is  \casefourerrred{}.}

\begin{figure}
    \centering
    \includegraphics[width=0.99\textwidth]{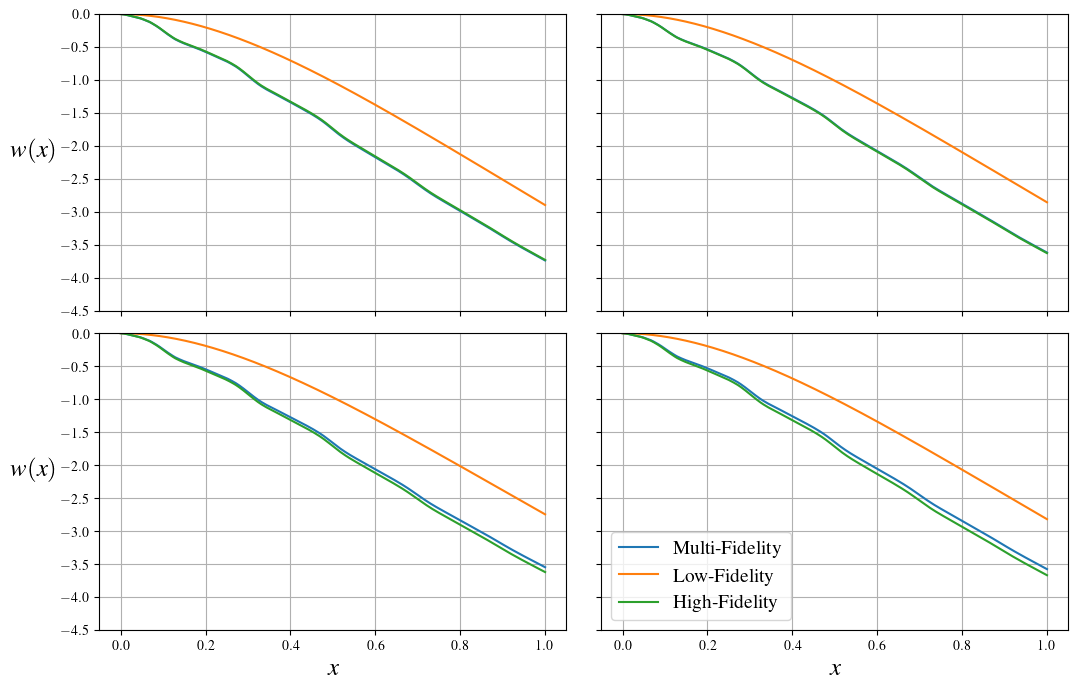}
    \caption{
    \new{Instances of low-, multi-, and high-fidelity solutions for the vertical displacement for the composite cantilever beam (Case 4), resulting from different input parameters. The low-fidelity model underestimates the vertical displacement and neglects structural features like holes. On the other hand, the multi-fidelity model provides solutions that closely align with high-fidelity results, sometime almost completely overlapping with the high-fidelity curves.}
    }
    \label{fig:beam-comparison}
\end{figure}

\begin{figure}
    \centering
    \includegraphics[width=0.8\textwidth]{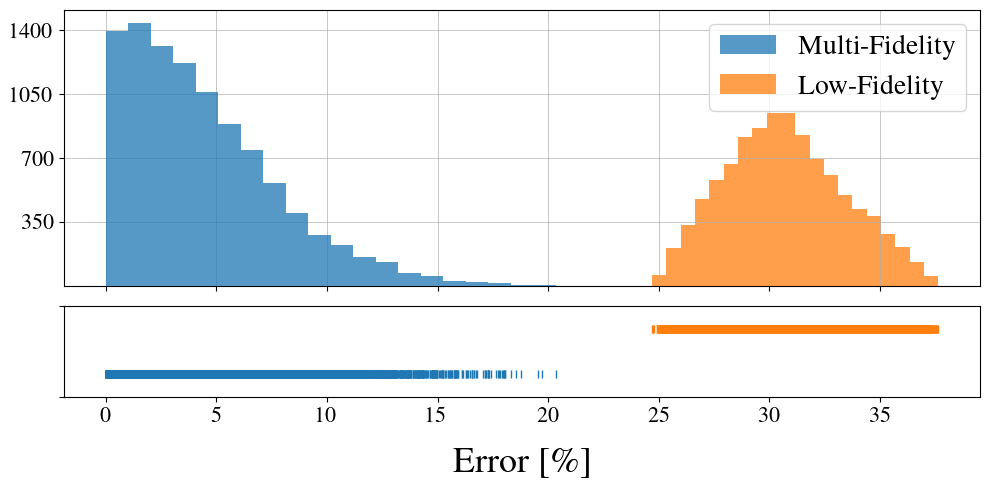}
    \caption{
    \new{Histogram of error distribution for low- and multi-fidelity datasets for the composite beam problem (Case 4). The multi-fidelity approach results in a distribution that is narrower and closer to zero. The mean and standard deviation for the low-fidelity distribution are \casefourlferr{}\% and \casefourlfstd{}, respectively. Whereas for the multi-fidelity model the mean and standard deviation are \casefourmferr{}\% and \casefourmfstd{}, respectively.}
    }
    \label{fig:beam-err-dist}
\end{figure}

\subsection{Heat flux in cavity flow}
Our final example is also solved using data from Cheng \textit{et al.} \cite{CHENG2024116793} and involves predicting the distribution of heat flux in a fluid.
In this problem the domain is a closed cavity in the form of a unit square which contains a fluid. The left wall of the cavity is maintained at a fixed temperature $T_h$, while on the the right wall (the cold wall) a stochastic profile, $T_c(y)$, with mean value  $\bar T_c < T_h$ is prescribed. The two horizontal walls are assumed to be adiabatic. The temperature difference between the vertical walls onsets a clockwise motion of the fluid inside the cavity, which is modeled using the unsteady, incompressible Navier-Stokes equations coupled with an equation for the conservation of energy. No-slip boundary conditions are prescribed for the velocity on the walls of the cavity.

\begin{figure}
    \centering
    \includegraphics[width=0.6\textwidth]{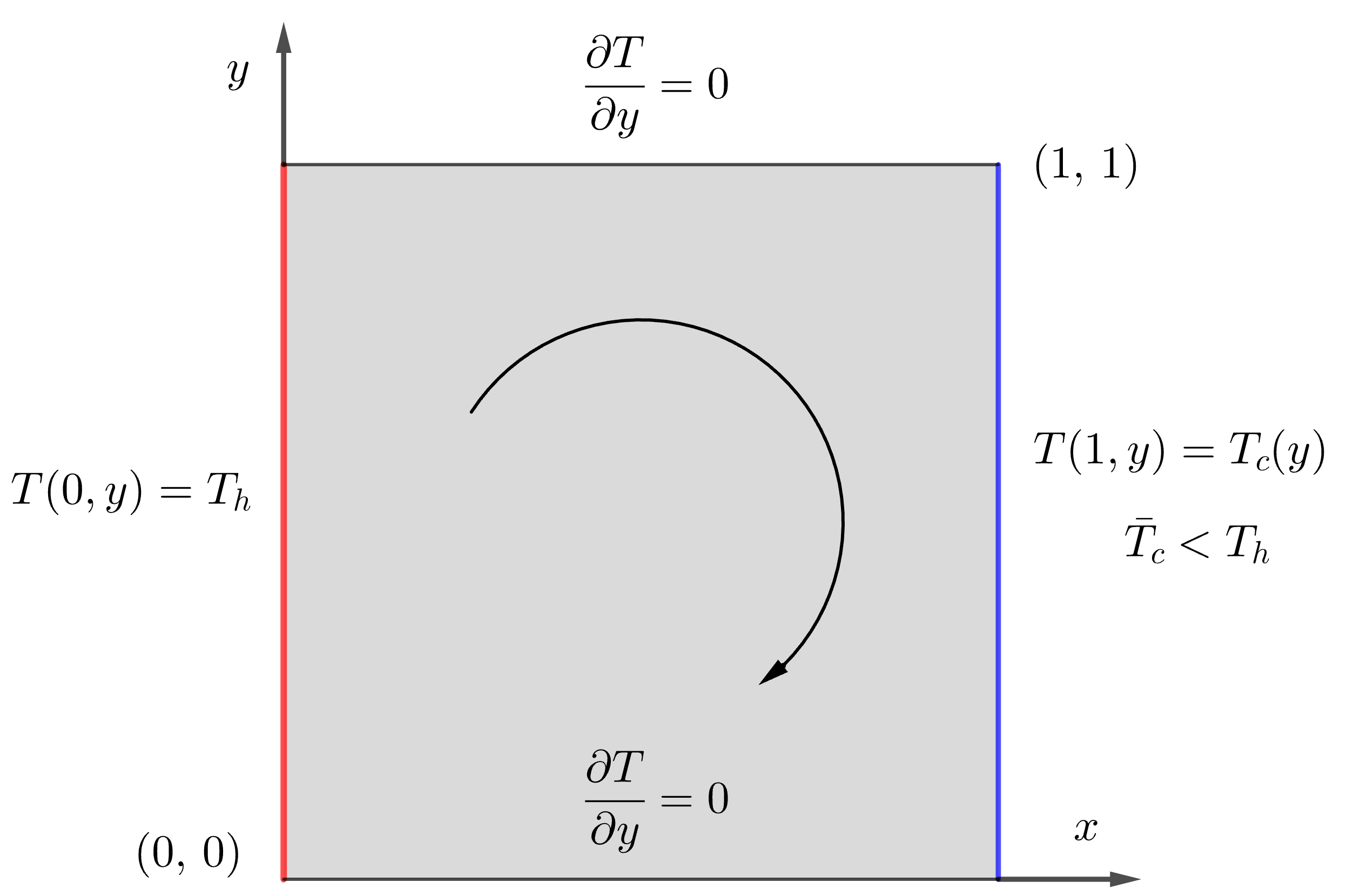}
    \caption{Schematic of temperature-driven cavity flow (Case 5).
    \new{The difference of temperature of the vertical walls onsets a clock-wise motion of the fluid inside the cavity. The parameters are the temperature at the hot left wall, and the distribution of temperature on the cold right wall. The quantity of interest is the heat flux through the hot wall at steady state as a function of the vertical coordinate.}
    }
    \label{fig:cavity-schematic}
\end{figure}

\paragraph*{Low- and high-fidelity models} 
The low- and high-fidelity data are obtained by solving the Navier-Stokes equations using a finite volume method with $16\times16$ and $256\times256$ grids, respectively. 
\new{The ratio between the computational costs of the two models is approximately 9,400 \cite{CHENG2024116793}.}
The reader is referred to \cite{CHENG2024116793} for more details regarding the problem and the data generation process.

\paragraph*{Parameters and quantities of interest}
The input to this problem includes the temperature prescribed at the hot wall and an instance of the temperature profile prescribed at the cold wall. The latter is expressed using a truncated Karhunen–Loève approximation to an underlying stochastic process. 

After an initial transitory phase, the system reaches a steady state, where all variables become independent of time.
We select the steady-state heat flux at the hot boundary, that is $\varphi_{q}(y) = k_{T} \frac{\partial T}{\partial x}(0,y,\infty)$, as the quantity of interest. In the expression above, $k_{T}$ is the thermal conductivity of the fluid. The heat flux is represented on a uniform grid of 221 points for the both the low- and high-fidelity models. Therefore, for this problem the dimension of the quantity of interest is $D = $ \casefivedim{}.

\paragraph*{Numerical results}
The low-fidelity dataset contains $\nbar=$ \casefivelfn{} data points, and $\nhat=$ \casefivehfn{} high-fidelity points are utilized to determine the multi-fidelity model (\casefivefractionn{} of the total). We note that each data point is obtained by solving the Navier-Stokes equations. 
Also in this case, we do not employ any data normalization procedure.
In Figure \ref{fig:cavity-comparison}, we have plotted four instances of the low-, high and multi-fidelity heat flux distributions. In each case we note that the low-fidelity solution over-predicts the heat flux and is unable to capture the subtle variations as a function of the vertical coordinate. The multi-fidelity approach improves on both these aspects and is much closer to the reference high-fidelity solution.

In Figure \ref{fig:cavity-err-dist} we plot the histogram for the error distributions (defined in Eq. (\ref{eq:error-field})) for the low-and multi-fidelity data for this problem. Once again, the multi-fidelity distribution is concentrated in a region where the error is much smaller. 
\new{The mean and standard deviation of the errors for the low-fidelity model are \casefivelferr{}$\%$ and \casefivelfstd{}, respectively, while for the multi-fidelity model we have \casefivemferr{}$\%$ and \casefivemfstd{}, respectively (see Table \ref{tab:summary}). The percentage reduction in the mean error is  \casefiveerrred{}.}

\begin{figure}
    \centering
    \includegraphics[width=0.99\textwidth]{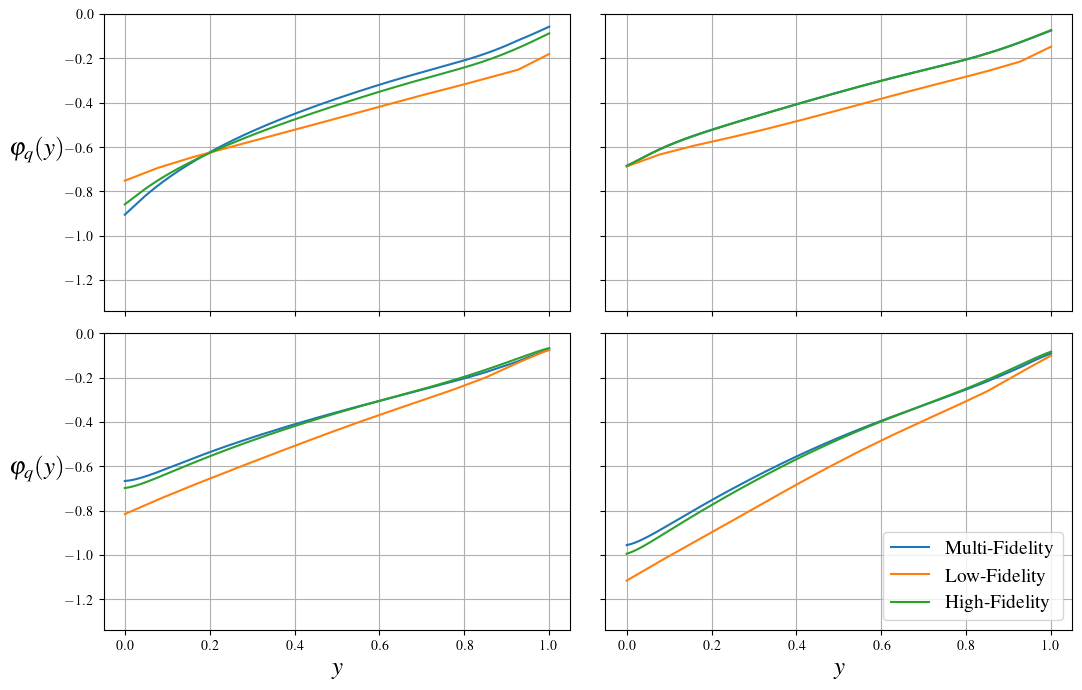}
    \caption{
    \new{Instances of heat flux curves corresponding to different input parameters for low-, multi-, and high-fidelity models for the cavity flow problem (Case 5). The multi-fidelity approach results in curves that are closer in  magnitude and trend to the high-fidelity solutions.}
    }
    \label{fig:cavity-comparison}
\end{figure}

\begin{figure}
    \centering
    \includegraphics[width=0.8\textwidth]{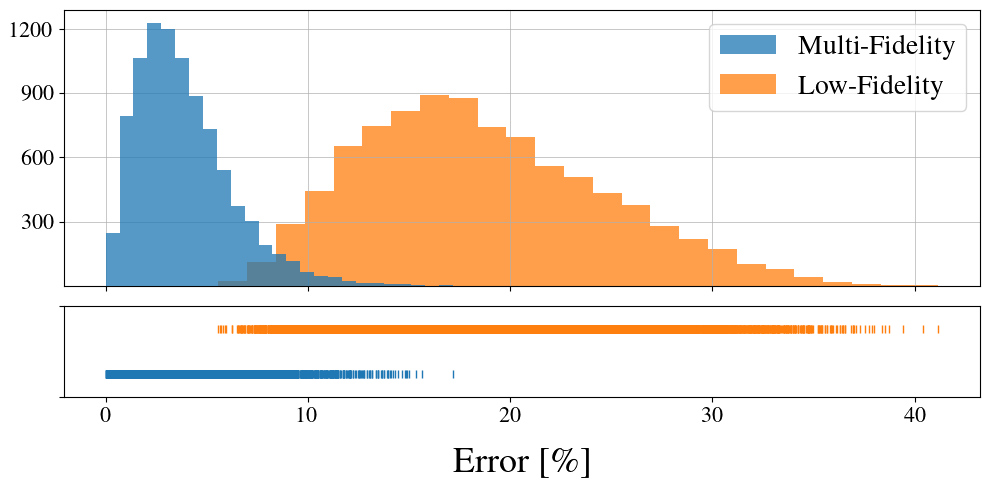}
    \caption{
    \new{Histogram of error distribution for low- and multi-fidelity datasets for the cavity flow problem (Case 5). The mean and standard deviation for the low-fidelity distribution are \casefivelferr{}\%  and \casefivelfstd{}, respectively. The mean and standard deviation of the distribution for the multi-fidelity data are \casefivemferr{}\%  and \casefivemfstd{}, respectively.}
    }
    \label{fig:cavity-err-dist}
\end{figure}

\subsection{Uncertainty of the multi-fidelity estimates}
In this section we visualize the uncertainty distribution of the multi-fidelity estimates for all the numerical problems. Specifically, we utilize the Uniform Manifold Approximation and Projection (UMAP) method \cite{McInnes2018} to project the multi-fidelity data onto a two-dimensional plane. UMAP is a dimensionality reduction technique that transforms high-dimensional data into a lower-dimensional space, preserving the local structure by keeping similar data points close together and dissimilar ones farther apart.

For each problem, we start with embedding the multi-fidelity data in the space of the graph Laplacian eigenfunctions (as described in Section \ref{sec:selection-hf}), and then use UMAP to project this data into two dimensions. In the resulting two-dimensional UMAP embedding, each data point $i \in \{1, \dots, \nbar\}$ is colored according to its variance, $\sqrt{C_{ii}}$. 
Figure \ref{fig:UMAP-1} shows these plots for the five numerical problems. 
From these visualizations, two main patterns emerge. First, the variance is lowest around the points where high-fidelity is available (marked in red), and increases as we move away, indicating that the model is more confident in its predictions for points that are closer to the training points. Second, the uncertainty tends to be higher for points that are isolated and not in close proximity to other data points.

\begin{figure}
    \centering
    \includegraphics[width=0.45\textwidth]{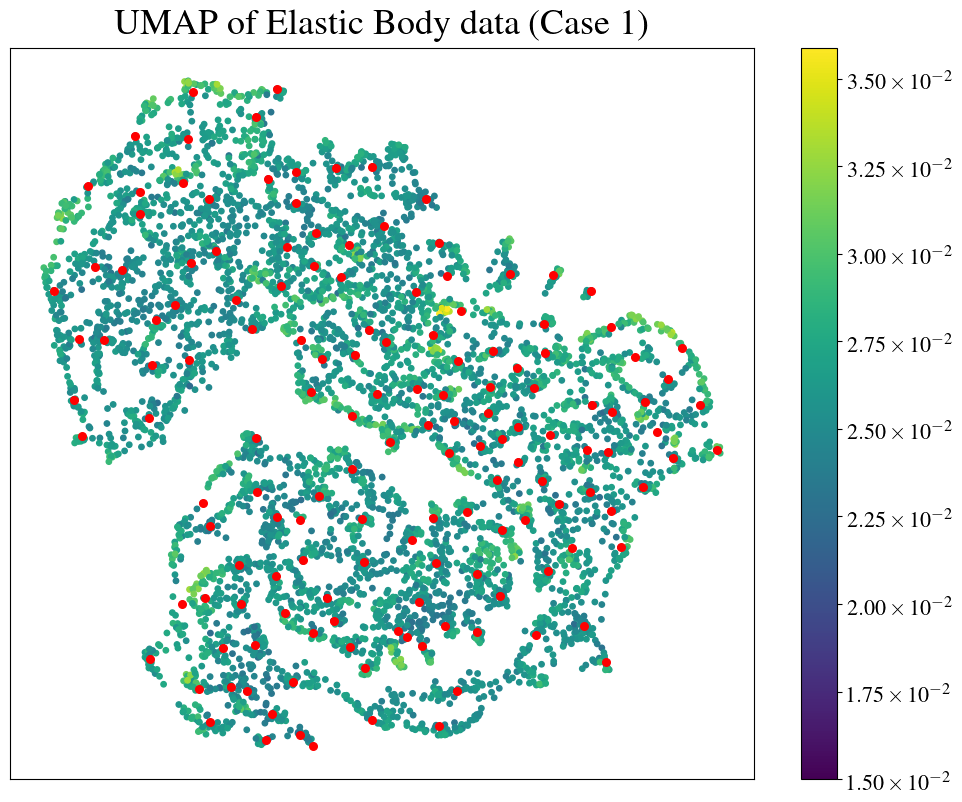}
    \includegraphics[width=0.45\textwidth]{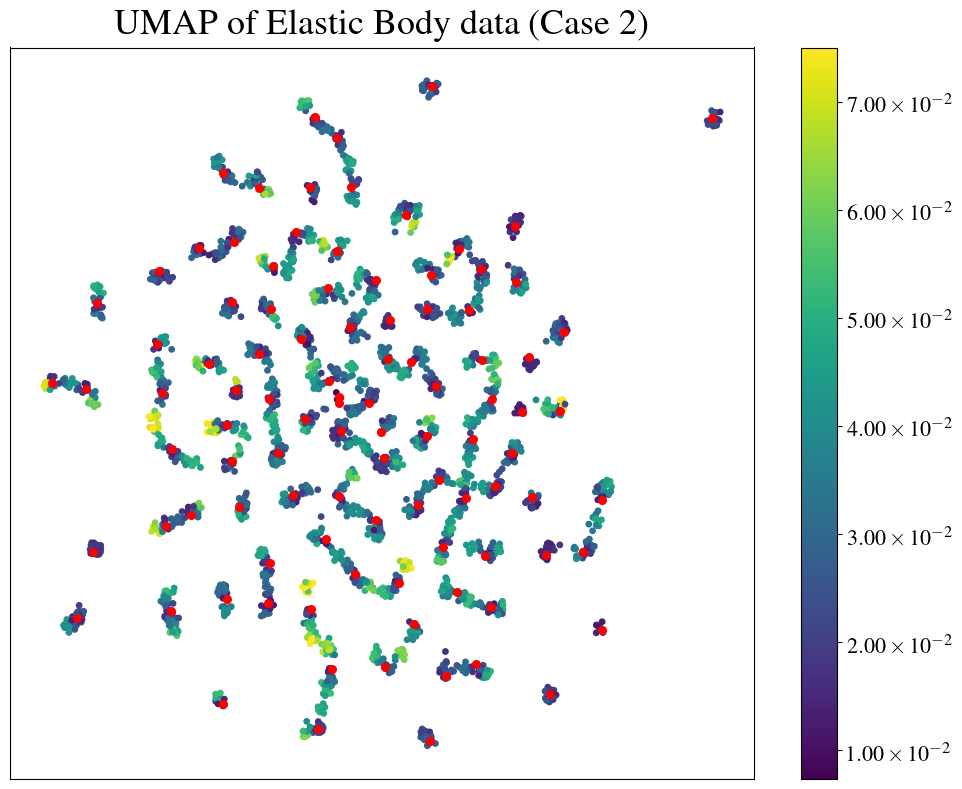}\\
    \includegraphics[width=0.45\textwidth]{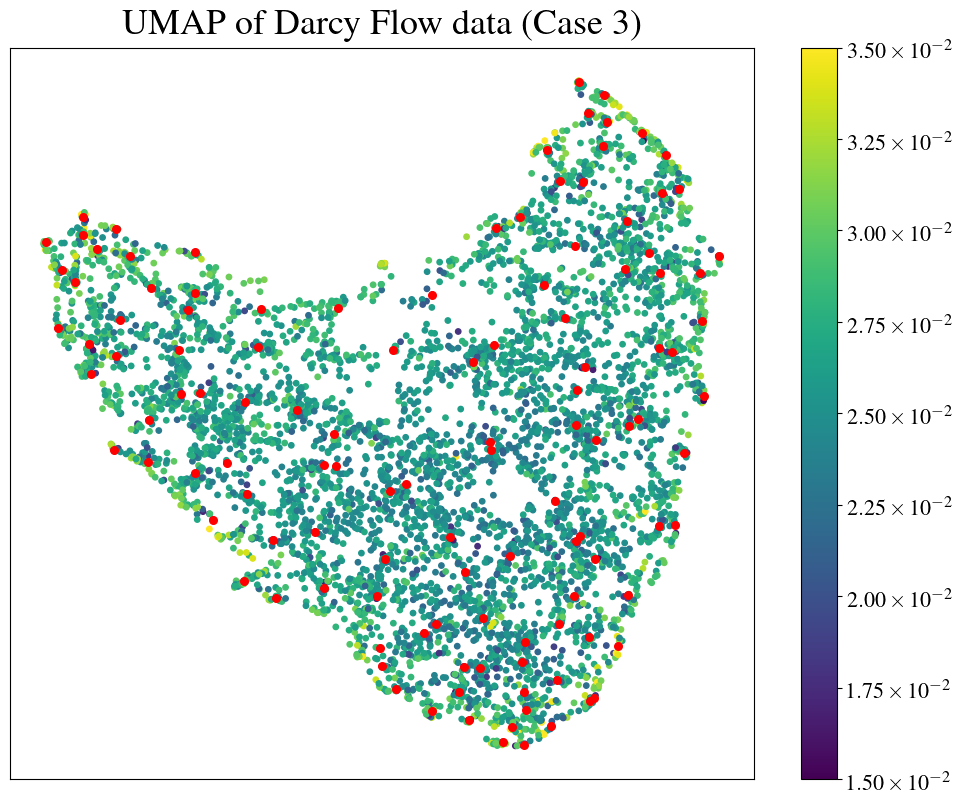}
    \includegraphics[width=0.45\textwidth]{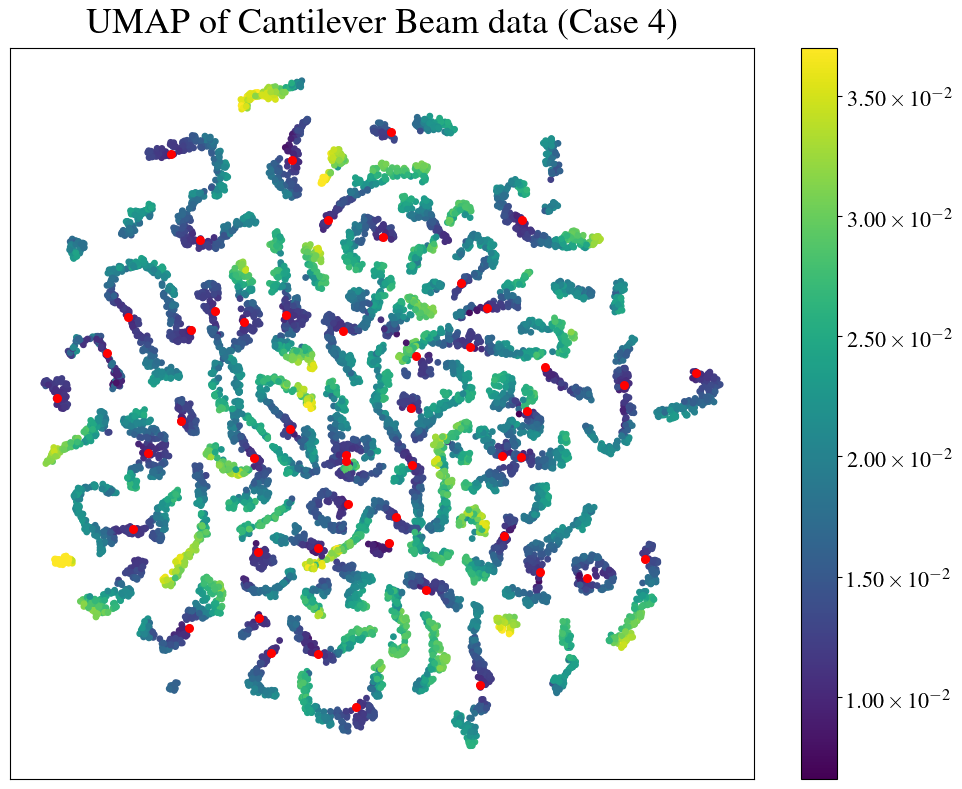}\\
    \includegraphics[width=0.45\textwidth]{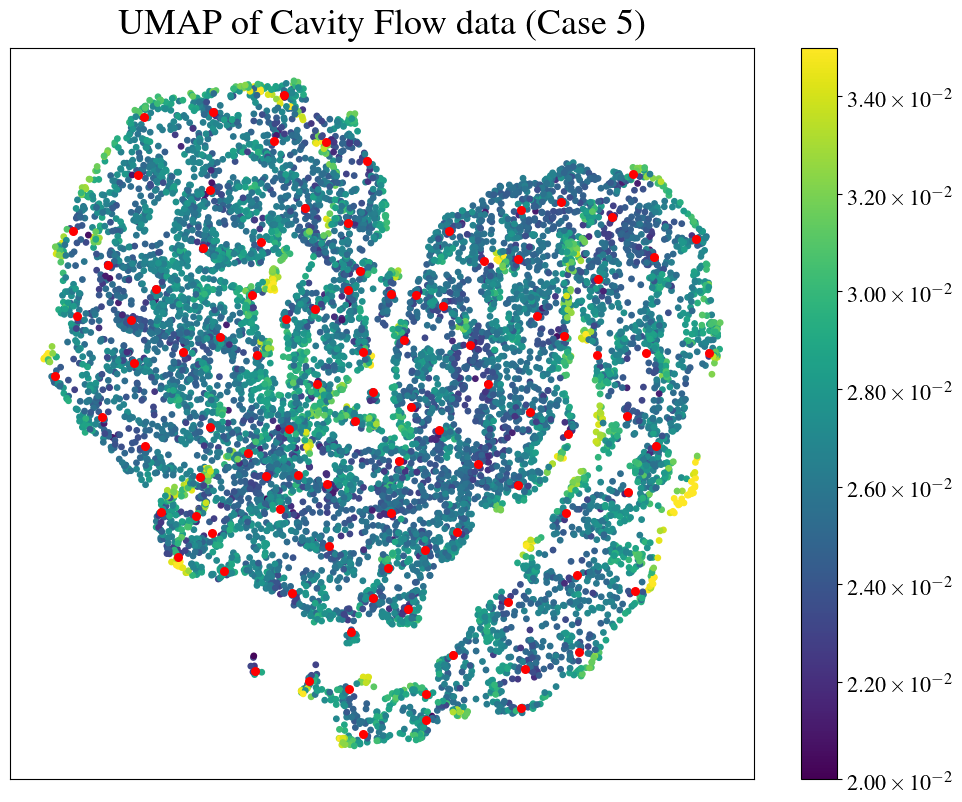}
    \caption{
    \new{The figures show the 2-dimensional UMAP embedding of the multi-fidelity data for the five numerical problems we considered. Here, the nodes of the multi-fidelity graph are mapped into a plane, and the adjacency matrix of the graph is used in the UMAP method to impose a distance metric. This ensures that nodes that are similar are mapped close to each other, and nodes that are dissimilar are mapped far apart. Finally, each data point is colored based on the uncertainty (standard deviation) determined  by the multi-fidelity method.}
    }
    \label{fig:UMAP-1}
\end{figure}

\section{Conclusions}
\label{sec:conclcusions}
In this study, we propose a Bayesian extension of the graph Laplacian-based spectral multi-fidelity model (SpecMF) \cite{pinti2023graph}. By leveraging the spectral properties of the graph Laplacian, this method constructs a prior distribution that captures the underlying structure of the data, which is then refined using a likelihood function determined using a few select high-fidelity data points. The resulting posterior distribution for the multi-fidelity coordinates of the data points is a multivariate Gaussian distribution, and the mean and the covariance of this distribution can be determined by solving linear systems of equations. We present two efficient numerical methods for solving these systems. 
We apply the method to a wide range of problems in solid and fluid mechanics where the quantities of interest range from low-dimensional vectors ($D=\mathcal{O}(10)$) to discrete representations of two-dimensional fields ($D=\mathcal{O}(10^4)$). In all cases the multi-fidelity approach improves the accuracy of the underlying low-fidelity data by 75 to 85\%, while only using a small proportion of high-fidelity  data (from 0.5 to 3.3\%).

\new{
We now outline the limitations of the methodology, which can be seen as possible directions for future research. 
The method depends on the low-fidelity model being able to accurately capture the relationships across different data points leading to an accurate adjacency matrix, which directly affects the multi-fidelity correction.
Furthermore, the method may also face challenges when used for hyperbolic problems that exhibit a large Kolmogorov $n$-width, as it may require a substantial number of high-fidelity data points. This is due to the fact that the additive correction used to compute the multi-fidelity estimates is a linear combination of the known low-to-high fidelity displacements, and the number of terms may increasing with increasing Kolmogorov $n$-width. 
Our approach scales effectively with the dimensionality of the data, which only affects the complexity of computing the edge weights of the graph.\footnote{\new{Parallelization strategies for efficiently computing edge weights for very high-dimensional data are discussed in Meng \emph{et al.} \cite{MengIPOL2017}.}} 
Finally, we propose efficient numerical techniques for applying our approach in a large data regime, but future efforts to prove the efficacy of these methods with large datasets should we explored.
}

\appendix

\section{General graph Laplacian normalization}
\label{app:GLnormalization}
In the above, we have assumed that $p=q$ in the definition of $\gl$ in \eqref{eq:GLdef}. Other normalizations of the graph Laplacian are possible and indeed common for different application settings, for example $(p,q) = (1,0)$ gives the random walk Laplacian, so-called because $\degmat^{-1}\adj$ is the transition matrix of a random walk on the graph. It is well-known that the choice of normalization has a direct impact on the output of any graph Laplacian based algorithm, and a spectral analysis of the effects of the choice of normalization in the large-data limit is given in Hoffmann \emph{et al.} \cite{Hoffmann2019SpectralAO}.

In this appendix, we will sketch how to generalise our above framework to the general $p,q\in \mathbb{R}$ case. First, we will redenote $\gl$ by $\glpq{p}{q}$, to keep track of our normalization parameters. 
Next, we define the following inner products (for any $K \in \mathbb{N}$):
\begin{align}
    \ip{\bm{u},\bm{v}} &:= \bm{u}^T \degmat^{p-q}\bm{v} &&\bm{u},\bm{v} \in \mathbb{R}^\nbar,\\
    \ipF{\bm{A},\bm{B}} &:= \operatorname{tr}\left(\bm{A}^T \degmat^{p-q}\bm{B}\right) &&\bm{A},\bm{B} \in \mathbb{R}^{\nbar \times K}, 
\end{align}
which are a reweighted dot product and Frobenius inner product, respectively. These are defined so that 
\begin{align*}
    \ip{\bm{u},\glpq{p}{q}\bm{v}} &= \bm{u}^T\bm{L}^{(q,q)}\bm{v} &\text{and} &&  \ipF{\bm{A},\glpq{p}{q}\bm{B}} &= \operatorname{tr}\left(\bm{A}^T\bm{L}^{(q,q)}\bm{B} \right) \hspace{1em}&\\
    &= \bm{v}^T\bm{L}^{(q,q)}\bm{u} &\text{} && &=\operatorname{tr}\left(\bm{B}^T\bm{L}^{(q,q)}\bm{A} \right) \hspace{1em}&\\
    & = \ip{\bm{v},\glpq{p}{q}\bm{u}} &&& &=\ipF{\bm{B},\glpq{p}{q}\bm{A}}, &
\end{align*}
and so $\glpq{p}{q}$ is self-adjoint with respect to these inner products. The matrix $\glpq{p}{q}$ is not symmetric, however it is similar to the symmetric graph Laplacian  $\glpq{(p+q)/2}{(p+q)/2}$. By eigendecomposing the latter as 
\begin{equation}
    \bm{L}^{((p+q)/2,(p+q)/2)} = \bm{\Pi \Lambda \Pi^T}\,,
\end{equation}
(where $\bm{\Lambda}$ is the diagonal matrix of eigenvalues, $\bm{\Pi}$ is a matrix whose columns are the eigenvectors of $\glpq{(p+q)/2}{(p+q)/2}$, and $\bm{\Pi}^T \bm{\Pi} = \idnbar$) we can write
\begin{equation}
    \glpq{p}{q} = \degmat^{-(p-q)/2}\bm{L}^{((p+q)/2,(p+q)/2)} \degmat^{(p-q)/2} = \bm{\Psi}
\bm{\Lambda} \tilde{\bm{\Psi}}^T\,,
\end{equation}
where $\bm{\Psi} := \degmat^{-(p-q)/2}\bm{\Pi}$ is now a matrix of eigenvectors of $\glpq{p}{q}$, $\tilde{\bm{\Psi}} := \degmat^{p-q}\bm{\Psi}$, and $\tilde{\bm{\Psi}}^T \bm{\Psi} = \bm{\Psi}\tilde{\bm{\Psi}}^T  = \bm{\Psi}^T \degmat^{p-q} \bm{\Psi}  = \idnbar$. A key observation is that the columns of $\bm{\Psi}$, i.e. the eigenvectors of $\glpq{p}{q}$, are thus orthonormal \emph{with respect to the $\ip{\cdot,\cdot}$ inner product}, rather than with respect to the standard dot product.

We then redefine our prior \eqref{prior} for $\bm{\Phi}$ using the reweighted Frobenius norm, i.e., 
\begin{align}
p(\phimat) \propto \exp \left( -\frac{\omega}{2} 
\ipF{\phimat, \left(\glpq{p}{q} + \tau \idnbar \right)^{\beta} \phimat }
\right).
\end{align}
Since the inner product (by design) respects the orthonormality of the eigenvectors, the expression for the prior in terms of the coefficient matrix \eqref{eq:pA} is unchanged. Choosing the likelihood as in \eqref{likelihood}, we can express the posterior as in \eqref{eq:posteriorC}, except now 
\begin{align} \label{eq:covpq}
\phicov &= \left( \frac{1}{\sigma^2} \pnhat^T \pnhat + \omega \degmat^{p-q} (\glpq{p}{q} + \tau \idnbar)^{\beta} \right)^{-1}, 
\end{align}
and so the MAP estimate solves 
\begin{equation} \label{eq:MAPpq}
\left( \frac{1}{\sigma^2} \pnhat^T \pnhat + \omega{\degmat^{p-q}} (\glpq{p}{q} + \tau \idnbar)^\beta\right)\phimapm = \frac{1}{\sigma^2} \pnhat^T \phihatm.
\end{equation}

In the truncation method in \Cref{sec:truncation}, the leading (orthonormal) eigenvectors and eigenvalues should instead be computed of the symmetric matrix
\begin{equation}
a \idnbar - \glpq{(p+q)/2}{(p+q)/2},
\end{equation}
before being converted to low-lying $\ip{\cdot,\cdot}$-orthonormal eigenvectors of $\glpq{p}{q}$ by multiplying each leading eigenvector of $a \idnbar - \glpq{(p+q)/2}{(p+q)/2}$ by $\degmat^{-(p-q)/2}$. This $\ip{\cdot,\cdot}$-orthonormality, combined with the redefinition of the prior density above, ensures that the eventual expressions for $\phicov_{A_K}$ and $\bm{A}^*_K$ in \eqref{eq:cov_a} and \eqref{eq:map_a} are unchanged, except that $\bm{\Psi}_K$ now refers to the low-lying eigenvectors of $\glpq{p}{q}$. 

In the Nystr\"om-based method of \Cref{sec:Nys} (which is in fact applicable for all $p$ and $q$ satisfying $p+q = 1$) the set-up begins as before, deriving $\tilde{\bm{U}}$ and $\bm{\Sigma}$ such that 
\begin{equation}
\tilde{\bm{U}}\bm{\Sigma}\tilde{\bm{U}}^T = \hat \degmat^{-\frac12} \adj(:,X)\adj(X,X)^{\dagger}\adj(X,:) \hat \degmat^{-\frac12},
\end{equation}
however we must now approximate
\begin{equation}
\glpq{p}{1-p} \approx \idnbar - \hat \degmat^{\frac12-p}\tilde {\bm{U}}\bm{\Sigma}\tilde{\bm{U}}^T\hat \degmat^{p-\frac12} = \idnbar - \bm{U}\bm{\Sigma} \bm{V}^T,
\end{equation}
where $\bm{U}:=\hat\degmat^{\frac12-p}\tilde {\bm{U}}$, $\bm{V}:=\hat\degmat^{p-\frac12}\tilde {\bm{U}}$, and $\bm{V}^T\bm{U}= \bm{I}_K$. The approximation of $(\glpq{p}{1-p} + \tau \idnbar)^\beta$ then goes through with $\tilde {\bm{U}}$ replaced by $\bm{U}$ and $\tilde {\bm{U}}^T$ replaced by $\bm{V}^T$. Redefining (in line with the $\degmat^{p-q}$ term in \eqref{eq:covpq})
\begin{equation}
\bm{\Theta} := \pnhat^T \pnhat + \sigma^2 \omega(1 + \tau)^\beta \hat{\degmat}^{2p-1} \in \mathbb{R}^{\nbar  \times \nbar},
\end{equation}
it follows that \cref{eq:MAPpq} can be approximately solved by solving  
\begin{equation}
\label{eq:MAPnyspq}
    \left(\bm{\Theta} -  \hat{\degmat}^{2p-1}\bm{U\Xi V}^T \right)\phimapm =\left(\bm{\Theta} - \bm{V\Xi V}^T \right)\phimapm= \pnhat^T \phihatm,
\end{equation}
since $\hat{\degmat}^{2p-1}\bm{U} = \hat{\degmat}^{p-1/2}\tilde {\bm{U}}= \bm{V}$, and the covariance matrix can be approximated 
\begin{equation}
\bm{C} = \left( \frac{1}{\sigma^2} \pnhat^T \pnhat + \omega \degmat^{2p-1} (\gl + \tau \idnbar)^\beta\right)^{-1} \approx  \sigma^2 \left( \bm{\Theta} - \bm {V} \bm{\Xi V}^T \right)^{-1}. 
\end{equation}
Hence, the methods for solving \eqref{eq:MAPnyspq} and computing $\phicov$ are as before, except for $\tilde{\bm{U}}$ everywhere replaced by $\bm{V}$. The computational complexities in \Cref{sec:evaluating} are unaffected by any of these changes. 

Finally, in \Cref{sec:analysis}, to ensure that the minimizer of the objective function solves \eqref{eq:MAPpq}, the regularizer must be redefined as
\begin{equation}
\mathcal{R}(\bm{\Theta}):= \ipF{ \mathbf{\Theta}, (\glpq{p}{q} + \tau \idnbar)^\beta\mathbf{\Theta}},
\end{equation}
which satisfies all of the required properties for the proof of \cref{thm:cv-regularizer} to go through unchanged.

\new{
\section{Gradient of $\mathcal{J}(\omega)$}
\label{app:omega_cond_grad}
We are interested in finding the gradient of function (\ref{eq:omega_loss_appx}) with respect to $\omega$:
\begin{equation}
\mathcal{J}(\omega) = \left( \frac{1}{\nbar} \sum_{k=1}^{\nbar} \sqrt{C_{kk}(\omega)} - r \sigma \right)^2.
\label{eq:omega_loss_appx}
\end{equation}

Let's first recall the form of the covariance matrix $\phicov$ (\ref{covariance}):
\begin{equation}
    \phicov^{-1} = \frac{1}{\sigma^2} \pnhat^T \pnhat + \omega (\gl + \tau \idnbar)^{\beta}.
\end{equation}

Then, we can write,
\begin{equation}    
\frac{\mathrm{d} \mathcal{J}}{\mathrm{d} \omega} = \sum_{i,\,j=1}^{\nbar} \frac{\partial \mathcal{J}}{\partial C_{ij}} \frac{\partial C_{ij}}{\partial \omega}.
\end{equation}

The first term writes
\begin{equation}
    \frac{\partial \mathcal{J}}{\partial C_{ij}} = \frac{1}{\nbar} \left( \frac{1}{\nbar} \sum_{k=1}^{\nbar} \sqrt{C_{kk}} - r \sigma \right) \frac{1}{\sqrt{C_{ii}}} \delta_{ij},
\label{eq:dJ_dC}
\end{equation}

with $\delta_{ij}$ being the Kronecker delta. For the second term, we note that
\begin{align}
    \frac{\partial \phicov}{\partial \omega} &= - \phicov \frac{\partial \phicov^{-1}} {\partial \omega} \phicov, \label{eq:dC_domega}\\
    \frac{\partial \phicov^{-1}}{\partial \omega} &= (\gl + \tau \idnbar)^{\beta}.
\end{align}

(\ref{eq:dC_domega}) is a standard result that can be proven by differentiating both sides of $\phicov  \phicov^{-1} = \idnbar$  with respect to $\omega$. Then, we have that
\begin{equation}
    \frac{\partial C_{ij}}{\partial \omega} = \left[ \frac{\partial \phicov}{\partial \omega} \right]_{ij}
    = - \left[\phicov (\gl + \tau \idnbar)^{\beta} \phicov \right]_{ij}
\end{equation}

Hence, 
\begin{equation}    
\frac{\mathrm{d} \mathcal{J}}{\mathrm{d} \omega} = - 
\left( \frac{1}{\nbar} \sum_{k=1}^{\nbar} \sqrt{C_{kk}} - r \sigma \right) 
\frac{1}{\nbar} 
\sum_{i=1}^{\nbar} \frac{1}{\sqrt{C_{ii}}} \left[\phicov (\gl + \tau \idnbar)^{\beta} \phicov \right]_{ii}.
\end{equation}
}

\bibliographystyle{elsarticle-num} 
\bibliography{refs}

@phdthesis{poschl2008tikhonov,
  title={Tikhonov regularization with general residual term},
  author={P{\"o}schl, Christiane},
school = {Leopold Franzens Universität Innsbruck},
  year={2008}
}

@book{Woodbury50,
    AUTHOR = {Woodbury, Max A.},
     TITLE = {Inverting modified matrices},
      NOTE = {Statistical Research Group, Memo. Rep. no. 42},
 PUBLISHER = {Princeton University, Princeton, N. J.},
      YEAR = {1950},
     PAGES = {4},
   MRCLASS = {65.0X},
  MRNUMBER = {0038136},
MRREVIEWER = {J. Kuntzmann},
}

@article{FowlkesBelongieChungMalik04,
  title={Spectral grouping using the {N}ystr{\"o}m method},
  author={Fowlkes, Charless and Belongie, Serge and Chung, Fan and Malik, Jitendra},
  journal={IEEE transactions on pattern analysis and machine intelligence},
  volume={26},
  number={2},
  pages={214--225},
  year={2004},
  publisher={IEEE}
}

@inproceedings{BelongieFowlkesChungMalik02,
  title={Spectral partitioning with indefinite kernels using the {N}ystr{\"o}m extension},
  author={Belongie, Serge and Fowlkes, Charless and Chung, Fan and Malik, Jitendra},
  booktitle={European conference on computer vision},
  pages={531--542},
  year={2002},
  organization={Springer}
}

@article{Nystrom30,
    title={{{\"U}ber Die Praktische Aufl\"osung von Integralgleichungen mit Anwendungen auf Randwertaufgaben}},
    author={Nystr\"om, E. J.},
    journal={Acta Math.},
    volume={54},
    pages={185--204},
    year={1930}
}

@article{BebendorfKunis09,
author = {Bebendorf, Mario and Kunis, Stefan},
title = {Recompression techniques for adaptive cross approximation},
volume = {21},
journal = {Journal of Integral Equations and Applications},
number = {3},
publisher = {Rocky Mountain Mathematics Consortium},
pages = {331--357},
keywords = {adaptive cross approximation, hierarchical matrices, integral equations},
year = {2009},
doi = {10.1216/JIE-2009-21-3-331}
}

@article{AlfkePottsStollVolkmer18,
author={Alfke, Dominik and Potts, Daniel and Stoll, Martin and Volkmer, Toni},   
title={N{FFT} Meets {K}rylov Methods: Fast Matrix-Vector Products for the Graph {L}aplacian of Fully Connected Networks},      
journal={Frontiers in Applied Mathematics and Statistics},      
volume={4},      
year={2018},         
issn={2297-4687}
}

@article{perdikaris2015multi,
  title={Multi-fidelity modelling via recursive co-kriging and {G}aussian--{M}arkov random fields},
  author={Perdikaris, Paris and Venturi, Daniele and Royset, Johannes O and Karniadakis, George Em},
  journal={Proceedings of the Royal Society A: Mathematical, Physical and Engineering Sciences},
  volume={471},
  number={2179},
  pages={20150018},
  year={2015},
  publisher={The Royal Society Publishing}
}

@article{Goel2007,
    author={Goel, T. and Haftka, R. T. and Shyy, W. and Queipo, N. V.},
    title={Ensemble of surrogates},
    journal={Structural and Multidisciplinary Optimization},
    year={2007},
    month={3},
    day={01},
    volume={33},
    number={3},
    pages={199-216},
    issn={1615-1488},
    doi={10.1007/s00158-006-0051-9}
    }

@article{gramacy2009adaptive,
  title={Adaptive design and analysis of supercomputer experiments},
  author={Gramacy, Robert B and Lee, Herbert KH},
  journal={Technometrics},
  volume={51},
  number={2},
  pages={130--145},
  year={2009},
  publisher={Taylor \& Francis}
}

@article{Ng2014,
author = {Ng, L. W. T. and Willcox, K. E.},
title = {Multifidelity approaches for optimization under uncertainty},
journal = {International Journal for Numerical Methods in Engineering},
volume = {100},
number = {10},
pages = {746-772},
doi = {10.1002/nme.4761},
year = {2014}
}

@article{Allaire_2014,
  title={A mathematical and computational framework for multifidelity design and analysis with computer models},
  author={Allaire, Douglas and Willcox, Karen},
  journal={International Journal for Uncertainty Quantification},
  volume={4},
  number={1},
  year={2014},
  publisher={Begel House Inc.}
}

@article{Kaipio2007,
title = {Statistical inverse problems: Discretization, model reduction and inverse crimes},
journal = {Journal of Computational and Applied Mathematics},
volume = {198},
number = {2},
pages = {493-504},
year = {2007},
note = {Special Issue: Applied Computational Inverse Problems},
issn = {0377-0427},
doi = {10.1016/j.cam.2005.09.027},
author = {Kaipio, Jari and Somersalo, Erkki}
}

@article{hoffmann2020consistency,
  title={Consistency of semi-supervised learning algorithms on graphs: Probit and one-hot methods},
  author={Hoffmann, Franca and Hosseini, Bamdad and Ren, Zhi and Stuart, Andrew M},
  journal={Journal of Machine Learning Research},
  volume={21},
  pages={1--55},
  year={2020},
  publisher={Journal of Machine Learning Research}
}

@article{bertozzi2018uncertainty,
  title={Uncertainty quantification in graph-based classification of high dimensional data},
  author={Bertozzi, Andrea L and Luo, Xiyang and Stuart, Andrew M and Zygalakis, Konstantinos C},
  journal={SIAM/ASA Journal on Uncertainty Quantification},
  volume={6},
  number={2},
  pages={568--595},
  year={2018},
  publisher={SIAM}
}

@article{belkin2001laplacian,
  title={{L}aplacian eigenmaps and spectral techniques for embedding and clustering},
  author={Belkin, Mikhail and Niyogi, Partha},
  journal={Advances in neural information processing systems},
  volume={14},
  year={2001}
}

@article{belkin2006convergence,
  title={Convergence of {L}aplacian eigenmaps},
  author={Belkin, Mikhail and Niyogi, Partha},
  journal={Advances in neural information processing systems},
  volume={19},
  year={2006}
}

@article{COIFMAN20065,
	author = {Ronald R. Coifman and St{\'e}phane Lafon},
	doi = {10.1016/j.acha.2006.04.006},
	issn = {1063-5203},
	journal = {Applied and Computational Harmonic Analysis},
	number = {1},
	pages = {5-30},
	title = {Diffusion maps},
	volume = {21},
	year = {2006},
	Bdsk-Url-2 = {https://doi.org/10.1016/j.acha.2006.04.006}}

@article{slepcev2019analysis,
  title={Analysis of p-{L}aplacian regularization in semisupervised learning},
  author={Slepcev, Dejan and Thorpe, Matthew},
  journal={SIAM Journal on Mathematical Analysis},
  volume={51},
  number={3},
  pages={2085--2120},
  year={2019},
  publisher={SIAM}
}

@inproceedings{belkin2004regularization,
  title={Regularization and semi-supervised learning on large graphs},
  author={Belkin, Mikhail and Matveeva, Irina and Niyogi, Partha},
  booktitle={International Conference on Computational Learning Theory},
  pages={624--638},
  year={2004},
  organization={Springer}
}

@article{bertozzi2021posterior,
  title={Posterior consistency of semi-supervised regression on graphs},
  author={Bertozzi, Andrea L and Hosseini, Bamdad and Li, Hao and Miller, Kevin and Stuart, Andrew M},
  journal={Inverse Problems},
  volume={37},
  number={10},
  pages={105011},
  year={2021},
  publisher={IOP Publishing}
}

@article{NakatsukasaPark23,
author = {Nakatsukasa, Yuji and Park, Taejun},
title = {Randomized Low-Rank Approximation for Symmetric Indefinite Matrices},
journal = {SIAM Journal on Matrix Analysis and Applications},
volume = {44},
number = {3},
pages = {1370--1392},
year = {2023},
doi = {10.1137/22M1538648}
}

@article{dunlop2020large,
  title={Large data and zero noise limits of graph-based semi-supervised learning algorithms},
  author={Dunlop, Matthew M and Slep{\v{c}}ev, Dejan and Stuart, Andrew M and Thorpe, Matthew},
  journal={Applied and Computational Harmonic Analysis},
  volume={49},
  number={2},
  pages={655--697},
  year={2020},
  publisher={Elsevier}
}

@article{trillos2018variational,
  title={A variational approach to the consistency of spectral clustering},
  author={Trillos, Nicolas Garcia and Slep{\v{c}}ev, Dejan},
  journal={Applied and Computational Harmonic Analysis},
  volume={45},
  number={2},
  pages={239--281},
  year={2018},
  publisher={Elsevier}
}

@article{sarvazyan2012mechanical,
  title={Mechanical imaging-a technology for 3-d visualization and characterization of soft tissue abnormalities: A review},
  author={Sarvazyan, Armen and Egorov, Vladimir},
  journal={Current Medical Imaging},
  volume={8},
  number={1},
  pages={64--73},
  year={2012},
  publisher={Bentham Science Publishers}
}

@inproceedings{MuscoMusco15,
author = {Musco, Cameron and Musco, Christopher},
title = {Randomized block {K}rylov methods for stronger and faster approximate singular value decomposition},
year = {2015},
publisher = {MIT Press},
address = {Cambridge, MA, USA},
abstract = {Since being analyzed by Rokhlin, Szlam, and Tygert [1] and popularized by Halko, Martinsson, and Tropp [2], randomized Simultaneous Power Iteration has become the method of choice for approximate singular value decomposition. It is more accurate than simpler sketching algorithms, yet still converges quickly for any matrix, independently of singular value gaps. After \~{O}(1/∊) iterations, it gives a low-rank approximation within (1 + ∊) of optimal for spectral norm error.We give the first provable runtime improvement on Simultaneous Iteration: a randomized block Krylov method, closely related to the classic Block Lanczos algorithm, gives the same guarantees in just \~{O}(1/√∊) iterations and performs substantially better experimentally. Our analysis is the first of a Krylov subspace method that does not depend on singular value gaps, which are unreliable in practice. Furthermore, while it is a simple accuracy benchmark, even (1 + ∊) error for spectral norm low-rank approximation does not imply that an algorithm returns high quality principal components, a major issue for data applications. We address this problem for the first time by showing that both Block Krylov Iteration and Simultaneous Iteration give nearly optimal PCA for any matrix. This result further justifies their strength over non-iterative sketching methods.},
booktitle = {Proceedings of the 28th International Conference on Neural Information Processing Systems - Volume 1},
pages = {1396–1404},
numpages = {9},
location = {Montreal, Canada},
series = {NIPS'15}
}

@article{barbone2010review,
  title={A review of the mathematical and computational foundations of biomechanical imaging},
  author={Barbone, Paul E and Oberai, Assad A},
  journal={Computational Modeling in Biomechanics},
  pages={375--408},
  year={2010},
  publisher={Springer}
}

@article{fernandez2016review,
  title={Review of multi-fidelity models},
  author={Fern{\'a}ndez-Godino, M Giselle and Park, Chanyoung and Kim, Nam-Ho and Haftka, Raphael T},
  journal={arXiv preprint arXiv:1609.07196},
  year={2016}
}

@article{Peherstorfer2016,
  title={Optimal model management for multifidelity {M}onte {C}arlo estimation},
  author={Peherstorfer, Benjamin and Willcox, Karen and Gunzburger, Max},
  journal={SIAM Journal on Scientific Computing},
  volume={38},
  number={5},
  pages={A3163--A3194},
  year={2016},
  publisher={SIAM},
  doi = {10.1137/15M1046472}
}

@article{zelnik2004self,
  title={Self-tuning spectral clustering},
  author={Zelnik-Manor, Lihi and Perona, Pietro},
  journal={Advances in Neural Information Processing Systems},
  volume={17},
  year={2004}
}

@article{durantin2017multifidelity,
  title={Multifidelity surrogate modeling based on radial basis functions},
  author={Durantin, C{\'e}dric and Rouxel, Justin and D{\'e}sid{\'e}ri, Jean-Antoine and Gli{\`e}re, Alain},
  journal={Structural and Multidisciplinary Optimization},
  volume={56},
  pages={1061--1075},
  year={2017},
  publisher={Springer},
  doi={10.1007/s00158-017-1703-7}
}

@article{forrester2007multi,
  title={Multi-fidelity optimization via surrogate modelling},
  author={Forrester, Alexander IJ and S{\'o}bester, Andr{\'a}s and Keane, Andy J},
  journal={Proceedings of the royal society a: mathematical, physical and engineering sciences},
  volume={463},
  number={2088},
  pages={3251--3269},
  year={2007},
  publisher={The Royal Society London},
  doi={10.1098/rspa.2007.1900}
}

@article{zhou2017variable,
  title={A variable fidelity information fusion method based on radial basis function},
  author={Zhou, Qi and Jiang, Ping and Shao, Xinyu and Hu, Jiexiang and Cao, Longchao and Wan, Li},
  journal={Advanced Engineering Informatics},
  volume={32},
  pages={26--39},
  year={2017},
  publisher={Elsevier},
  doi = {10.1016/j.aei.2016.12.005},
  issn = {1474-0346}
}

@article{song2019radial,
  title={A radial basis function-based multi-fidelity surrogate model: exploring correlation between high-fidelity and low-fidelity models},
  author={Song, Xueguan and Lv, Liye and Sun, Wei and Zhang, Jie},
  journal={Structural and Multidisciplinary Optimization},
  volume={60},
  pages={965--981},
  year={2019},
  publisher={Springer},
  doi={10.1007/s00158-019-02248-0}
}

@article{meng2021multi,
  title={Multi-fidelity {B}ayesian neural networks: Algorithms and applications},
  author={Meng, Xuhui and Babaee, Hessam and Karniadakis, George Em},
  journal={Journal of Computational Physics},
  volume={438},
  pages={110361},
  year={2021},
  doi = {10.1016/j.jcp.2021.110361},
  issn = {0021-9991},
  publisher={Elsevier}
}

@article{chakraborty2021transfer,
  title={Transfer learning based multi-fidelity physics informed deep neural network},
  author={Chakraborty, Souvik},
  journal={Journal of Computational Physics},
  volume={426},
  pages={109942},
  year={2021},
  publisher={Elsevier},
  issn={0021-9991},
  doi={10.1016/j.jcp.2020.109942}
}

@article{Geneva2020,
title = {Multi-fidelity generative deep learning turbulent flows},
journal = {Foundations of Data Science},
volume = {2},number = {4},pages = {391-428},
year = {2020},
issn = {},
doi = {10.3934/fods.2020019},
author = {Nicholas Geneva and Nicholas Zabaras},
}

@inproceedings{perron2020development,
  title={Development of a multi-fidelity reduced-order model based on manifold alignment},
  author={Perron, Christian and Rajaram, Dushhyanth and Mavris, Dimitri},
  booktitle={AIAA Aviation 2020 Forum},
  pages={3124},
  year={2020},
  doi={10.2514/6.2020-3124}
}

@article{li2020multi,
  title={Multi-fidelity {B}ayesian optimization via deep neural networks},
  author={Li, Shibo and Xing, Wei and Kirby, Robert and Zhe, Shandian},
  journal={Advances in Neural Information Processing Systems},
  volume={33},
  pages={8521--8531},
  year={2020},
  editor = {H. Larochelle and M. Ranzato and R. Hadsell and M.F. Balcan and H. Lin},
  publisher = {Curran Associates, Inc.}
}

@article{MENG2020109020,
	author = {Xuhui Meng and George Em Karniadakis},
	doi = {10.1016/j.jcp.2019.109020},
	issn = {0021-9991},
	journal = {Journal of Computational Physics},
	keywords = {Multi-fidelity, Physics-informed neural networks, Adversarial data, Porous media, Reactive transport},
	pages = {109020},
	title = {A composite neural network that learns from multi-fidelity data: Application to function approximation and inverse {PDE} problems},
	volume = {401},
	year = {2020}
 }

@article{RAISSI2019686,
	author = {M. Raissi and P. Perdikaris and G.E. Karniadakis},
	doi = {10.1016/j.jcp.2018.10.045},
	issn = {0021-9991},
	journal = {Journal of Computational Physics},
	keywords = {Data-driven scientific computing, Machine learning, Predictive modeling, Runge--Kutta methods, Nonlinear dynamics},
	pages = {686-707},
	title = {Physics-informed neural networks: A deep learning framework for solving forward and inverse problems involving nonlinear partial differential equations},
	volume = {378},
	year = {2019},
	}

@article{PENWARDEN2022110844,
	author = {Michael Penwarden and Shandian Zhe and Akil Narayan and Robert M. Kirby},
	doi = {10.1016/j.jcp.2021.110844},
	issn = {0021-9991},
	journal = {Journal of Computational Physics},
	keywords = {Physics-Informed Neural Networks (PINNs), Multifidelity, Surrogate modeling, Reduced-order modeling},
	pages = {110844},
	title = {Multifidelity modeling for Physics-Informed Neural Networks ({PINNs})},
	volume = {451},
	year = {2022}}

@article{park2017remarks,
  title={Remarks on multi-fidelity surrogates},
  author={Park, Chanyoung and Haftka, Raphael T and Kim, Nam H},
  journal={Structural and Multidisciplinary Optimization},
  volume={55},
  pages={1029--1050},
  year={2017},
  publisher={Springer},
  doi={10.1007/s00158-016-1550-y}
}

@article{kennedy2000predicting,
ISSN = {00063444},
 author={Kennedy, Marc C and O'Hagan, Anthony},
 journal = {Biometrika},
 number = {1},
 pages = {1--13},
publisher = {[Oxford University Press, Biometrika Trust]},
 title = {Predicting the Output from a Complex Computer Code When Fast Approximations Are Available},
 volume = {87},
 year = {2000},
 doi={10.1093/biomet/87.1.1}
}

@article{de2020transfer,
  title={On transfer learning of neural networks using bi-fidelity data for uncertainty propagation},
  author={De, Subhayan and Britton, Jolene and Reynolds, Matthew and Skinner, Ryan and Jansen, Kenneth and Doostan, Alireza},
  journal={International Journal for Uncertainty Quantification},
  volume={10},
  number={6},
  year={2020},
  pages={543-573},
  publisher={Begel House Inc.},
  doi={10.1615/Int.J.UncertaintyQuantification.2020033267}
}

@article{narayan2014stochastic,
  title={A stochastic collocation algorithm with multifidelity models},
  author={Narayan, Akil and Gittelson, Claude and Xiu, Dongbin},
  journal={SIAM Journal on Scientific Computing},
  volume={36},
  number={2},
  pages={A495--A521},
  year={2014},
  publisher={SIAM},
  doi={10.1137/130929461}
}

@article{keshavarzzadeh2019parametric,
  title={Parametric topology optimization with multiresolution finite element models},
  author={Keshavarzzadeh, Vahid and Kirby, Robert M and Narayan, Akil},
  journal={International Journal for Numerical Methods in Engineering},
  volume={119},
  number={7},
  pages={567--589},
  year={2019},
  publisher={Wiley Online Library},
  doi={10.1002/nme.6063}
}

@article{pinti2022multi,
  title={Multi-fidelity approach to predicting multi-rotor aerodynamic interactions},
  author={Pinti, Orazio and Oberai, Assad A and Healy, Richard and Niemiec, Robert J and Gandhi, Farhan},
  journal={AIAA Journal},
  volume={60},
  number={6},
  pages={3894--3908},
  year={2022},
  publisher={American Institute of Aeronautics and Astronautics},
  doi={10.2514/1.J060227}
}

@article{giles_2015, title={Multilevel {M}onte {C}arlo methods}, volume={24}, DOI={10.1017/S096249291500001X}, journal={Acta Numerica}, publisher={Cambridge University Press}, author={Giles, Michael B.}, year={2015}, pages={259–328}}

@inproceedings{heinrich2001multilevel,
  title={Multilevel {M}onte {C}arlo methods},
  author={Heinrich, Stefan},
  booktitle={Large-Scale Scientific Computing: Third International Conference, LSSC 2001 Sozopol, Bulgaria, June 6--10, 2001 Revised Papers 3},
  pages={58--67},
  year={2001},
  organization={Springer}
}

@article{allaire2014mathematical,
  title={A mathematical and computational framework for multifidelity design and analysis with computer models},
  author={Allaire, Douglas and Willcox, Karen},
  journal={International Journal for Uncertainty Quantification},
  volume={4},
  number={1},
  year={2014},
  publisher={Begel House Inc.}
}

@article{goel2007ensemble,
  title={Ensemble of surrogates},
  author={Goel, Tushar and Haftka, Raphael T and Shyy, Wei and Queipo, Nestor V},
  journal={Structural and Multidisciplinary Optimization},
  volume={33},
  pages={199--216},
  year={2007},
  publisher={Springer}
}

@article{gramacy2016modeling,
  title={Modeling an augmented {L}agrangian for blackbox constrained optimization},
  author={Gramacy, Robert B and Gray, Genetha A and Le Digabel, S{\'e}bastien and Lee, Herbert KH and Ranjan, Pritam and Wells, Garth and Wild, Stefan M},
  journal={Technometrics},
  volume={58},
  number={1},
  pages={1--11},
  year={2016},
  publisher={Taylor \& Francis}
}

@article{gramacy2008bayesian,
  title={Bayesian treed {G}aussian process models with an application to computer modeling},
  author={Gramacy, Robert B and Lee, Herbert K H},
  journal={Journal of the American Statistical Association},
  volume={103},
  number={483},
  pages={1119--1130},
  year={2008},
  publisher={Taylor \& Francis}
}

@inproceedings{geraci2017multifidelity,
  title={A multifidelity multilevel {M}onte {C}arlo method for uncertainty propagation in aerospace applications},
  author={Geraci, Gianluca and Eldred, Michael S and Iaccarino, Gianluca},
  booktitle={19th AIAA non-deterministic approaches conference},
  pages={1951},
  year={2017}
}

@article{pinti2023graph,
  title={Graph {L}aplacian-based spectral multi-fidelity modeling},
  author={Pinti, Orazio and Oberai, Assad A},
  journal={Scientific Reports},
  volume={13},
  number={1},
  pages={16618},
  year={2023},
  publisher={Nature Publishing Group UK London}
}

@article{CHENG2024116793,
title = {Bi-fidelity variational auto-encoder for uncertainty quantification},
journal = {Computer Methods in Applied Mechanics and Engineering},
volume = {421},
pages = {116793},
year = {2024},
issn = {0045-7825},
doi = {10.1016/j.cma.2024.116793},
author = {Nuojin Cheng and Osman Asif Malik and Subhayan De and Stephen Becker and Alireza Doostan}
}

@article{Huang-KL,
author = {Huang, S. P. and Quek, S. T. and Phoon, K. K.},
title = {Convergence study of the truncated {K}arhunen–{L}oeve expansion for simulation of stochastic processes},
journal = {International Journal for Numerical Methods in Engineering},
volume = {52},
number = {9},
pages = {1029-1043},
doi = {10.1002/nme.255},
year = {2001}
}

@article{Stefanou2007AssessmentOS,
  title={Assessment of spectral representation and {K}arhunen–{L}o{\`e}ve expansion methods for the simulation of {G}aussian stochastic fields},
  author={George Stefanou and Manolis Papadrakakis},
  journal={Computer Methods in Applied Mechanics and Engineering},
  year={2007},
  volume={196},
  pages={2465-2477}
}

@article{McInnes2018, 
doi = {10.21105/joss.00861}, 
year = {2018}, 
publisher = {The Open Journal},
volume = {3},
number = {29},
pages = {861}, 
author = {Leland McInnes and John Healy and Nathaniel Saul and Lukas Großberger}, 
title = {{UMAP: Uniform Manifold Approximation and Projection}}, 
journal = {Journal of Open Source Software}
}

@book{Saad2003,
author = {Saad, Yousef},
title = {Iterative Methods for Sparse Linear Systems},
publisher = {Society for Industrial and Applied Mathematics},
year = {2003},
doi = {10.1137/1.9780898718003},
address = {},
edition   = {Second}
}

@Article{vanGennip14,
author={van Gennip, Yves
and Guillen, Nestor
and Osting, Braxton
and Bertozzi, Andrea L.},
title={Mean Curvature, Threshold Dynamics, and Phase Field Theory on Finite Graphs},
journal={Milan Journal of Mathematics},
year={2014},
month={Jun},
day={01},
volume={82},
number={1},
pages={3-65},
abstract={In the continuum, close connections exist between mean curvature flow, the Allen-Cahn (AC) partial differential equation, and the Merriman-Bence-Osher (MBO) threshold dynamics scheme. Graph analogues of these processes have recently seen a rise in popularity as relaxations of NP-complete combinatorial problems, which demands deeper theoretical underpinnings of the graph processes. The aim of this paper is to introduce these graph processes in the light of their continuum counterparts, provide some background, prove the first results connecting them, illustrate these processes with examples and identify open questions for future study.},
issn={1424-9294},
doi={10.1007/s00032-014-0216-8}
}

@article{Hoffmann2019SpectralAO,
  title={Spectral analysis of weighted Laplacians arising in data clustering},
  author={Franca Hoffmann and Bamdad Hosseini and Assad A. Oberai and Andrew M. Stuart},
  journal={Applied and Computational Harmonic Analysis},
  year={2019},
  url={https://api.semanticscholar.org/CorpusID:202577644}
}

@article{MINRES,
author = {Paige, C. C. and Saunders, M. A.},
title = {Solution of Sparse Indefinite Systems of Linear Equations},
journal = {SIAM Journal on Numerical Analysis},
volume = {12},
number = {4},
pages = {617-629},
year = {1975},
doi = {10.1137/0712047}
}

@article{Bertozzi2018,
author = {Bertozzi, Andrea L. and Luo, Xiyang and Stuart, Andrew M. and Zygalakis, Konstantinos C.},
title = {Uncertainty Quantification in Graph-Based Classification of High Dimensional Data},
journal = {SIAM/ASA Journal on Uncertainty Quantification},
volume = {6},
number = {2},
pages = {568-595},
year = {2018},
doi = {10.1137/17M1134214},

URL = { 
    
        https://doi.org/10.1137/17M1134214
    
    

},
eprint = { 
    
        https://doi.org/10.1137/17M1134214
    
    

}
}

@article{MengIPOL2017,
    title   = {{Hyperspectral Image Classification Using Graph Clustering Methods}},
    author  = {Meng, Zhaoyi and Merkurjev, Ekaterina and Koniges, Alice and Bertozzi, Andrea L.},
    journal = {{Image Processing On Line}},
    volume  = {7},
    pages   = {218--245},
    year    = {2017},
    note    = {\url{https://doi.org/10.5201/ipol.2017.204}}
}

@InProceedings{pmlr-v40-Dasarathy15,
  title = 	 {S2: An Efficient Graph Based Active Learning Algorithm with Application to Nonparametric Classification},
  author = 	 {Dasarathy, Gautam and Nowak, Robert and Zhu, Xiaojin},
  booktitle = 	 {Proceedings of The 28th Conference on Learning Theory},
  pages = 	 {503--522},
  year = 	 {2015},
  editor = 	 {Grünwald, Peter and Hazan, Elad and Kale, Satyen},
  volume = 	 {40},
  series = 	 {Proceedings of Machine Learning Research},
  address = 	 {Paris, France},
  month = 	 {03--06 Jul},
  publisher =    {PMLR},
  pdf = 	 {http://proceedings.mlr.press/v40/Dasarathy15.pdf},
  url = 	 {https://proceedings.mlr.press/v40/Dasarathy15.html},
  abstract = 	 {This paper investigates the problem of active learning for binary label prediction on a graph. We introduce a simple and label-efficient algorithm called S^2 for this task. At each step, S^2 selects the vertex to be labeled based on the structure of the graph and all previously gathered labels. Specifically, S^2 queries for the label of the vertex that bisects the \em shortest shortest path between any pair of oppositely labeled vertices. We present a theoretical estimate of the number of queries S^2 needs in terms of a novel  parametrization of the complexity of binary functions on graphs. We also present experimental results demonstrating the performance of S^2 on both real and synthetic data. While other graph-based active learning algorithms have shown promise in practice, our algorithm is the first with both good performance and theoretical guarantees. Finally, we demonstrate the implications of the S^2 algorithm to the theory of nonparametric active learning. In particular, we show that S^2 achieves near minimax optimal excess risk for an important class of nonparametric classification problems.}
}

@misc{bhusal2024maladymulticlassactivelearning,
      title={{MALADY}: Multiclass Active Learning with Auction Dynamics on Graphs}, 
      author={Gokul Bhusal and Kevin Miller and Ekaterina Merkurjev},
      year={2024},
      eprint={2409.09475},
      archivePrefix={arXiv},
      primaryClass={cs.LG},
}

@article{MillerCalder23,
author = {Miller, Kevin and Calder, Jeff},
title = {Poisson Reweighted {L}aplacian Uncertainty Sampling for Graph-Based Active Learning},
journal = {SIAM Journal on Mathematics of Data Science},
volume = {5},
number = {4},
pages = {1160-1190},
year = {2023},
doi = {10.1137/22M1531981},
    abstract = { Abstract. We show that uncertainty sampling is sufficient to achieve exploration versus exploitation in graph-based active learning, as long as the measure of uncertainty properly aligns with the underlying model and the model properly reflects uncertainty in unexplored regions. In particular, we use a recently developed algorithm, Poisson ReWeighted Laplace Learning (PWLL), for the classifier and we introduce an acquisition function designed to measure uncertainty in this graph-based classifier that identifies unexplored regions of the data. We introduce a diagonal perturbation in PWLL which produces exponential localization of solutions, and controls the exploration versus exploitation tradeoff in active learning. We use the well-posed continuum limit of PWLL to rigorously analyze our method and present experimental results on a number of graph-based image classification problems. }
}

@InProceedings{pmlr-v22-ji12,
  title = 	 {A Variance Minimization Criterion to Active Learning on Graphs},
  author = 	 {Ji, Ming and Han, Jiawei},
  booktitle = 	 {Proceedings of the Fifteenth International Conference on Artificial Intelligence and Statistics},
  pages = 	 {556--564},
  year = 	 {2012},
  editor = 	 {Lawrence, Neil D. and Girolami, Mark},
  volume = 	 {22},
  series = 	 {Proceedings of Machine Learning Research},
  address = 	 {La Palma, Canary Islands},
  month = 	 {21--23 Apr},
  publisher =    {PMLR},
  pdf = 	 {http://proceedings.mlr.press/v22/ji12/ji12.pdf},
  url = 	 {https://proceedings.mlr.press/v22/ji12.html},
  abstract = 	 {We consider the problem of active learning over the vertices in a graph, without feature representation. Our study is based on the common graph smoothness assumption, which is formulated in a Gaussian random field model. We analyze the probability distribution over the unlabeled vertices conditioned on the label information, which is a multivariate normal with the mean being the harmonic solution over the field. Then we select the nodes to label such that the total variance of the distribution on the unlabeled data, as well as the expected prediction error, is minimized. In this way, the classifier we obtain is theoretically more robust. Compared with existing methods, our algorithm has the advantage of selecting data in a batch offline mode with solid theoretical support. We show improved performance over existing label selection criteria on several real world data sets.}
}

@article{soize2017nonparametric,
  title={A nonparametric probabilistic approach for quantifying uncertainties in low-dimensional and high-dimensional nonlinear models},
  author={Soize, Christian and Farhat, Charbel},
  journal={International Journal for Numerical methods in engineering},
  volume={109},
  number={6},
  pages={837--888},
  year={2017},
  publisher={Wiley Online Library}
}

@article{xiao2017random,
  title={A random matrix approach for quantifying model-form uncertainties in turbulence modeling},
  author={Xiao, Heng and Wang, Jian-Xun and Ghanem, Roger G},
  journal={Computer Methods in Applied Mechanics and Engineering},
  volume={313},
  pages={941--965},
  year={2017},
  publisher={Elsevier}
}

@article{geneva2019quantifying,
  title={Quantifying model form uncertainty in Reynolds-averaged turbulence models with Bayesian deep neural networks},
  author={Geneva, Nicholas and Zabaras, Nicholas},
  journal={Journal of Computational Physics},
  volume={383},
  pages={125--147},
  year={2019},
  publisher={Elsevier}
}

@article{najm2009uncertainty,
  title={Uncertainty quantification in chemical systems},
  author={Najm, Habib N and Debusschere, Bert J and Marzouk, Youssef M and Widmer, Steve and Le Ma{\^\i}tre, OP},
  journal={International journal for numerical methods in engineering},
  volume={80},
  number={6-7},
  pages={789--814},
  year={2009},
  publisher={Wiley Online Library}
}

@article{teichert2019machine,
  title={Machine learning materials physics: Surrogate optimization and multi-fidelity algorithms predict precipitate morphology in an alternative to phase field dynamics},
  author={Teichert, Gregory H and Garikipati, Krishna},
  journal={Computer Methods in Applied Mechanics and Engineering},
  volume={344},
  pages={666--693},
  year={2019},
  publisher={Elsevier}
}

@article{soize2019probabilistic,
  title={Probabilistic learning for modeling and quantifying model-form uncertainties in nonlinear computational mechanics},
  author={Soize, Christian and Farhat, Charbel},
  journal={International Journal for Numerical methods in engineering},
  volume={117},
  number={7},
  pages={819--843},
  year={2019},
  publisher={Wiley Online Library}
}





\end{document}